\documentclass{article} 
\usepackage{iclr2023_conference,times}


\usepackage{hyperref}
\usepackage{url}

\usepackage[utf8]{inputenc} 
\usepackage[T1]{fontenc}    
\usepackage{hyperref}       
\usepackage{url}            
\usepackage{booktabs}       
\usepackage{amsfonts}       
\usepackage{nicefrac}       
\usepackage{microtype}      

\usepackage{amsmath,amssymb,amsthm}
\usepackage{physics}
\usepackage{changepage}
\usepackage{graphicx}
\usepackage{subcaption}
\usepackage[ruled,vlined]{algorithm2e}
\usepackage{wasysym}
\usepackage{mathtools}
\usepackage{wrapfig}
\usepackage{multirow}
\usepackage{wasysym}
\usepackage{enumitem}
\usepackage{makecell}
\usepackage{caption}

\DeclareMathOperator*{\argsup}{arg\,sup}
\DeclareMathOperator*{\arginf}{arg\,inf}

\DeclareMathOperator{\Var}{Var}
\DeclareMathOperator{\CVar}{CVar}
\DeclareMathOperator{\Dist}{Dist}
\DeclareMathOperator{\Cost}{Cost}
\DeclareMathOperator{\Proj}{Proj}

\newtheorem{theorem}{Theorem}
\newtheorem{lemma}{Lemma}

\newtheorem{proposition}{Proposition}
\newtheorem{corollary}{Corollary}

\everypar{\looseness=-1}

\allowdisplaybreaks

\iclrfinalcopy

\title{Kernel  Neural Optimal Transport}


\author{Alexander Korotin \\
Skolkovo Institute of Science and Technology\\
Artificial Intelligence Research Institute\\
Moscow, Russia \\
\texttt{a.korotin@skoltech.ru} \\
\And
Daniil Selikhanovych \\
Skolkovo Institute of Science and Technology \\
Moscow, Russia \\
\texttt{selikhanovychdaniil@gmail.com} \\
\AND
Evgeny Burnaev \\
Skolkovo Institute of Science and Technology\\
Artificial Intelligence Research Institute\\
Moscow, Russia \\
\texttt{e.burnaev@skoltech.ru}
}

\begin{document}

\maketitle

\vspace{-6mm}\begin{abstract}
We study the Neural Optimal Transport (NOT) algorithm which uses the general optimal transport formulation and learns stochastic transport plans. We show that NOT with the weak quadratic cost may learn fake plans which are not optimal. To resolve this issue, we introduce kernel weak quadratic costs. We show that they provide improved theoretical guarantees and practical performance. We test NOT with kernel costs on the unpaired image-to-image translation task.
\end{abstract}

\begin{figure}[!h]\vspace{-2mm}
\hspace{-9mm}\begin{subfigure}[b]{0.54\linewidth}
\centering
\includegraphics[width=0.995\linewidth]{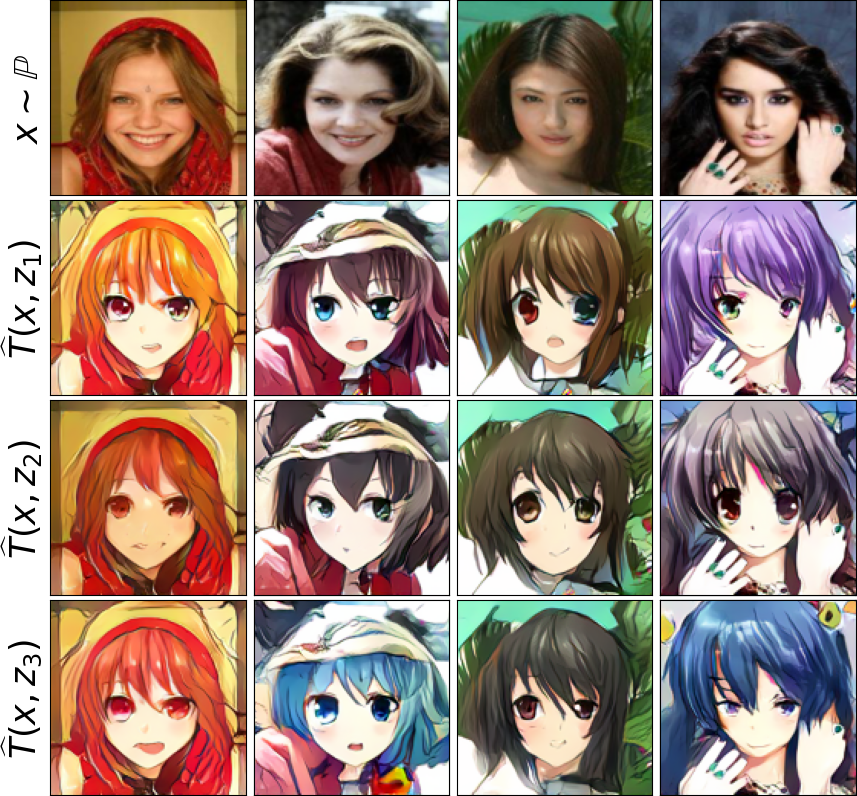}
\vspace{-3mm}\caption{\centering Celeba (female) $\rightarrow$ anime, $128\times 128$.}
\label{fig:celeba-anime-128}
\end{subfigure}\hspace{2mm}
\begin{subfigure}[b]{0.54\linewidth}
\centering
\includegraphics[width=0.995\linewidth]{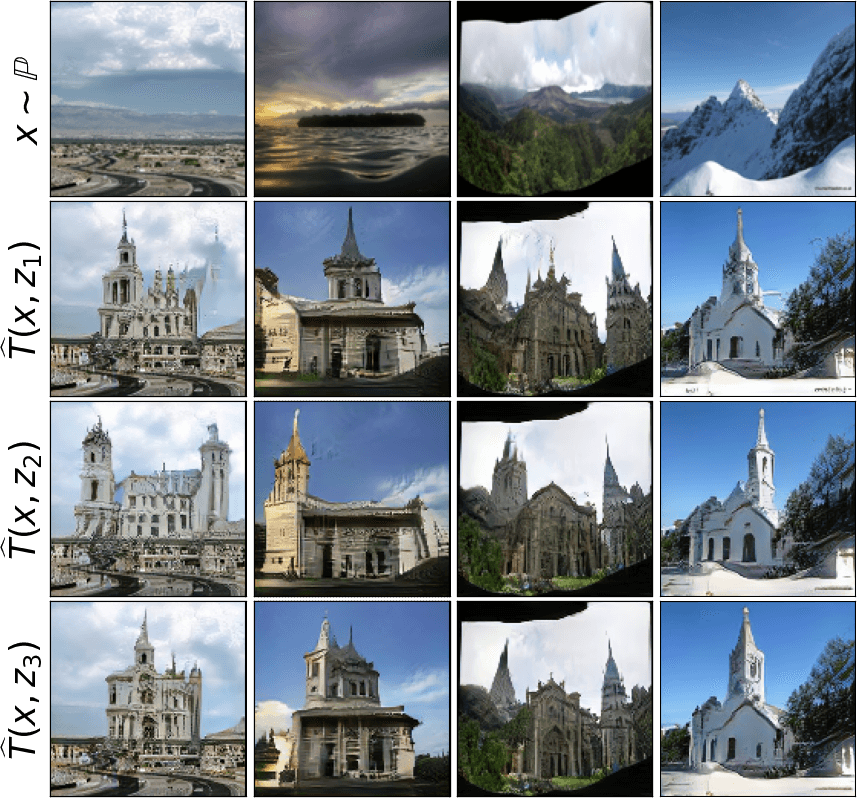}
\vspace{-3mm}\caption{\centering Outdoor $\rightarrow$ church, $128\times 128$.}
\label{fig:outdoor-church-128}
\end{subfigure}

\vspace{-2mm}\caption{Unpaired image-to-image translation (one-to-many) by Kernel Neural Optimal Transport.}
\label{fig:anime-church}\vspace{-3mm}
\end{figure}

\vspace{-2mm}\section{Introduction}
\label{sec-intro}

\vspace{-2.5mm}Neural methods have become widespread in Optimal Transport (OT) starting from the introduction of the large-scale OT \citep{genevay2016stochastic,seguy2018large} and the Wasserstein Generative Adversarial Networks \citep{arjovsky2017wasserstein} (WGANs). Most existing methods employ the \textbf{OT cost} as the loss function to update the generator in GANs \citep{gulrajani2017improved,sanjabi2018convergence,petzka2018on}. In contrast to these approaches, \citep{korotin2023neural,rout2022generative,daniels2021score,fan2022scalable,korotin2023neural} have recently proposed scalable neural methods to compute the \textbf{OT plan} (or map) and use it directly as the generative model.\clearpage

\vspace{-3mm}In this paper, we focus on the Neural Optimal Transport (NOT) algorithm \citep{korotin2023neural}. It is capable of learning optimal deterministic (one-to-one) and stochastic (one-to-many) maps and plans for quite general \textbf{strong} and \textbf{weak} \citep{gozlan2017kantorovich,gozlan2020mixture,backhoff2019existence} transport costs. In practice, the authors of NOT test it on the unpaired image-to-image translation task \citep[\wasyparagraph 5]{korotin2023neural} with the \textit{weak quadratic cost} \citep[\wasyparagraph 5.2]{alibert2019new}.

\vspace{-1mm}\textbf{Contributions.} We conduct the theoretical and empirical analysis of the saddle point optimization problem of NOT algorithm for the weak quadratic cost. We show that it may have a lot of \textit{fake solutions} which do not provide an OT plan. We show that NOT indeed might recover them (\wasyparagraph\ref{sec-issues-w2}). We propose \textit{weak kernel quadratic costs} and prove that they solve this issue (\wasyparagraph\ref{sec-kernel-w2}). Practically, we show how NOT with kernel costs performs on the \textit{unpaired image-to-image translation} task (\wasyparagraph\ref{sec-evaluation}).

\vspace{-1mm}\textbf{Notations.} We use $\mathcal{X},\mathcal{Y},\mathcal{Z}$ to denote Polish spaces and $\mathcal{P}(\mathcal{X}), \mathcal{P}(\mathcal{Y}),\mathcal{P}(\mathcal{Z})$ to denote the respective sets of probability distributions on them. For a distribution $\mathbb{P}$, we denote its mean and covariance matrix by $m_{\mathbb{P}}$ and $\Sigma_{\mathbb{P}}$, respectively. We denote the set of probability distributions on $\mathcal{X} \times \mathcal{Y}$ with marginals $\mathbb{P}$ and $\mathbb{Q}$ by $\Pi(\mathbb{P},\mathbb{Q})$.
For a measurable map $T:  \mathcal{X}\times\mathcal{Z}\rightarrow\mathcal{Y}$ (or $T_{x}:\mathcal{Z}\rightarrow\mathcal{Y}$), we denote the associated push-forward operator by $T\sharp$ (or $T_{x}\sharp$). We use $\mathcal{H}$ to denote a Hilbert space (feature space). Its inner product is $\langle\cdot,\cdot\rangle_{\mathcal{H}}$, and $\|\cdot\|_{\mathcal{H}}$ is the corresponding norm. For a function $u:\mathcal{Y}\rightarrow\mathcal{H}$ (feature map), we denote the respective positive definite symmetric (PDS) kernel by $k(y,y')\stackrel{def}{=}\langle u(y),u(y')\rangle_{\mathcal{H}}$. A PDS kernel $k:\mathcal{Y}\times\mathcal{Y}\rightarrow\mathbb{R}$ is called characteristic if the kernel mean embedding $\mathcal{P}(\mathcal{Y})\ni\mu\mapsto u(\mu)\stackrel{def}{=}\int_{\mathcal{X}}u(y)d\mu(y)\in\mathcal{H}$ is a one-to-one mapping. For a function $\phi: \mathbb{R}^D\rightarrow \mathbb{R}\cup\{\infty\}$, we denote its convex conjugate by $\overline{\phi}(y) \stackrel{def}{=} \sup_{x\in\mathbb{R}^{D}}\{\langle x,y \rangle- \phi\left(x\right)\}$. 

\vspace{-3mm}\section{Background on Optimal Transport}
\label{sec-background-ot}
\vspace{-2mm}\textbf{Strong OT formulation}. For distributions $\mathbb{P}\in\mathcal{P}(\mathcal{X})$, $\mathbb{Q}\in \mathcal{P}(\mathcal{Y})$ and a cost function $c:\mathcal{X}\times\mathcal{Y}\rightarrow\mathbb{R}$, Kantorovich's \citep{villani2008optimal} primal formulation of the optimal transport cost (Figure \ref{fig:strong-ot}) is
\begin{equation}\text{Cost}(\mathbb{P},\mathbb{Q})\stackrel{\text{def}}{=}\inf_{\pi\in\Pi(\mathbb{P},\mathbb{Q})}\int_{\mathcal{X}\times\mathcal{Y}}c(x,y)d\pi(x,y),
\label{ot-primal-form}
\end{equation}
\vspace{-0.5mm}where the minimum is taken over all transport plans $\pi$, i.e., distributions on $\mathcal{X}\times\mathcal{Y}$ whose marginals are $\mathbb{P}$ and $\mathbb{Q}$. The optimal $\pi^{*}\in\Pi(\mathbb{P},\mathbb{Q})$ is called the optimal transport \textit{plan}. A popular example of an OT cost for $\mathcal{X}=\mathcal{Y}=\mathbb{R}^{D}$ is the Wasserstein-2 ($\mathbb{W}_{2}^{2}$), i.e., formulation \eqref{ot-primal-form} for $c(x,y)=\frac{1}{2}\|x-y\|^{2}$.
\begin{figure}[!h]
\vspace{-2mm}
\begin{subfigure}[b]{0.48\linewidth}
\centering
\includegraphics[width=0.9\linewidth]{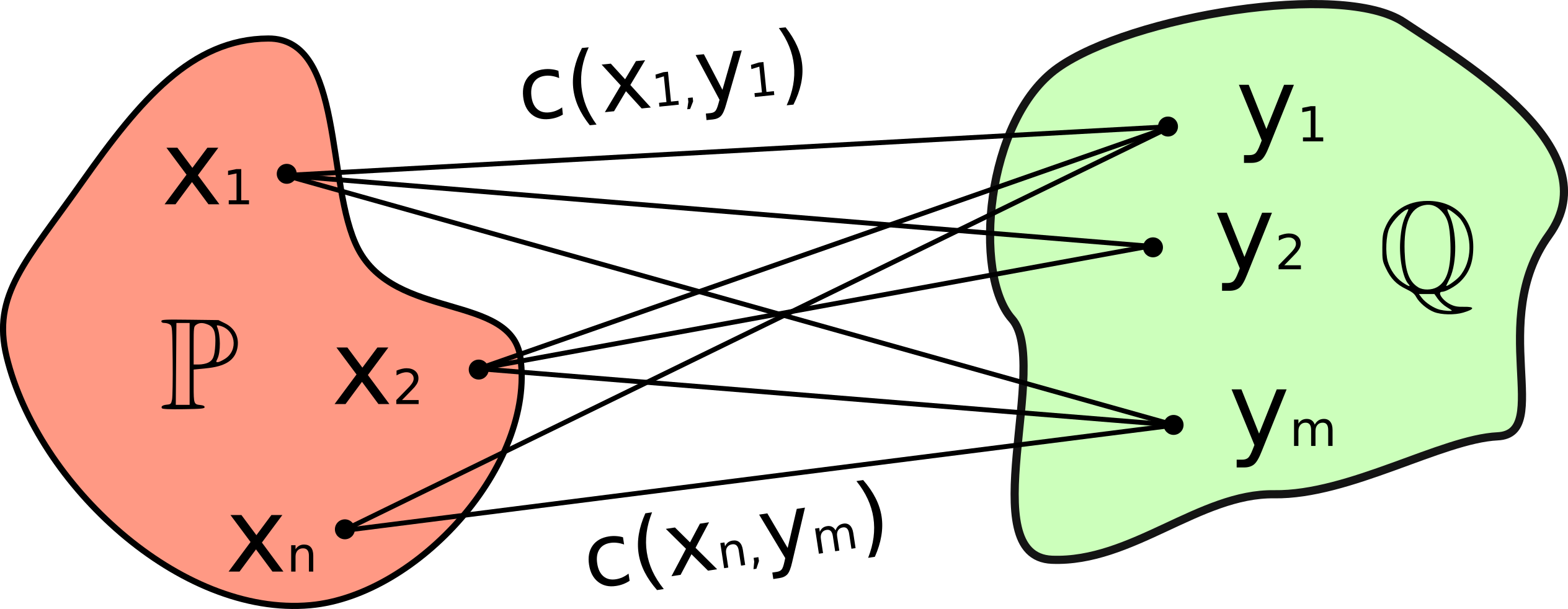}
\caption{\centering Strong OT formulation \eqref{ot-primal-form}.}
\label{fig:strong-ot}
\end{subfigure}
\begin{subfigure}[b]{0.48\linewidth}
\centering
\includegraphics[width=0.9\linewidth]{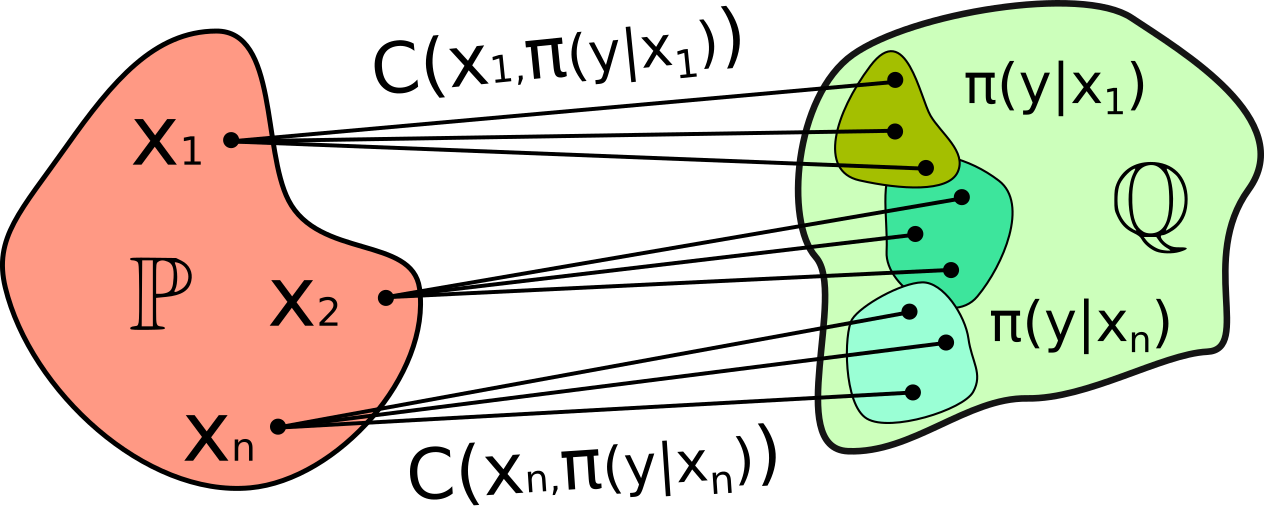}
\caption{\centering Weak OT formulation \eqref{ot-primal-form-weak}.}
\label{fig:weak-ot}
\end{subfigure}

\vspace{-2mm}\caption{Strong (Kantorovich's) and weak \citep{gozlan2017kantorovich} optimal transport formulations.}\vspace{-1mm}
\end{figure}

\vspace{-2mm}\textbf{Weak OT formulation}. Let ${C:\mathcal{X}\times\mathcal{P}(\mathcal{Y})\rightarrow\mathbb{R}}$ be a \textit{weak} cost \citep{gozlan2017kantorovich}, i.e., a function which takes a point $x\in\mathcal{X}$ and a distribution of $y\in\mathcal{Y}$  as inputs. The weak OT cost between $\mathbb{P},\mathbb{Q}$ is
\begin{equation}\text{Cost}(\mathbb{P},\mathbb{Q})\stackrel{\text{def}}{=}\inf_{\pi\in\Pi(\mathbb{P},\mathbb{Q})}\int_{\mathcal{X}}C\big(x,\pi(\cdot|x)\big)d\pi(x),
\label{ot-primal-form-weak}
\end{equation}
where $\pi(\cdot|x)$  denotes the conditional distribution (Figure \ref{fig:weak-ot}). Weak OT \eqref{ot-primal-form-weak} subsumes strong OT formulation \eqref{ot-primal-form} for $C(x,\mu)=\int_{\mathcal{Y}}c(x,y)d\mu(y)$. An example of a weak OT cost for $\mathcal{X}=\mathcal{Y}=\mathbb{R}^{D}$ is the $\gamma$-weak ($\gamma\!\geq\! 0$) Wasserstein-2 ($\mathcal{W}_{2,\gamma}$), i.e., formulation \eqref{ot-primal-form-weak} with the $\gamma$-weak quadratic cost \vspace{-1mm}
\begin{eqnarray}
    C_{2,\gamma}\big(x,\mu\big)\stackrel{def}{=}  \int_{\mathcal{Y}}\frac{1}{2}\|x-y\|^{2}d\mu(y)-\frac{\gamma}{2}\text{Var}(\mu)=\!
    \int_{\mathcal{Y}}\frac{1}{2}\|x-\!\int_{\mathcal{Y}}y\hspace{0.5mm}d\mu(y)\|^{2}d\mu(y)+\frac{1-\gamma}{2}\text{Var}(\mu),
    \label{weak-w2-cost}
\end{eqnarray}
where $\Var(\mu)$ denotes the variance of $\mu$:\vspace{-1.5mm}
\begin{equation}
    \Var(\mu)\stackrel{def}{=}\int_{\mathcal{Y}}\|y-\int_{\mathcal{Y}}y'd\mu(y')\|^{2}d\mu(y)=\frac{1}{2}\int_{\mathcal{Y}\times\mathcal{Y}}\|y-y'\|^{2}d\mu(y)d\mu(y').
    \label{var-def}
    \vspace{-1mm}
\end{equation}
\vspace{-1mm}For $\gamma=0$, the transport cost \eqref{weak-w2-cost} is strong, i.e., $\mathbb{W}_{2}=\mathcal{W}_{2,0}$.

If the cost $C(x,\mu)$ is lower bounded, lower-semicontinuous and convex in the second argument, then we say that it is \textit{appropriate}. For appropriate costs, the minimizer $\pi^{*}$ of \eqref{ot-primal-form-weak} always exists \citep[\wasyparagraph 1.3.1]{backhoff2019existence}. Since $\Var(\mu)$ is concave and non-negative, cost \eqref{weak-w2-cost} is appropriate when $\gamma\in [0,1]$. For appropriate costs the \eqref{ot-primal-form-weak} admits the following dual formulation:\vspace{-1.5mm}
\begin{equation}
    \text{Cost}(\mathbb{P},\mathbb{Q})=\sup_{f}\int_{\mathcal{X}} f^{C}(x)d\mathbb{P}(x)+\int_{\mathcal{Y}} f(y)d\mathbb{Q}(y),
    \label{ot-weak-dual}
\end{equation}

\vspace{-2.5mm}where $f$ are upper-bounded, continuous and  not rapidly decreasing functions, see \citep[\wasyparagraph 1.3.2]{backhoff2019existence}, and ${f^{C}(x)\!\stackrel{\text{def}}{=}\!\inf_{\mu\in\mathcal{P}(\mathcal{Y})}\lbrace C(x,\mu)\!-\!\int_{\mathcal{Y}}f(y)d\mu(y)\rbrace}$ is the weak $C$-transform.


\vspace{-0.5mm}\begin{wrapfigure}{r}{0.45\textwidth}
  \vspace{-6mm}\begin{center}
    \includegraphics[width=\linewidth]{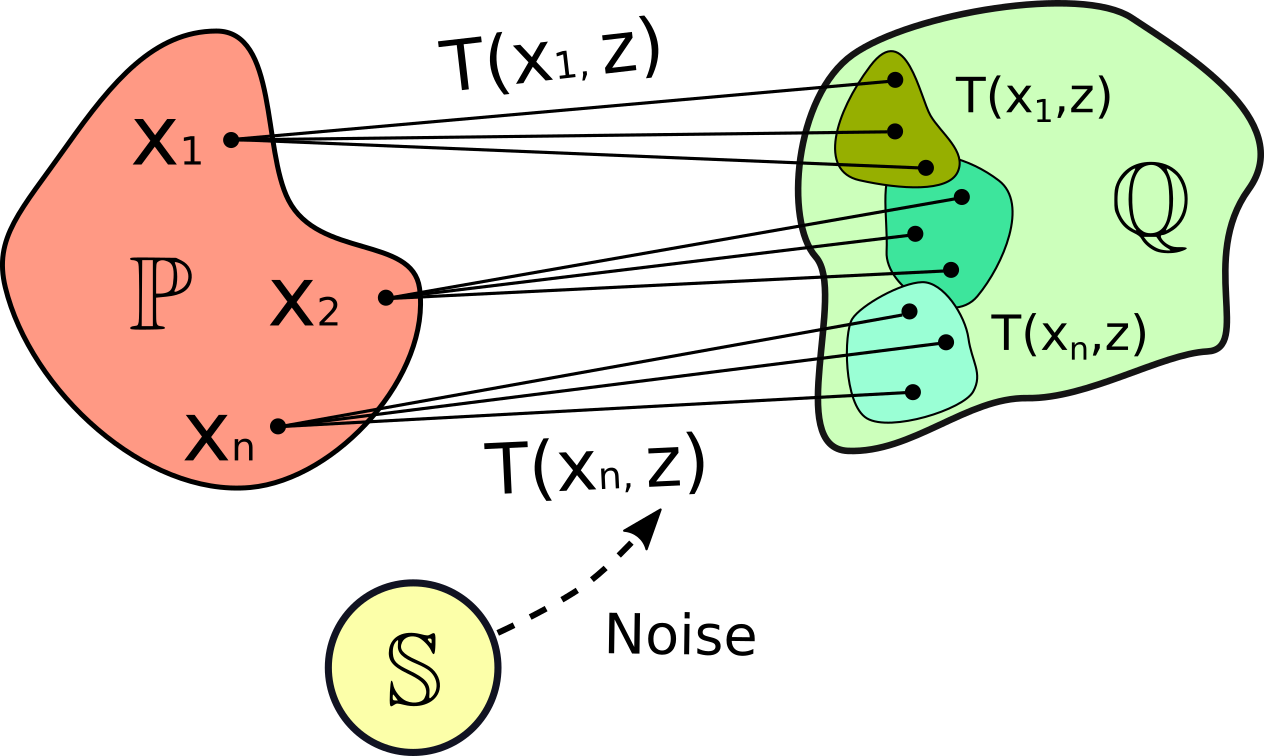}
  \end{center}\vspace{-2.4mm}
  \caption{\centering Implicit representation of a transport plan via function $T:\mathcal{X}\!\times\!\mathcal{Z}\rightarrow\mathcal{Y}$.}
  \label{fig:stochastic-map}
  \vspace{-3mm}
\end{wrapfigure} \textbf{Neural Optimal Transport (NOT)}. In \citep{korotin2023neural}, the authors propose an algorithm to \textit{implicitly} learn an OT plan $\pi^{*}$ with neural nets (Figure \ref{fig:stochastic-map}).
They introduce a (latent) atomless distribution $\mathbb{S}\in\mathcal{P}(\mathcal{Z})$, e.g., $\mathcal{Z}=\mathbb{R}^{Z}$ and $\mathbb{S}=\mathcal{N}(0,I_{Z})$, and search for a function $T^{*}:\mathcal{X}\times\mathcal{Z}\rightarrow\mathcal{Y}$ (\textit{stochastic OT map}) which satisfies $T^{*}_{x}\sharp\mathbb{S}=\pi^{*}(y|x)$ for some OT plan $\pi^{*}$. That is, given $x\in\mathcal{X}$, function $T^{*}$ pushes the distribution $\mathbb{S}$ to the \textit{conditional distribution} $\pi^{*}(y|x)$ of an OT plan $\pi^{*}$. In particular, $T^{*}$ satisfies the distribution-preserving condition $T^{*}\sharp(\mathbb{P}\!\times\!\mathbb{S})=\mathbb{Q}$. To get $T^{*}$, the authors use \eqref{ot-weak-dual} to derive an equivalent dual form:
\begin{equation}
    \text{Cost}(\mathbb{P},\mathbb{Q})=\sup_{f}\inf_{T}\int_{\mathcal{X}} \left( C\big(x,T_{x}\sharp\mathbb{S}\big)-\int_{\mathcal{Z}} f\big(T_{x}(z)\big) d\mathbb{S}(z)\right) d\mathbb{P}(x)+\int_{\mathcal{Y}} f(y)d\mathbb{Q}(y),\hspace{-3mm}
    \label{L-def}
\end{equation}
where the $\inf$ is taken over measurable functions $T:\mathcal{X}\times\mathcal{Z}\rightarrow\mathcal{Y}$. The functional under $\sup_{f}\inf_{T}$ is denoted by $\mathcal{L}(f,T)$. For every optimal potential $f^{*}\in\argsup_{f}\inf_{T}\mathcal{L}(f,T)$, it holds that
\begin{equation}
T^{*}\in\arginf\nolimits_{T}\mathcal{L}(f^{*},T),
\label{T-in-arginf}
\end{equation}
see \citep[Lemma 4]{korotin2023neural}. Consequently, one may extract optimal maps $T^{*}$ from optimal saddle points $(f^{*},T^{*})$ of problem \eqref{L-def}. In practice, saddle point problem \eqref{L-def} can be approached with neural nets $f_{\omega},T_{\theta}$ and the stochastic gradient descent-ascent \citep[Algorithm 1]{korotin2023neural}.

The limitation \citep[\wasyparagraph 6]{korotin2023neural} of NOT algorithm is that $\arginf_{T}$ set of $f^{*}$ in \eqref{T-in-arginf} may contain not only optimal transport maps but other functions as well. As a result, the function $T^{*}$ recovered from a saddle point $(f^{*},T^{*})$ may be not an optimal stochastic map. In this paper, we show that for the $\gamma$-weak quadratic cost \eqref{weak-w2-cost} this may be \underline{\textit{problematic}}: the $\arginf_{T}$ sets might contain \textit{fake} solutions $T^{*}$ (\wasyparagraph\ref{sec-issues-w2}). To resolve the issue, we propose kernel $\gamma$-weak quadratic costs (\wasyparagraph\ref{sec-kernel-w2}).

\vspace{-0.5mm}\textbf{Convex order}. For two probability distributions $\mathbb{P},\mathbb{Q}$ on $\mathbb{R}^{D}$, we write $\mathbb{P}\preceq\mathbb{Q}$ if for all convex functions $h:\mathbb{R}^{D}\rightarrow\mathbb{R}$ it holds $\int h(x)d\mathbb{P}(x)\leq \int h(x)d\mathbb{Q}(x)$.
The relation "$\preceq$" is a partial order, i.e., not all $\mathbb{P},\mathbb{Q}$ are comparable.
If $\mathbb{P}\preceq\mathbb{Q}$, then $m_{\mathbb{P}}=m_{\mathbb{Q}}$ and $\Sigma_{\mathbb{P}}\preceq\Sigma_{\mathbb{Q}}$ \citep[Lemma 3]{scarsini1998multivariate}.
If $\mathbb{P},\mathbb{Q}$ are Gaussians, $\mathbb{P}\preceq\mathbb{Q}$ holds if and only if $m_{\mathbb{P}}=m_{\mathbb{Q}}$ and $\Sigma_{\mathbb{P}}\preceq\Sigma_{\mathbb{Q}}$ \citep[Theorem 4]{scarsini1998multivariate}.
For $\mathbb{P},\mathbb{Q}$, we define the \textit{projection} of $\mathbb{P}$ onto the convex set of distributions which are $\preceq\mathbb{Q}$ as
\begin{equation}
    \Proj_{\preceq \mathbb{Q}}(\mathbb{P})=\arginf\nolimits_{\mathbb{P}'\preceq\mathbb{Q}}\mathbb{W}_{2}^{2}(\mathbb{P},\mathbb{P}').
    \label{conv-proj}
\end{equation}
The infimum is attained uniquely \citep[Proposition 1.1]{gozlan2020mixture}. There exists a  $1$-Lipschitz, continuously diffirentiable and convex function $\phi:\mathbb{R}^{D}\rightarrow\mathbb{R}$ satisfying $\Proj_{\preceq \mathbb{Q}}(\mathbb{P})=\nabla\phi\sharp\mathbb{P}$, see \citep[Theorem 1.2]{gozlan2020mixture} for details.

\vspace{-0.5mm}\begin{wrapfigure}{r}{0.45\textwidth}
  \vspace{-9mm}\begin{center}
    \includegraphics[width=\linewidth]{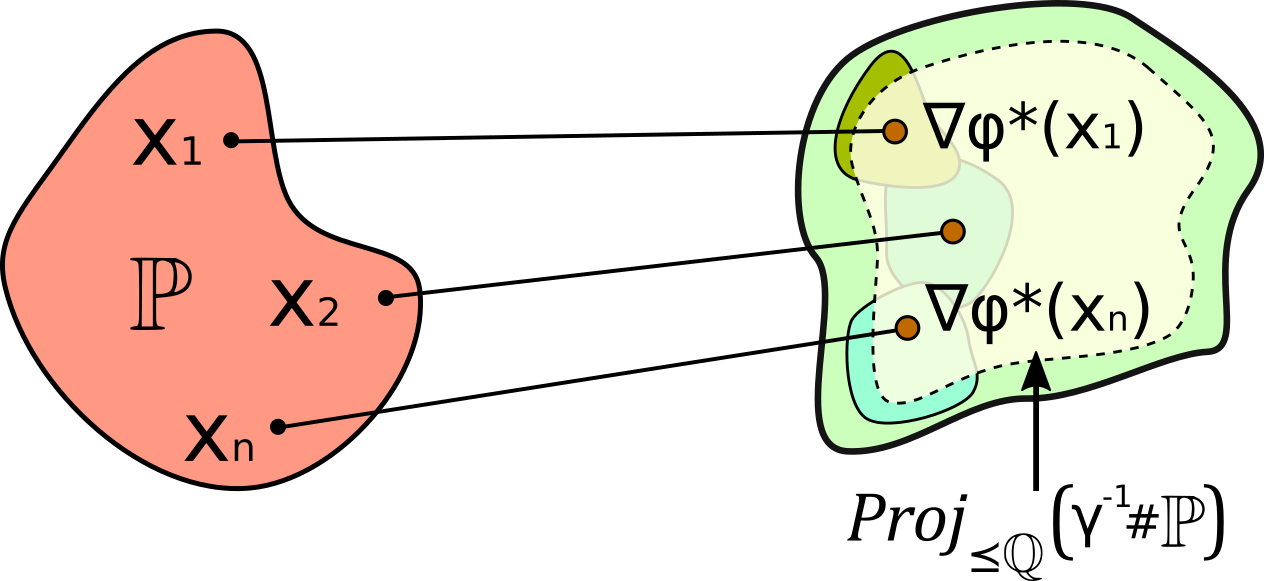}
  \end{center}
  \vspace{-2mm}\caption{\centering Every optimal restricted potential $\phi^{*}$ satisfies $\nabla\phi^{*}\sharp\mathbb{P}=\Proj_{\preceq\mathbb{Q}}(\frac{1}{\gamma}\sharp\mathbb{P})\preceq\mathbb{Q}$.}\label{fig:projection}\vspace{-4mm}
\end{wrapfigure}\textbf{Weak OT with the quadratic cost} ($\gamma>0$). For the $\gamma$-weak  quadratic cost \eqref{weak-w2-cost} on $\mathcal{X}=\mathcal{Y}=\mathbb{R}^{D}$, \citep[\wasyparagraph 5]{gozlan2020mixture}, \citep[\wasyparagraph 5.2]{alibert2019new} prove that there exists a continuously differentiable convex function $\phi^*:\mathbb{R}^{D}\rightarrow\mathbb{R}$  such that $\pi^{*}\in\Pi(\mathbb{P},\mathbb{Q})$ is optimal if and only if $\int_{\mathcal{Y}}y\hspace{0.5mm}d\pi^{*}(y|x)=\nabla\phi^{*}(x)$ holds true $\mathbb{P}$-almost surely. In general, $\pi^{*}$ is \textit{not unique}. We say that $\phi^{*}$ is an \textit{optimal restricted potential}. It may be not unique as a function $\mathbb{R}^{D}\rightarrow\mathbb{R}$, but $\nabla\phi^{*}$ is uniquely defined $\mathbb{P}$-almost everywhere. It holds true that $\nabla\phi^{*}$ is $\frac{1}{\gamma}$-Lipschitz; $\nabla\phi^{*}\sharp\mathbb{P}=\Proj_{\preceq\mathbb{Q}}(\frac{1}{\gamma}\sharp\mathbb{P})$; it is the OT map between $\mathbb{P}$ and $\Proj_{\preceq\mathbb{Q}}(\frac{1}{\gamma}\sharp\mathbb{P})$ for the strong quadratic cost (Figure \ref{fig:projection}). The function $\phi^{*}$ maximizes the dual form alternative to \eqref{ot-weak-dual}:
\begin{eqnarray}
\mathcal{W}_{2,\gamma}^{2}(\mathbb{P},\mathbb{Q})=\hspace{-3mm}\max_{\phi\in \text{\normalfont Cvx}(\frac{1}{\gamma})}\bigg[\int_{\mathcal{X}} \frac{\|x\|^{2}}{2}d\mathbb{P}(x)\!+\!\int_{\mathcal{Y}}\frac{\|y\|^{2}}{2}d\mathbb{Q}(x)\!-\!\bigg(\int_{\mathcal{X}}\phi(x)d\mathbb{P}(x)\!+\!\int_{\mathcal{Y}}\overline{\phi}(y)d\mathbb{Q}(y)\bigg)\bigg],
\label{dual-barycentric}
\end{eqnarray} 
where $\text{Cvx}(\frac{1}{\gamma})$ denotes the set of $\frac{1}{\gamma}$-smooth convex functions $\phi:\mathbb{R}^{D}\rightarrow\mathbb{R}.$
Duality formula \eqref{dual-barycentric} appears in \citep[\wasyparagraph 5.2]{alibert2019new}, \citep[Theorem 1.2 \& \wasyparagraph 5]{gozlan2020mixture} but with different parametrization. In Appendix \ref{sec-restricted-duality}, for completeness of the exposition, we \underline{derive} \eqref{dual-barycentric} from the results of \citep{gozlan2020mixture} by the change of variables.

\vspace{-2mm}\section{Solving Issues of Neural Optimal Transport}

\vspace{-2mm}In what follows, we consider $\mathcal{X}=\mathcal{Y}\subset\mathbb{R}^{D}$. In \wasyparagraph\ref{sec-issues-w2}, we theoretically derive that $\arginf_{T}$ sets \eqref{T-in-arginf} for the $\gamma$-weak quadratic cost \eqref{weak-w2-cost} may indeed contain functions which are not stochastic OT maps. In \wasyparagraph\ref{sec-kernel-w2}, we introduce kernel weak quadratic costs and prove that they do not suffer from this issue, i.e., all functions in sets $\arginf_{T}$ for all optimal $f^{*}$ are stochastic OT maps. In \wasyparagraph\ref{sec-practical-aspects}, we discuss the practical aspects of learning with kernels. We give the \underline{\textit{proofs}} of all the statements in Appendix \ref{sec-proofs}.

\vspace{-2mm}\subsection{Fake Solutions for the Weak Quadratic Cost}
\label{sec-issues-w2}

\vspace{-1.5mm}In this subsection, we consider $\mathcal{X}=\mathcal{Y}=\mathbb{R}^{D}$. We show that $\arginf_{T}\mathcal{L}(f^{*},T)$ sets \eqref{T-in-arginf} of optimal potentials $f^{*}$ in \eqref{ot-weak-dual} for the $\gamma$-weak quadratic cost \eqref{weak-w2-cost}, in general, may contain functions $T$ which are not stochastic OT maps. 
We call such $T$ \textit{fake solutions}. To show why one should be concerned about fake solutions, we emphasize their key defect below.
\vspace{-0.1mm}\begin{lemma}[Fake solutions are not distribution-preserving] Let $f\in\argsup_{f}\inf_{T}\mathcal{L}(f,T)$ and $T^{\dagger}\in \arginf_{T}\mathcal{L}(f^{*},T)$ be a fake solution. Then it holds that $T^{\dagger}\sharp(\mathbb{P}\times\mathbb{S})\neq \mathbb{Q}$.
\label{lemma-fake-problem}
\end{lemma}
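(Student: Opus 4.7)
I would argue by contrapositive: assume that $T^{\dagger}\sharp(\mathbb{P}\times\mathbb{S}) = \mathbb{Q}$ and then derive that $T^{\dagger}$ must in fact be a stochastic OT map, contradicting the assumption of fakeness.

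The key algebraic observation is that under the distribution-preserving assumption, the potential-dependent terms in $\mathcal{L}(f^{*}, T^{\dagger})$ cancel. By the change-of-variables formula for pushforwards,
\begin{equation*}
\int_{\mathcal{X}}\int_{\mathcal{Z}} f^{*}\big(T^{\dagger}_{x}(z)\big) d\mathbb{S}(z) d\mathbb{P}(x) = \int_{\mathcal{Y}} f^{*}(y) d\big(T^{\dagger}\sharp(\mathbb{P}\times\mathbb{S})\big)(y) = \int_{\mathcal{Y}} f^{*}(y) d\mathbb{Q}(y),
\end{equation*}
so that $\mathcal{L}(f^{*}, T^{\dagger}) = \int_{\mathcal{X}} C\big(x, T^{\dagger}_{x}\sharp\mathbb{S}\big) d\mathbb{P}(x)$.

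Next, I would package $T^{\dagger}$ into a plan: define $\pi^{\dagger}(dx,dy) \stackrel{\text{def}}{=} \big(T^{\dagger}_{x}\sharp\mathbb{S}\big)(dy)\, d\mathbb{P}(x)$. Its first marginal is $\mathbb{P}$ by construction, and its second marginal is $T^{\dagger}\sharp(\mathbb{P}\times\mathbb{S}) = \mathbb{Q}$ by assumption; hence $\pi^{\dagger}\in\Pi(\mathbb{P},\mathbb{Q})$. The display above is then exactly the weak OT primal objective \eqref{ot-primal-form-weak} evaluated at $\pi^{\dagger}$, which yields $\mathcal{L}(f^{*}, T^{\dagger}) = \int_{\mathcal{X}} C\big(x, \pi^{\dagger}(\cdot|x)\big) d\pi^{\dagger}(x) \geq \text{Cost}(\mathbb{P},\mathbb{Q})$.

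To close the argument, I would invoke the strong duality already baked into \eqref{L-def}: since $f^{*}$ attains the outer supremum and $T^{\dagger}\in\arginf_{T}\mathcal{L}(f^{*},T)$, we get $\mathcal{L}(f^{*}, T^{\dagger}) = \sup_{f}\inf_{T}\mathcal{L}(f,T) = \text{Cost}(\mathbb{P},\mathbb{Q})$. Combined with the preceding inequality, this forces $\pi^{\dagger}$ to be an optimal transport plan, so its conditionals $\pi^{\dagger}(\cdot|x) = T^{\dagger}_{x}\sharp\mathbb{S}$ realize an OT plan and $T^{\dagger}$ is, by definition, a stochastic OT map --- contradicting fakeness. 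The only point requiring care is the duality identification $\mathcal{L}(f^{*}, T^{\dagger}) = \text{Cost}(\mathbb{P},\mathbb{Q})$, but this is immediate from \eqref{L-def} and the two optimality hypotheses, so I expect no genuine obstacle in the argument.
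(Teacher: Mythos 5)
Your proposal is correct and is essentially the paper's own argument: assume distribution preservation, apply the change-of-variables formula to cancel the two potential terms in $\mathcal{L}(f^{*},T^{\dagger})$, recognize the surviving term as the primal weak-OT objective of the plan $\pi^{\dagger}$ induced by $T^{\dagger}$, and invoke strong duality to force $\pi^{\dagger}$ to be optimal, contradicting fakeness. The only cosmetic difference is that you first obtain $\mathcal{L}(f^{*},T^{\dagger})\geq\mathrm{Cost}(\mathbb{P},\mathbb{Q})$ and then close with equality from duality, whereas the paper writes the duality identification at the outset; the content is identical.
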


\vspace{-2mm}Throughout the section, we assume that $\mathbb{P},\mathbb{Q}$ have finite second moments. We analyse the potentials of the form $f^{*}(y)=\frac{1}{2}\|y\|^{2}-\overline{\phi^{*}}(y)$, where $\phi^{*}:\mathbb{R}^{D}\rightarrow\mathbb{R}$ is an optimal restricted potential. To begin with, we show that such potentials are indeed optimal potentials for dual forms \eqref{ot-weak-dual} and \eqref{L-def}.
\begin{lemma}[Optimal restricted potentials provide dual form maximizers] Let $\phi^{*}:\mathbb{R}^{D}\rightarrow\mathbb{R}$ be an optimal restricted potential. Assume that $\overline{\phi^{*}}$ takes only finite values. Then $f^{*}(y)\stackrel{def}{=}\frac{1}{2}\|y\|^{2}-\overline{\phi^{*}}(y)$ maximizes dual formulations \eqref{ot-weak-dual} and \eqref{L-def}.
\label{lemma-restricted-to-f}\vspace{-2mm}
\end{lemma}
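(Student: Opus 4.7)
The strategy is to show that the dual functional in \eqref{ot-weak-dual}, evaluated at the candidate $f^{*}(y)=\frac{1}{2}\|y\|^{2}-\overline{\phi^{*}}(y)$, attains exactly the value $\mathcal{W}_{2,\gamma}^{2}(\mathbb{P},\mathbb{Q})$ that \eqref{dual-barycentric} provides. Since the latter equals $\text{Cost}(\mathbb{P},\mathbb{Q})$, which is the supremum in \eqref{ot-weak-dual}, this will make $f^{*}$ a maximizer. The whole argument reduces to establishing the pointwise identity
\begin{equation*}
(f^{*})^{C}(x)=\tfrac{1}{2}\|x\|^{2}-\phi^{*}(x).
\end{equation*}
To see this, I would expand the definition of the weak $C$-transform with $C=C_{2,\gamma}$ from \eqref{weak-w2-cost}; after routine algebraic simplification one gets $(f^{*})^{C}(x)=\frac{1}{2}\|x\|^{2}+\inf_{\mu}\bigl\{\int_{\mathcal{Y}}[\overline{\phi^{*}}(y)-\langle x,y\rangle]d\mu(y)-\frac{\gamma}{2}\Var(\mu)\bigr\}$, so everything hinges on evaluating this inner infimum.

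For the ``$\leq$'' direction I would restrict $\mu$ to Dirac measures $\delta_{y_{0}}$: the variance vanishes and the inf over $y_{0}$ becomes $-\sup_{y_{0}}[\langle x,y_{0}\rangle-\overline{\phi^{*}}(y_{0})]=-\overline{\overline{\phi^{*}}}(x)=-\phi^{*}(x)$, where the last equality uses that $\phi^{*}$, being convex and continuous, coincides with its biconjugate. For the matching ``$\geq$'' direction I would exploit that $\phi^{*}$ lies in $\text{Cvx}(\frac{1}{\gamma})$, which by conjugate duality is equivalent to $\overline{\phi^{*}}$ being $\gamma$-strongly convex. Applying the strong-convexity lower bound at the expansion point $y_{0}=m_{\mu}$ and integrating against $\mu$ gives $\int\overline{\phi^{*}}(y)d\mu(y)\geq\overline{\phi^{*}}(m_{\mu})+\frac{\gamma}{2}\Var(\mu)$; combining this with the Fenchel--Young inequality $\overline{\phi^{*}}(m_{\mu})\geq\langle x,m_{\mu}\rangle-\phi^{*}(x)$ yields exactly the bound needed. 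The crucial quantitative point is that the $\gamma/2$ produced by strong convexity cancels the $-\frac{\gamma}{2}\Var(\mu)$ correction.

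With $(f^{*})^{C}(x)=\frac{1}{2}\|x\|^{2}-\phi^{*}(x)$ established, I integrate and use \eqref{dual-barycentric} to obtain $\int(f^{*})^{C}d\mathbb{P}+\int f^{*}d\mathbb{Q}=\mathcal{W}_{2,\gamma}^{2}(\mathbb{P},\mathbb{Q})$, proving optimality in \eqref{ot-weak-dual}. For \eqref{L-def} I would invoke that $\mathbb{S}$ is atomless, so $\{T_{x}\sharp\mathbb{S}:T_{x}\text{ measurable}\}$ ranges over all of $\mathcal{P}(\mathcal{Y})$; hence the inner $\inf_{T}$ collapses to $\int f^{C}d\mathbb{P}+\int f\,d\mathbb{Q}$, and the two dual formulations share the same maximizers. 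The main obstacle is the lower-bound step: the term $-\frac{\gamma}{2}\Var(\mu)$ makes the minimization in $\mu$ non-convex, and Fenchel--Young alone is too weak to dominate it -- the regularity $\phi^{*}\in\text{Cvx}(\frac{1}{\gamma})$ is doing essential work precisely to kill this term via strong convexity of $\overline{\phi^{*}}$.
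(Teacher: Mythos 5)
Your proposal is correct and takes essentially the same route as the paper: both establish the pointwise identity $(f^{*})^{C}(x)=\frac{1}{2}\|x\|^{2}-\phi^{*}(x)$ and then substitute into \eqref{dual-barycentric}. The paper packages your two directions into one step — it absorbs $-\frac{\gamma}{2}\Var(\mu)$ into the integrand as $-\frac{\gamma}{2}\|y\|^{2}$ plus a $+\frac{\gamma}{2}\|m_{\mu}\|^{2}$ correction, applies Jensen to the convex function $y\mapsto\overline{\phi^{*}}(y)-\frac{\gamma}{2}\|y\|^{2}-\langle x,y\rangle$ to restrict the infimum to Diracs, and then reads off $-\overline{\overline{\phi^{*}}}(x)=-\phi^{*}(x)$ — but this is algebraically identical to your explicit ``restrict to Diracs for $\leq$, strong-convexity plus Fenchel--Young for $\geq$'' split, since Jensen applied to $\overline{\phi^{*}}-\frac{\gamma}{2}\|\cdot\|^{2}$ is exactly the $\gamma$-strong-convexity lower bound integrated against $\mu$.
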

For a convex $\psi:\mathbb{R}^{D}\rightarrow\mathbb{R}$, we denote the area around point $y\in\mathbb{R}^{D}$ in which $\psi$ is linear by
\begin{equation}U_{\psi}(y)=\{y'\in \mathbb{R}^{D}\mbox{ such that }\forall x\in\partial_{y}\psi(y)\mbox{ it holds } \psi(y')=\psi(y)+\langle x,y'-y\rangle\}\supseteq \{y\}.\label{u-def}\end{equation}
\begin{proposition}[Convexity of sets of local linearity of a convex function]Set $U_{\psi}(y)$ is convex.
\label{prop-u-convex}\vspace{-2mm}\end{proposition}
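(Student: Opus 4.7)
The plan is to take two points $y_1, y_2 \in U_\psi(y)$, form a convex combination $y_\lambda = \lambda y_1 + (1-\lambda) y_2$ for $\lambda \in [0,1]$, and show that $y_\lambda \in U_\psi(y)$. This requires verifying that for every subgradient $x \in \partial_y \psi(y)$, the equality $\psi(y_\lambda) = \psi(y) + \langle x, y_\lambda - y\rangle$ holds.

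First I would fix an arbitrary $x \in \partial_y \psi(y)$. By assumption $y_1, y_2 \in U_\psi(y)$, so
\[
\psi(y_1) = \psi(y) + \langle x, y_1 - y\rangle \quad \text{and} \quad \psi(y_2) = \psi(y) + \langle x, y_2 - y\rangle.
\]
Forming the $\lambda$-convex combination of these two equalities and using linearity of the inner product gives
\[
\lambda \psi(y_1) + (1-\lambda)\psi(y_2) = \psi(y) + \langle x, y_\lambda - y\rangle.
\]

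Next I would sandwich $\psi(y_\lambda)$ between two standard convex-analytic bounds. The convexity of $\psi$ yields the upper bound $\psi(y_\lambda) \leq \lambda \psi(y_1) + (1-\lambda)\psi(y_2)$, and the subgradient inequality at $y$ (since $x \in \partial_y \psi(y)$) gives the lower bound $\psi(y_\lambda) \geq \psi(y) + \langle x, y_\lambda - y\rangle$. Combined with the identity above, both bounds coincide, forcing $\psi(y_\lambda) = \psi(y) + \langle x, y_\lambda - y\rangle$. Since $x \in \partial_y \psi(y)$ was arbitrary, we conclude $y_\lambda \in U_\psi(y)$.

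There is no real obstacle here: the proof is a three-line sandwich argument between the convexity upper bound and the subgradient lower bound, taking advantage of the fact that the affine function $y' \mapsto \psi(y) + \langle x, y'-y\rangle$ is itself linear in $y'$ so that the convex combination of equalities at $y_1, y_2$ is exactly an equality at $y_\lambda$. The only subtle point worth flagging is that the definition of $U_\psi(y)$ quantifies over \emph{all} $x \in \partial_y \psi(y)$, so the argument must be carried out pointwise in $x$, which the above presentation does.
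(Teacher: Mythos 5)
Your proof is correct and is essentially identical to the paper's: both sandwich $\psi$ at the convex combination between the convexity upper bound and the subgradient lower bound, then use linearity of the affine minorant to show the two bounds coincide, forcing equality for every $x\in\partial_y\psi(y)$.
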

Our following theorem provides a full characterization of the $\arginf_{T}\mathcal{L}(f^{*},T)$ sets in view.
\begin{theorem}[Characterization of saddle points with optimal restricted $f^{*}$] Let $f^{*}(y)=\frac{\|y\|^{2}}{2}-\overline{\phi^{*}}(y)$, where $\phi^{*}$ is an optimal restricted potential. Assume that $\overline{\phi^{*}}$ takes only finite values. Then it holds true that $\arginf_{T}\mathcal{L}(f^{*},T)$ is a {\textbf{convex set}} and
\begin{equation}T^{\dagger}\!\in\! \arginf_{T}\mathcal{L}(f^{*},T)\Leftrightarrow 
\begin{cases}
   \int_{\mathcal{Z}} T_{x}^{\dagger}(z)d\mathbb{S}(z)=\nabla\phi^{*}(x)\hspace{1mm}\mbox{ holds true}\hspace{1mm} \mathbb{P}\mbox{-almost everywhere};\\
    T_{x}^{\dagger}(z)\in U_{\psi}\big(\nabla\phi^{*}(x)\big) \hspace{1mm}\text{holds true }\hspace{1mm}\mathbb{P}\!\times\!\mathbb{S}\mbox{-almost everywhere},
            \end{cases}
\label{characterization-arginf}
\vspace{-2mm}
\end{equation}
where $\psi(y)\stackrel{def}{=}\overline{\phi^{*}}(y)-\frac{\gamma}{2}\|y\|^{2}.$ Note that $\psi$ is convex since $\phi^{*}$ is $\frac{1}{\gamma}$-smooth \citep{kakade2009duality}.
\label{lemma-characterization}\vspace{-3mm}
\end{theorem}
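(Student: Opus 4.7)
The plan is to substitute $f^{*}$ and the $\gamma$-weak quadratic cost into $\mathcal{L}(f^{*},T)$ and reduce the infimum over $T$ to two familiar pointwise inequalities: Jensen's inequality for the convex function $\psi$, and Fenchel--Young's inequality for the pair $(\phi^{*},\overline{\phi^{*}})$. First I expand the integrand at a fixed $x$. Writing $\bar T(x)\stackrel{\text{def}}{=}\int_{\mathcal{Z}}T_{x}(z)d\mathbb{S}(z)$, using $\Var(T_{x}\sharp\mathbb{S})=\int\|T_{x}\|^{2}d\mathbb{S}-\|\bar T(x)\|^{2}$ and expanding $\|x-T_{x}(z)\|^{2}$, the pure $\|T_{x}(z)\|^{2}$ contributions combine into $-\frac{\gamma}{2}\|T_{x}(z)\|^{2}$, so that after inserting $f^{*}(y)=\frac{1}{2}\|y\|^{2}-\overline{\phi^{*}}(y)$ the inner integrand at $x$ simplifies to
\begin{equation*}
\tfrac{1}{2}\|x\|^{2}-\langle x,\bar T(x)\rangle+\tfrac{\gamma}{2}\|\bar T(x)\|^{2}+\int_{\mathcal{Z}}\psi\big(T_{x}(z)\big)d\mathbb{S}(z).
\end{equation*}

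Next I lower-bound the last integral by Jensen's inequality, $\int\psi(T_{x}(z))d\mathbb{S}(z)\ge\psi(\bar T(x))$, with equality if and only if $T_{x}(z)\in U_{\psi}(\bar T(x))$ for $\mathbb{S}$-almost all $z$ (the standard equality case for Jensen, restated via the definition \eqref{u-def} of $U_{\psi}$ as the region where every supporting affine functional at $\bar T(x)$ agrees with $\psi$). Substituting $\psi(w)=\overline{\phi^{*}}(w)-\frac{\gamma}{2}\|w\|^{2}$ cancels the $\frac{\gamma}{2}\|\bar T(x)\|^{2}$ term and leaves $\frac{1}{2}\|x\|^{2}+\big(\overline{\phi^{*}}(\bar T(x))-\langle x,\bar T(x)\rangle\big)$. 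Fenchel--Young then gives $\overline{\phi^{*}}(w)-\langle x,w\rangle\ge -\phi^{*}(x)$, with equality iff $w=\nabla\phi^{*}(x)$ (unique because $\phi^{*}$ is differentiable). Chaining these, $\mathcal{L}(f^{*},T)\ge\int\frac{\|x\|^{2}}{2}d\mathbb{P}+\int\frac{\|y\|^{2}}{2}d\mathbb{Q}-\int\phi^{*}d\mathbb{P}-\int\overline{\phi^{*}}d\mathbb{Q}$, and the right-hand side equals $\mathcal{W}_{2,\gamma}^{2}(\mathbb{P},\mathbb{Q})$ by the dual formula \eqref{dual-barycentric}; since $f^{*}$ is optimal by Lemma~\ref{lemma-restricted-to-f}, this lower bound actually coincides with $\inf_{T}\mathcal{L}(f^{*},T)$.

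Both directions of \eqref{characterization-arginf} then follow: $T^{\dagger}$ attains the infimum iff both the Fenchel--Young slack and the Jensen slack vanish $\mathbb{P}$-almost everywhere, which is exactly the two stated conditions (after identifying $\bar T^{\dagger}(x)$ with $\nabla\phi^{*}(x)$ in the Jensen condition). Convexity of the argmin is then immediate: the mean constraint $\bar T(x)=\nabla\phi^{*}(x)$ is affine in $T$, and for each $x$ the set $U_{\psi}(\nabla\phi^{*}(x))$ is convex by Proposition~\ref{prop-u-convex}, so any convex combination of two characterised maps still satisfies both conditions. The most delicate step I anticipate is passing from the two integrated inequalities to the pointwise $\mathbb{P}$-a.e.\ (resp.\ $\mathbb{P}\times\mathbb{S}$-a.e.) equalities; I would handle this by noting that the Jensen and Fenchel--Young slacks are pointwise nonnegative functions whose integrals against $\mathbb{P}$ (resp.\ $\mathbb{P}\times\mathbb{S}$) are zero, so they must vanish almost everywhere.
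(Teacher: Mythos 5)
Your proof is correct, and the overall structure (pointwise expansion, Jensen, identification of the barycenter, deduce a.e.\ conditions from vanishing slacks, convexity of the argmin from Proposition~\ref{prop-u-convex}) matches the paper's. The one genuinely different ingredient is the step that identifies $\nabla\phi^{*}(x)$ as the optimizer of the post-Jensen objective $m\mapsto\frac{1}{2}\|x\|^{2}-\langle x,m\rangle+\overline{\phi^{*}}(m)$. You do this by Fenchel--Young, which simultaneously yields the minimizer (unique since $\phi^{*}$ is differentiable) and the closed-form minimum value $\frac{1}{2}\|x\|^{2}-\phi^{*}(x)$; integrating and invoking the restricted dual \eqref{dual-barycentric} then pins down the infimum of $\mathcal{L}(f^{*},\cdot)$. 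The paper instead argues that the post-Jensen objective is $\gamma$-strongly convex in $m$ (hence has a unique minimizer) and identifies that minimizer as $\nabla\phi^{*}(x)$ by plugging in an actual stochastic OT map $T^{*}$, using the structural fact from \wasyparagraph\ref{sec-background-ot} that $\int T^{*}_{x}(z)\,d\mathbb{S}(z)=\int y\,d\pi^{*}(y\,|\,x)=\nabla\phi^{*}(x)$ and that $T^{*}\in\arginf_{T}\mathcal{L}(f^{*},T)$. Your route is slightly more self-contained --- it does not presuppose the existence of an OT map achieving the infimum, and in fact reproves Lemma~\ref{lemma-restricted-to-f} as a by-product --- at the cost of leaning on \eqref{dual-barycentric} at the integral level. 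The paper's route is more local: it never needs \eqref{dual-barycentric} inside the proof but instead imports the barycentric characterization of OT plans. Both are valid; the remaining pieces (equality case of Jensen via $U_{\psi}$, passing from integral equalities to pointwise a.e.\ equalities via nonnegativity of the slack, and convexity of $\arginf_{T}$ from affineness of the mean constraint together with Proposition~\ref{prop-u-convex}) are handled the same way.
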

We define the \textit{optimal barycentric projection} $\overline{T^{*}_{x}}(z)\stackrel{def}{=}\nabla\phi^{*}(x)$ for $(x,z)\!\in\!\mathcal{X}\!\times\!\mathcal{Z}$; it does not depend on $z\!\in\!\mathcal{Z}$. The function depends on the choice of optimal $\phi^{*}$; we are interested only in its values in the support of $\mathbb{P}$, where $\nabla\phi^{*}$ is unique (\wasyparagraph\ref{sec-background-ot}). From definition \eqref{u-def}, we see that $\nabla\phi^{*}(x)\in U_{\psi}\big(\nabla\phi^{*}(x)\big)$ and $\overline{T^{*}}$ satisfies both conditions on the right side of \eqref{characterization-arginf}. Thus, we have $\overline{T^{*}}\in\arginf_{T}\mathcal{L}(f^{*},T)$. 
\begin{lemma}[The barycentric projection is not always a stochastic OT map] \label{lemma-is-bar-optimal}The following holds true
\begin{equation}\overline{T^{*}} \mbox{ is a stochastic OT map }\stackrel{(a)}{\Longleftrightarrow} \Proj_{\preceq\mathbb{Q}}\big(\frac{1}{\gamma}\sharp\mathbb{P}\big)=\mathbb{Q}\stackrel{(b)}{\Longrightarrow}  \overline{T^{*}} \mbox{ is the unique stochastic OT map.}\nonumber
\vspace{-2mm}\end{equation}
\end{lemma}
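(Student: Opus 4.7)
The plan is to lean on two facts recalled in the background: (i) $\nabla\phi^{*}$ is $\tfrac{1}{\gamma}$-Lipschitz with $\nabla\phi^{*}\sharp\mathbb{P}=\Proj_{\preceq\mathbb{Q}}(\tfrac{1}{\gamma}\sharp\mathbb{P})$, and (ii) a plan $\pi^{*}\in\Pi(\mathbb{P},\mathbb{Q})$ is optimal iff $\int_{\mathcal{Y}} y\,d\pi^{*}(y|x)=\nabla\phi^{*}(x)$ holds $\mathbb{P}$-a.s. Since $\overline{T^{*}_{x}}(z)=\nabla\phi^{*}(x)$ is constant in $z$, the push-forward $\overline{T^{*}}\sharp(\mathbb{P}\!\times\!\mathbb{S})$ equals $\nabla\phi^{*}\sharp\mathbb{P}=\Proj_{\preceq\mathbb{Q}}(\tfrac{1}{\gamma}\sharp\mathbb{P})$; this single identity drives both parts of the lemma.

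For equivalence $(a)$, the forward direction is immediate: any stochastic OT map must be distribution-preserving, i.e., $\overline{T^{*}}\sharp(\mathbb{P}\!\times\!\mathbb{S})=\mathbb{Q}$, which by the observation above forces $\Proj_{\preceq\mathbb{Q}}(\tfrac{1}{\gamma}\sharp\mathbb{P})=\mathbb{Q}$. Conversely, if this equality holds, then $\nabla\phi^{*}\sharp\mathbb{P}=\mathbb{Q}$, so the deterministic plan $\pi^{*}(\cdot|x):=\delta_{\nabla\phi^{*}(x)}$ lies in $\Pi(\mathbb{P},\mathbb{Q})$ and trivially satisfies the barycentric condition, hence is an OT plan. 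Since $\overline{T^{*}_{x}}\sharp\mathbb{S}=\delta_{\nabla\phi^{*}(x)}=\pi^{*}(\cdot|x)$, the map $\overline{T^{*}}$ is a stochastic OT map.

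For implication $(b)$, suppose $\Proj_{\preceq\mathbb{Q}}(\tfrac{1}{\gamma}\sharp\mathbb{P})=\mathbb{Q}$ and let $T$ be any stochastic OT map with $T_{x}\sharp\mathbb{S}=\pi^{*}(\cdot|x)$ for some OT plan $\pi^{*}$. Applying the second-moment decomposition $\int_{\mathcal{Y}}\|y\|^{2}\,d\pi^{*}(y|x)=\|\nabla\phi^{*}(x)\|^{2}+\Var(\pi^{*}(\cdot|x))$ (valid $\mathbb{P}$-a.s.\ since the barycenter of $\pi^{*}(\cdot|x)$ is $\nabla\phi^{*}(x)$) and integrating against $\mathbb{P}$ gives
\[
\int_{\mathcal{Y}}\|y\|^{2}\,d\mathbb{Q}(y)=\int_{\mathcal{X}}\|\nabla\phi^{*}(x)\|^{2}\,d\mathbb{P}(x)+\int_{\mathcal{X}}\Var(\pi^{*}(\cdot|x))\,d\mathbb{P}(x),
\]
where the left-hand side used that the $\mathcal{Y}$-marginal of $\pi^{*}$ is $\mathbb{Q}$. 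Because $\nabla\phi^{*}\sharp\mathbb{P}=\mathbb{Q}$, the first integral on the right also equals $\int_{\mathcal{Y}}\|y\|^{2}\,d\mathbb{Q}(y)$, so the remaining nonnegative term must vanish: $\Var(\pi^{*}(\cdot|x))=0$ holds $\mathbb{P}$-a.s. Thus $\pi^{*}(\cdot|x)=\delta_{\nabla\phi^{*}(x)}$ $\mathbb{P}$-a.s., forcing $T_{x}(z)=\nabla\phi^{*}(x)=\overline{T^{*}_{x}}(z)$ for $\mathbb{P}\!\times\!\mathbb{S}$-a.e.\ $(x,z)$.

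The argument is mostly bookkeeping on top of the background material; the only nonroutine piece is the variance/second-moment identity driving $(b)$, which relies on $\mathbb{Q}$ having a finite second moment (assumed at the start of the section) so that the integrals are finite and the cancellation step with $\int\|y\|^2 d\mathbb{Q}$ is legal. Any subtlety about what "unique" means is absorbed into the natural $\mathbb{P}\!\times\!\mathbb{S}$-a.e.\ equivalence class.
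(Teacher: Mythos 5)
Your argument is correct and matches the paper's proof essentially step for step. For $(a)$ you use the identity $\overline{T^{*}}\sharp(\mathbb{P}\!\times\!\mathbb{S})=\nabla\phi^{*}\sharp\mathbb{P}=\Proj_{\preceq\mathbb{Q}}(\tfrac{1}{\gamma}\sharp\mathbb{P})$ so that distribution preservation is exactly the projection equality, and for $(b)$ you use the same conditional second-moment decomposition integrated against $\mathbb{P}$, cancel the $\int\|y\|^{2}d\mathbb{Q}$ terms, and conclude $\Var(\pi^{*}(\cdot|x))=0$ $\mathbb{P}$-a.s.; this is precisely the chain of equalities in the paper.
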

We use the word \textit{stochastic} but $\overline{T^{*}}$ is actually deterministic since it does not depend on $z$. From our Lemma \ref{lemma-is-bar-optimal}, we derive that if $\Proj_{\preceq\mathbb{Q}}(\frac{1}{\gamma}\sharp\mathbb{P})\neq \mathbb{Q}$, it holds that $(f^{*},\overline{T^{*}})$ is a fake saddle point.

\begin{corollary}[Existence of fake saddle points]  Assume that $\Proj_{\preceq\mathbb{Q}}(\frac{1}{\gamma}\sharp\mathbb{P})\neq \mathbb{Q}$. Then problem \eqref{L-def} has optimal saddle points $(f^{*},T^{*})$ in which $T^{*}$ is not a stochastic OT map.
\label{corollary-existence-fake}
\vspace{-2mm}
\end{corollary}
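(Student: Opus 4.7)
The plan is to assemble the corollary directly from the three preceding results by constructing an explicit fake saddle point. Concretely, I will show that $(f^{*},\overline{T^{*}})$, with $f^{*}$ built from an optimal restricted potential as in Lemma~\ref{lemma-restricted-to-f} and $\overline{T^{*}}$ the associated barycentric projection, is a saddle point of \eqref{L-def}, and then use Lemma~\ref{lemma-is-bar-optimal} to conclude that it fails to be a stochastic OT map under the standing assumption $\Proj_{\preceq\mathbb{Q}}(\tfrac{1}{\gamma}\sharp\mathbb{P})\neq\mathbb{Q}$.

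First I would invoke the existence (recalled in \wasyparagraph\ref{sec-background-ot}) of an optimal restricted potential $\phi^{*}:\mathbb{R}^{D}\to\mathbb{R}$, continuously differentiable, convex, and $\tfrac{1}{\gamma}$-smooth. Under the assumption of finite second moments of $\mathbb{P},\mathbb{Q}$ (which is in force throughout \wasyparagraph\ref{sec-issues-w2}), the conjugate $\overline{\phi^{*}}$ is finite everywhere, since $\phi^{*}$ has at most quadratic growth on $\mathbb{R}^{D}$ and hence $\overline{\phi^{*}}(y)$ is bounded on any bounded set by a standard conjugate estimate. Applying Lemma~\ref{lemma-restricted-to-f}, the potential $f^{*}(y)=\tfrac{1}{2}\|y\|^{2}-\overline{\phi^{*}}(y)$ attains $\sup_{f}\inf_{T}\mathcal{L}(f,T)$ in \eqref{L-def}.

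Next I would verify that the barycentric projection $\overline{T^{*}_{x}}(z)\stackrel{def}{=}\nabla\phi^{*}(x)$ lies in $\arginf_{T}\mathcal{L}(f^{*},T)$. This is immediate from Theorem~\ref{lemma-characterization}: the first condition of \eqref{characterization-arginf}, namely $\int_{\mathcal{Z}}\overline{T^{*}_{x}}(z)d\mathbb{S}(z)=\nabla\phi^{*}(x)$, holds trivially because $\overline{T^{*}_{x}}$ is the constant map $\nabla\phi^{*}(x)$; and the second condition holds because $\nabla\phi^{*}(x)\in U_{\psi}(\nabla\phi^{*}(x))$ by definition \eqref{u-def}. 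Hence $(f^{*},\overline{T^{*}})$ is an optimal saddle point of \eqref{L-def}.

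Finally, since by assumption $\Proj_{\preceq\mathbb{Q}}(\tfrac{1}{\gamma}\sharp\mathbb{P})\neq\mathbb{Q}$, the equivalence (a) in Lemma~\ref{lemma-is-bar-optimal} implies that $\overline{T^{*}}$ is not a stochastic OT map, so $(f^{*},\overline{T^{*}})$ is a fake saddle point as required. The one delicate technical step is the finiteness of $\overline{\phi^{*}}$ needed to apply Lemma~\ref{lemma-restricted-to-f}; everything else is a direct application of the previously established characterization. If the finiteness of $\overline{\phi^{*}}$ cannot be guaranteed from the stated hypotheses alone, I would add it as an explicit assumption on $(\mathbb{P},\mathbb{Q},\gamma)$, noting that it holds in all cases of practical interest (e.g., compactly supported or subgaussian $\mathbb{P},\mathbb{Q}$).
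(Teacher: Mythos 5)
Your proof follows essentially the same route the paper takes: build $f^*$ from an optimal restricted potential via Lemma~\ref{lemma-restricted-to-f}, check via Theorem~\ref{lemma-characterization} that the barycentric projection $\overline{T^*}$ lies in $\arginf_T\mathcal{L}(f^*,T)$, and then invoke Lemma~\ref{lemma-is-bar-optimal} to see $\overline{T^*}$ is not a stochastic OT map under $\Proj_{\preceq\mathbb{Q}}(\tfrac{1}{\gamma}\sharp\mathbb{P})\neq\mathbb{Q}$. The paper does not print a separate proof of the corollary in the appendix; it treats it as an immediate consequence of the sentence preceding it, which is precisely the chain you reproduce.

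One flaw in the write-up: the claim that $\overline{\phi^*}$ is finite everywhere because ``$\phi^*$ has at most quadratic growth'' is not correct. An upper bound on the growth of $\phi^*$ does not force $\overline{\phi^*}$ to be finite --- in fact it tends to have the opposite effect. For a concrete counterexample, $\phi(x)=\sqrt{1+x^2}$ is $1$-smooth and has (sub)linear, hence sub-quadratic, growth, yet $\overline{\phi}(y)=+\infty$ for $|y|>1$. What you would actually need is that $\nabla\phi^*$ has dense enough range in $\mathbb{R}^D$, which does not follow from the stated hypotheses alone (e.g., if the projection has compact support, $\nabla\phi^*$ maps into a bounded set). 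The paper sidesteps this by listing ``$\overline{\phi^*}$ takes only finite values'' as an explicit hypothesis in Lemma~\ref{lemma-restricted-to-f} and Theorem~\ref{lemma-characterization} and silently carrying that assumption into the corollary. You are right to flag this as the delicate point and to propose adding it as an explicit assumption; just discard the ``standard conjugate estimate'' justification, since it is the wrong direction of the Legendre bound.
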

Beside $\overline{T^{*}}$, our Theorem \ref{lemma-characterization} can be used to construct arbitrary many fake solutions which are not OT maps. Let $T^{*}$ be \textit{any} stochastic OT map and $T^{\dagger}\in\arginf_{T}\mathcal{L}(f^{*},T)$ satisfy $T^{\dagger}\neq T^{*}$ and $\Var\big(T^{\dagger}\sharp(\mathbb{P}\!\times\!\mathbb{S})\big)\leq \Var(\mathbb{Q})$. For example, $T^{\dagger}$ may be another stochastic OT map or the optimal barycentric projection $\overline{T^{*}}$. For any $\alpha\in (0,1)$ consider $T^{\alpha}=\alpha T^{*}+(1-\alpha)T^{\dagger}\in\arginf_{T}\mathcal{L}(f^{*},T)$.
\begin{proposition}[Interpolant is not a stochastic OT map] Assume that $\Proj_{\preceq\mathbb{Q}}(\frac{1}{\gamma}\sharp\mathbb{P})\neq \mathbb{Q}$. Then $T^{\alpha}\sharp(\mathbb{P}\times\mathbb{S})\neq \mathbb{Q}$. Consequently, $T^{\alpha}$ is not a stochastic OT map between $\mathbb{P}$ and $\mathbb{Q}$.
\label{prop-interpolant}
\vspace{-2mm}
\end{proposition}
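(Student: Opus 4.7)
My plan is to show that $T^\alpha\sharp(\mathbb{P}\times\mathbb{S})$ has strictly smaller variance than $\mathbb{Q}$, which immediately yields $T^\alpha\sharp(\mathbb{P}\times\mathbb{S})\neq\mathbb{Q}$. The backbone of the argument is the law of total variance together with strict convexity of $\|\cdot\|^2$.

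I would begin by noting that both $T^{*}$ (being a stochastic OT map, via Lemma 4 of Korotin et al.\ cited in the excerpt) and $T^{\dagger}$ belong to $\arginf_{T}\mathcal{L}(f^{*},T)$. Theorem \ref{lemma-characterization} then forces each of them to satisfy $\int_{\mathcal{Z}} T_{x}(z)\,d\mathbb{S}(z)=\nabla\phi^{*}(x)$ for $\mathbb{P}$-a.e.\ $x$, so by linearity the same identity holds for the interpolant $T^{\alpha}$. Applying the law of total variance to any such $T$ gives
$$
\Var\bigl(T\sharp(\mathbb{P}\times\mathbb{S})\bigr)=\int_{\mathcal{X}\times\mathcal{Z}}\|T(x,z)-\nabla\phi^{*}(x)\|^{2}\,d\mathbb{P}(x)d\mathbb{S}(z)+\Var(\nabla\phi^{*}\sharp\mathbb{P}),
$$
and the second summand is identical for $T^{*},T^{\dagger},T^{\alpha}$, so only the first summand matters.

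For that first summand, I would invoke the strict convexity of $\|\cdot\|^{2}$: whenever $T^{*}(x,z)\neq T^{\dagger}(x,z)$ and $\alpha\in(0,1)$,
$$
\|T^{\alpha}(x,z)-\nabla\phi^{*}(x)\|^{2}<\alpha\|T^{*}(x,z)-\nabla\phi^{*}(x)\|^{2}+(1-\alpha)\|T^{\dagger}(x,z)-\nabla\phi^{*}(x)\|^{2}.
$$
Since $T^{\dagger}\neq T^{*}$ as measurable maps modulo $\mathbb{P}\times\mathbb{S}$-null sets, they disagree on a set of positive measure, so integrating against $\mathbb{P}\times\mathbb{S}$ yields a strict inequality. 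Combined with the total-variance identity, this gives
$$
\Var\bigl(T^{\alpha}\sharp(\mathbb{P}\times\mathbb{S})\bigr)<\alpha\,\Var\bigl(T^{*}\sharp(\mathbb{P}\times\mathbb{S})\bigr)+(1-\alpha)\,\Var\bigl(T^{\dagger}\sharp(\mathbb{P}\times\mathbb{S})\bigr)=\alpha\,\Var(\mathbb{Q})+(1-\alpha)\Var\bigl(T^{\dagger}\sharp(\mathbb{P}\times\mathbb{S})\bigr)\leq \Var(\mathbb{Q}),
$$
using $T^{*}\sharp(\mathbb{P}\times\mathbb{S})=\mathbb{Q}$ and the hypothesized bound $\Var(T^{\dagger}\sharp(\mathbb{P}\times\mathbb{S}))\leq\Var(\mathbb{Q})$. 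Since $\mathbb{Q}$ cannot coincide with a distribution of strictly smaller variance, $T^{\alpha}\sharp(\mathbb{P}\times\mathbb{S})\neq\mathbb{Q}$.

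The only delicate point is that the strict inequality above really requires $T^{*}$ and $T^{\dagger}$ to disagree on a positive-$\mathbb{P}\times\mathbb{S}$-measure set rather than merely being non-identical as symbols. Since both are elements of $\arginf_{T}\mathcal{L}(f^{*},T)$ — naturally considered modulo $\mathbb{P}\times\mathbb{S}$-a.e.\ equality — the hypothesis $T^{\dagger}\neq T^{*}$ is read in this sense, and no additional argument is needed. The assumption $\Proj_{\preceq\mathbb{Q}}(\frac{1}{\gamma}\sharp\mathbb{P})\neq\mathbb{Q}$ enters only implicitly, as it is what guarantees (via Corollary \ref{corollary-existence-fake} and Lemma \ref{lemma-is-bar-optimal}) that a $T^{\dagger}$ distinct from every stochastic OT map exists in the first place.
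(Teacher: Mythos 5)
Your proof is correct and takes essentially the same route as the paper: both establish that $T^{\alpha}\sharp(\mathbb{P}\times\mathbb{S})$ has strictly smaller spread than $\mathbb{Q}$ by applying the strict Jensen inequality for $\|\cdot\|^{2}$ to the interpolant and then using the hypothesis $\Var(T^{\dagger}\sharp(\mathbb{P}\times\mathbb{S}))\leq\Var(\mathbb{Q})$. The only cosmetic difference is that you phrase the bound in terms of variances via the law of total variance, whereas the paper tracks raw second moments (which is equivalent here since all three maps share the same conditional mean $\nabla\phi^{*}$ and hence the same marginal mean $m_{\mathbb{Q}}$).
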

It follows that a \textit{necessary} condition for non-existence of fake saddle points is $\Proj_{\preceq\mathbb{Q}}(\frac{1}{\gamma}\sharp\mathbb{P})=\mathbb{Q}$.
This requirement is very restrictive and naturally prohibits using large values of $\gamma$. Also, due to our Lemma \ref{lemma-is-bar-optimal}, the OT plan between $\mathbb{P},\mathbb{Q}$ must be deterministic. From the practical point of view, this requirement means that there will be no diversity in samples $T_{x}^{*}(z)$ for a fixed $x$ and  $z\sim\mathbb{S}$.

\begin{figure}[!t]
\hspace{-6mm}\begin{subfigure}[b]{0.2775\linewidth}
\centering
\includegraphics[width=\linewidth]{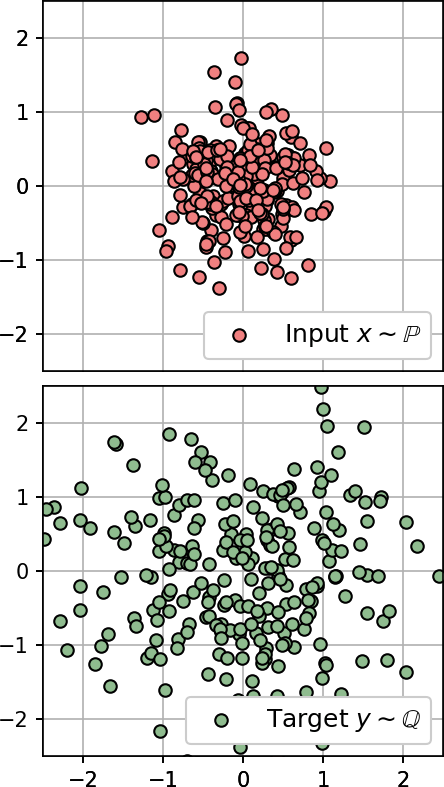}
\vspace{-2mm}\caption{\centering \vspace{0.5mm}Input and target\newline distributions $\mathbb{P}$ and $\mathbb{Q}$.}
\label{fig:toy-input-output}
\end{subfigure}
\begin{subfigure}[b]{0.25\linewidth}
\centering
\includegraphics[width=\linewidth]{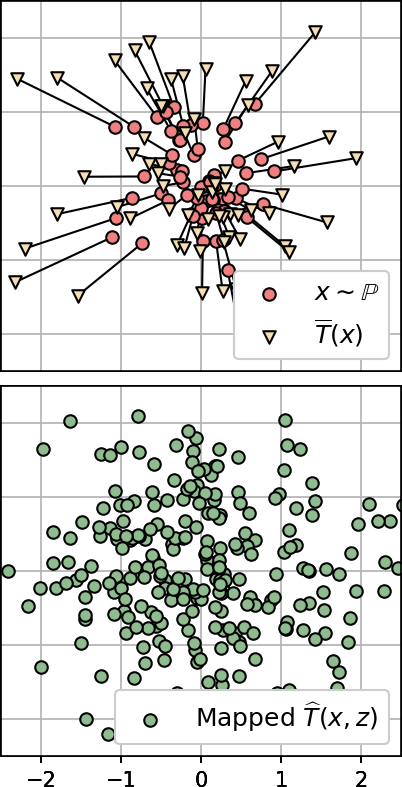}
\vspace{-2mm}\caption{\centering Fitted map $\hat{T}_{x}(z)$\newline for $\gamma=\frac{1}{2}$.}
\label{fig:gamma-0.5}
\end{subfigure}
\begin{subfigure}[b]{0.25\linewidth}
\centering
\includegraphics[width=\linewidth]{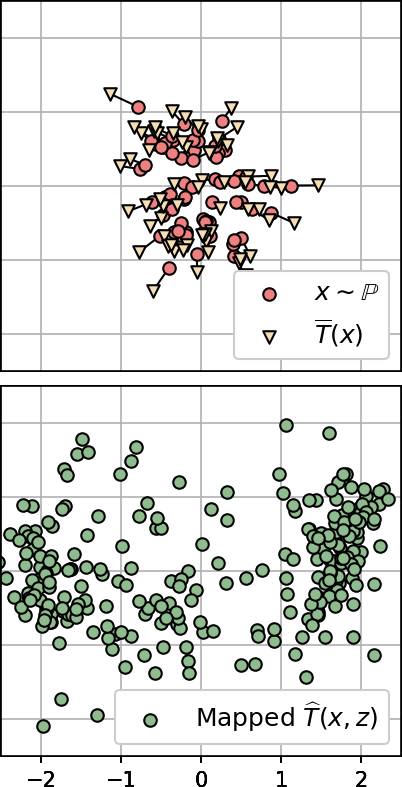}
\vspace{-2mm}\caption{\centering Fitted map $\hat{T}_{x}(z)$\newline for $\gamma=\frac{3}{4}$.}
\label{fig:gamma-0.75}
\end{subfigure}
\begin{subfigure}[b]{0.25\linewidth}
\centering
\includegraphics[width=\linewidth]{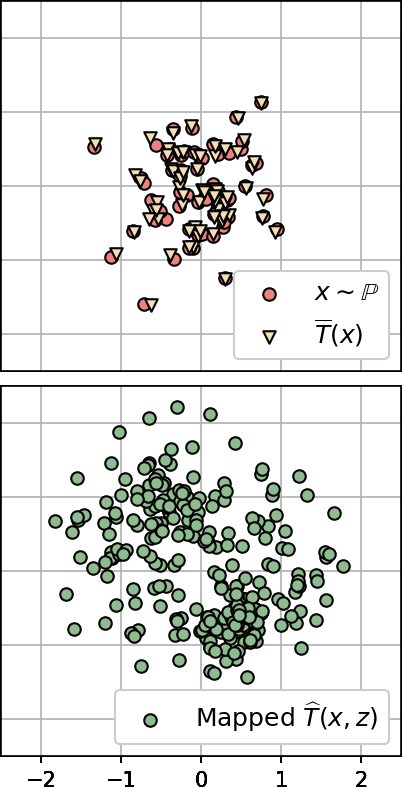}
\vspace{-2mm}\caption{\centering Fitted map $\hat{T}_{x}(z)$\newline for $\gamma=1$.}
\label{fig:gamma-1}
\end{subfigure}
\vspace{-6mm}
\caption{Stochastic maps $\widehat{T}$ between $\mathbb{P}$ and $\mathbb{Q}$ fitted by NOT algorithm with costs $C_{2,\gamma}$ for various $\gamma$.}
\label{fig:toy-divergence}
\vspace{-7mm}
\end{figure}

On the other hand, if $\Proj_{\preceq\mathbb{Q}}(\frac{1}{\gamma}\sharp\mathbb{P})\neq \mathbb{Q}$, i.e., $\mathbb{P}$ is not $\gamma$-\textit{times more disperse} than $\mathbb{Q}$, the optimization may indeed converge to fake solutions. To show this, we consider the following example.

\vspace{-0.5mm}\underline{\textbf{Toy 2D example}}. We consider $\mathbb{P}=\mathcal{N}(0,[\frac{1}{2}]^2 I_{2})$, $\mathbb{Q}=\mathcal{N}(0,I_{2})$ (Figure \ref{fig:toy-input-output}) and run NOT \citep[Algorihm 1]{korotin2023neural} for  $\gamma\!\in\!\{\!\frac{1}{2},\frac{3}{4},1\!\}$-weak quadratic costs $C_{2,\gamma}$. We show the learned stochastic maps $\hat{T}_{x}(z)$ and their barycentric projections $\overline{T}(x)\stackrel{def}{=}\int_{\mathcal{Z}}\widehat{T}_{x}(z)d\mathbb{S}(z)$ in Figures \ref{fig:gamma-0.5}, \ref{fig:gamma-0.75}, \ref{fig:gamma-1}.

\vspace{-0.5mm}\textbf{Good case}. When $\gamma\leq\frac{1}{2}$, we have $\frac{1}{\gamma}\sharp\mathbb{P}=\mathcal{N}(0,[\frac{1}{2\gamma}]^2 I_{2})$ with $\frac{1}{2\gamma}\geq 1$. Since the distributions $\frac{1}{\gamma}\sharp\mathbb{P}$ and $\mathbb{Q}$ are Gaussians and $\Sigma_{\mathbb{Q}}\preceq\Sigma_{\frac{1}{\gamma}\sharp\mathbb{P}}$,  we conclude that $\Proj_{\preceq\mathbb{Q}}(\frac{1}{\gamma}\sharp\mathbb{P})=\mathbb{Q}$ \citep[Corollary 2.1]{gozlan2020mixture}. Next, we use our Lemma \ref{lemma-is-bar-optimal} and derive that the OT plan is unique, deterministic and equals the barycentric projection. The latter is the OT map between $\mathbb{P}$ and $\mathbb{Q}$ for the quadratic cost. It is given by  $\nabla\phi^{*}(x)=2x$ \citep[Theorem 2.3]{alvarez2016fixed}. In Figure \ref{fig:gamma-0.5} (when $\gamma=\frac{1}{2}$), we have $\widehat{T}_{x}(z)=\overline{T}(x)\approx 2x=\nabla\phi^{*}(x)$ and $\widehat{T}\sharp(\mathbb{P}\!\times\!\mathbb{S})\approx\mathbb{Q}$. Thus, NOT \underline{\textit{correctly learns}} the (unique and deterministic) OT plan.

\vspace{-0.5mm}\textbf{Bad case}. When $\gamma>\frac{1}{2}$, we have $\frac{1}{\gamma}\sharp\mathbb{P}=\mathcal{N}(0,[\frac{1}{2\gamma}]^2 I_{2})$ with $\frac{1}{2\gamma}<1$. Since $\frac{1}{\gamma}\sharp\mathbb{P}$ and $\mathbb{Q}$ are Gaussians and $\Sigma_{\mathbb{Q}}\succeq\Sigma_{\frac{1}{\gamma}\sharp\mathbb{P}}$, we conclude that $\frac{1}{\gamma}\mathbb{P}\preceq\mathbb{Q}$ (recall \wasyparagraph\ref{sec-background-ot}). Thus, $\Proj_{\preceq\mathbb{Q}}(\frac{1}{\gamma}\sharp\mathbb{P})=\frac{1}{\gamma}\sharp\mathbb{P}\neq \mathbb{Q}$ by definition of the projection \eqref{conv-proj}.
The optimal barycentric projection is the OT map between Gaussians $\mathbb{P}$ and $\frac{1}{\gamma}\sharp\mathbb{P}$ for the quadratic cost. It is given by $\nabla\phi^{*}(x)=\frac{1}{\gamma} x$ \citep[Theorem 2.3]{alvarez2016fixed}. In Figures \ref{fig:gamma-0.75} and \ref{fig:gamma-1}, we see that the learned  $\overline{T}(x)\approx\frac{1}{\gamma} x$, i.e., $\widehat{T}$ captures the conditional expectation of $\pi^{*}(y|x)$ shared by all OT plans $\pi^{*}$ (\wasyparagraph\ref{sec-background-ot}). However, $\widehat{T}\sharp(\mathbb{P}\!\times\!\mathbb{S})\neq\mathbb{Q}$ and NOT \underline{\textit{fails to learn}} an OT plan.

\vspace{-1mm}Importantly, we found that when $\gamma>\frac{1}{2}$ (Figures \ref{fig:gamma-0.75}, \ref{fig:gamma-1}), the transport map $\hat{T}$ extremely \textit{fluctuates} during the optimization rather than converges to a solution. In Figure \ref{fig:toy-fluctuation},  we visualize the evolution of $\widehat{T}$ during training (for $\gamma=1$). In all the cases, the barycentric projection $\overline{T}(x)\approx x=\nabla\phi^{*}(x)$ is almost correct. However, the "remaining" part of $\widehat{T}$ is literally \underline{random}.
To explain the behavior, we integrate $\nabla\phi^{*}(x)=x$ and get that $\phi^{*}(x)=\frac{1}{2}\|x\|^{2}$ is an optimal restricted potential.
We derive $\psi(y)=\overline{\phi^{*}}(y)-\frac{1}{2}\|y\|^{2}=\frac{1}{2}\|y\|^{2}-\frac{1}{2}\|y\|^{2}\equiv 0\Longrightarrow U_{\psi}(y)=U_{0}(y)\equiv\mathbb{R}^{D}\mbox{ for every }y\in\mathbb{R}^{D}.$
From our Theorem \ref{lemma-characterization} it follows that
$T^{\dagger}\!\in\! \arginf_{T}\mathcal{L}(f^{*},T) \Leftrightarrow \int_{\mathcal{Z}} T_{x}^{\dagger}(z)d\mathbb{S}(z)=\nabla\phi^{*}(x)=x$ holds $\mathbb{P}$-almost everywhere.
Thus, a function $T^{\dagger}$ recovered from \eqref{L-def} may be literally any function which captures the first conditional moment of a plan  $\pi^{*}(y|x)$. This agrees with our practical observations.
\begin{figure}[!t]
\hspace{-6mm}\begin{subfigure}[b]{0.277\linewidth}
\centering
\includegraphics[width=\linewidth]{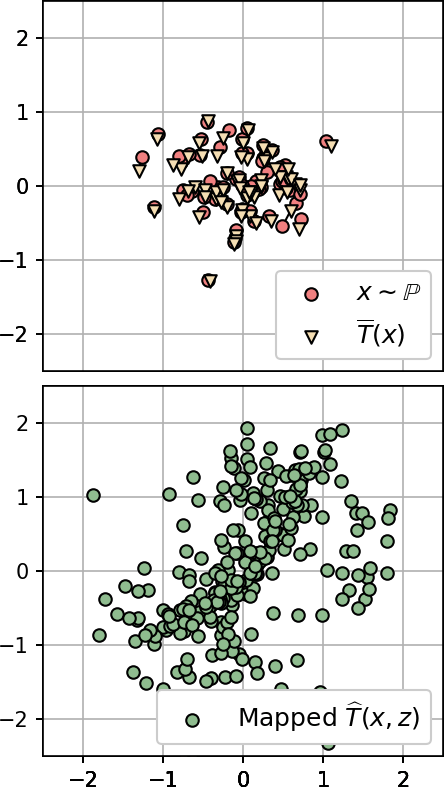}
\vspace{-2mm}\caption{\centering Iteration 10k.}
\label{fig:iter-10000}
\end{subfigure}
\begin{subfigure}[b]{0.25\linewidth}
\centering
\includegraphics[width=\linewidth]{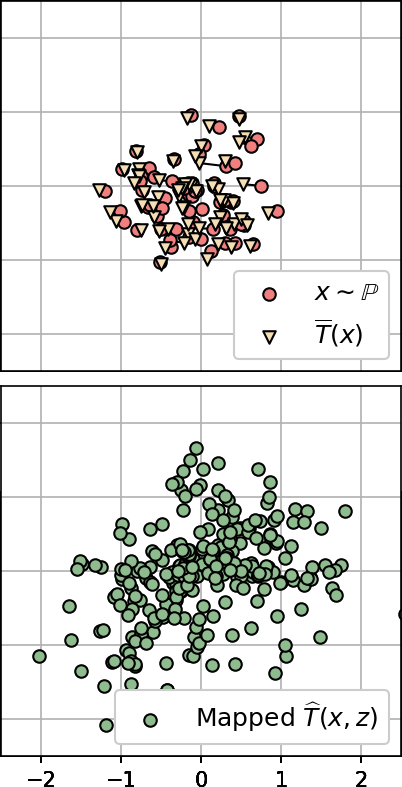}
\vspace{-2mm}\caption{\centering Iteration 10k+100.}
\label{fig:iter-10100}
\end{subfigure}
\begin{subfigure}[b]{0.25\linewidth}
\centering
\includegraphics[width=\linewidth]{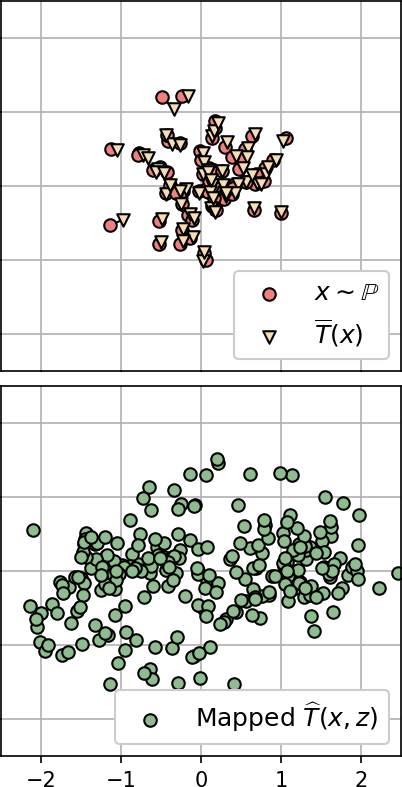}
\vspace{-2mm}\caption{\centering Iteration 10k+200.}
\label{fig:iter-10200}
\end{subfigure}
\begin{subfigure}[b]{0.25\linewidth}
\centering
\includegraphics[width=\linewidth]{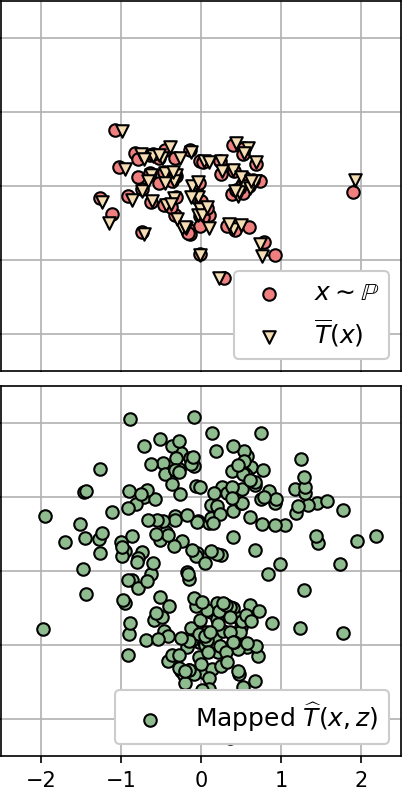}
\vspace{-2mm}\caption{\centering Iteration 10k+300.}
\label{fig:iter-10300}
\end{subfigure}
\caption{\vspace{-2mm}The evolution of the learned transport map $\widehat{T}$ during training on a toy 2D example ($\gamma=1$).}
\label{fig:toy-fluctuation}
\vspace{-4.3mm}
\end{figure}
In Appendix \ref{sec-toy-example-fake-extra}, we give an \underline{additional toy example} illustrating the issue with fake solutions.


\vspace{-0.5mm}Our results show that the $\gamma$-weak quadratic cost $C_{2,\gamma}$ may be not a good choice for NOT algorithm due to fake solutions.
However, prior works on OT \citep{korotin2023neural,rout2022generative,fan2022scalable,gazdieva2022optimal,korotin2022wasserstein} use strong/weak quadratic costs and show promising practical performance. \textit{Should we really care about solutions being fake?}

\vspace{-0.5mm}\textbf{Yes}. First, fake solutions $T^{*}\in\arginf_{T}\mathcal{L}(f^{*},T)$ do not satisfy $T^{*}\sharp(\mathbb{P}\!\times\!\mathbb{S})\neq \mathbb{Q}$, i.e., they are \textit{not distribution preserving} (Lemma \ref{lemma-fake-problem}). Second, our analysis suggests that fake solutions might be one of the \textit{causes} for the training instabilities reported in related works \citep[Appendix D]{korotin2023neural}, \citep[\wasyparagraph 4]{korotin2021neural}: the map $\widehat{T}$ may fluctuate between fake solutions rather than converge. 

\vspace{-2mm}\subsection{Kernel Weak Quadratic Cost Removes Fake Saddle Points}
\label{sec-kernel-w2}
\vspace{-2mm}In this section, we introduce kernel weak quadratic costs which generalize weak quadratic cost \eqref{weak-w2-cost}. We prove that for characteristic kernels the costs completely resolve the ambiguity of $\arginf_{T}$ sets.

Henceforth, we assume that $\mathcal{X}\!=\!\mathcal{Y}\!\subset\! \mathbb{R}^{D}$ are compact sets. Let $\mathcal{H}$ be a Hilbert space (feature space). Let $u:\mathcal{X}\rightarrow\mathcal{H}$ be a function (feature map).
We define the $\gamma$-weak quadratic cost between features:
\begin{equation}
    C_{u,\gamma}(x,\mu)\stackrel{def}{=}\frac{1}{2}\int_{\mathcal{Y}}\|u(x)-u(y)\|^{2}_{\mathcal{H}}d\mu(y)-\frac{\gamma}{2}\cdot\bigg[\frac{1}{2}\int_{\mathcal{Y}\times\mathcal{Y}}\hspace{-3mm}\|u(y)-u(y')\|_{\mathcal{H}}^{2}d\mu(y)d\mu(y')\bigg].
    \label{u-cost}
    \vspace{-1mm}
\end{equation}

\vspace{-2mm}We denote the PDS kernel $k:\mathcal{Y}\times\mathcal{Y}\rightarrow\mathbb{R}$ with the feature map $u$ by $k(y,y')\stackrel{def}{=}\langle u(y),u(y')\rangle_{\mathcal{H}}$. Cost \eqref{u-cost} can be computed without knowing the map $u$, i.e., it is enough to know the PDS kernel $k$. By using $\|u(y)-u(y')\|_{\mathcal{H}}^{2}=k(y,y)-2k(y,y')+k(y',y')$, we obtain the equivalent form of \eqref{u-cost}:
\begin{equation}
    \hspace{-1mm}C_{k,\gamma}(x,\mu)\!\stackrel{def}{=}\!\frac{1}{2}k(x,x)+\frac{1\!-\!
    \gamma}{2}\!\int_{\mathcal{Y}}\!k(y,y)d\mu(y)-\!
    \int_{\mathcal{Y}}\hspace{-1mm}\!k(x,y)d
    \mu(y)+
    \frac{\gamma}{2}\int_{\mathcal{Y}\times \mathcal{Y}}\hspace{-4mm}k(y,y')d\mu(y)d\mu(y').\hspace{-1mm}
    \label{k-cost}
    \vspace{-1mm}
\end{equation}
We call \eqref{u-cost} and \eqref{k-cost} the $\gamma$-\textit{weak kernel cost}. The $\gamma$-weak quadratic cost \eqref{weak-w2-cost} is its particular case for $\mathcal{H}=\mathbb{R}^{D}$ and $u(x)=x$. The respective kernel $k(x,y)=\langle u(x),u(y)\rangle=\langle x,y\rangle$ is bilinear.
\begin{lemma}[Weak kernel costs are appropriate] Let $k$ be a continuous PDS kernel and $\gamma\in [0,1]$. Then the cost $C_{k,\gamma}(x,\mu)$ is convex, lower semi-continuous and lower bounded in $\mu$.
\label{lemma-appropriate-k}
\vspace{-2mm}
\end{lemma}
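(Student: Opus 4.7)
The plan is to exploit the Hilbert-space bias--variance identity
\begin{equation*}
\int_{\mathcal{Y}}\|u(x)-u(y)\|_{\mathcal{H}}^{2}\,d\mu(y) \;=\; \|u(x) - u(\mu)\|_{\mathcal{H}}^{2} + V(\mu),
\end{equation*}
where $u(\mu)\stackrel{def}{=}\int_{\mathcal{Y}} u(y)\,d\mu(y)\in\mathcal{H}$ is the mean embedding and $V(\mu)\stackrel{def}{=}\tfrac{1}{2}\int_{\mathcal{Y}\times\mathcal{Y}}\|u(y)-u(y')\|_{\mathcal{H}}^{2}\,d\mu(y)d\mu(y')$ is the feature-space variance of $\mu$ (derived exactly as in \eqref{var-def}). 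Substituting this identity into \eqref{u-cost} yields the equivalent form
\begin{equation*}
C_{u,\gamma}(x,\mu) \;=\; \tfrac{1}{2}\|u(x)-u(\mu)\|_{\mathcal{H}}^{2} \,+\, \tfrac{1-\gamma}{2}V(\mu).
\end{equation*}

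\textbf{Lower boundedness} is then immediate: both summands are non-negative because $\gamma\leq 1$ makes the coefficient $(1-\gamma)/2\geq 0$, so $C_{k,\gamma}(x,\mu)\geq 0$. For \textbf{convexity in $\mu$}, the above decomposition displays the cost as convex plus concave ($V$ is concave, as shown below), which is inconclusive; instead I would keep the original form $C_{u,\gamma}(x,\mu) = \int\tfrac{1}{2}\|u(x)-u(y)\|^{2}d\mu(y) - \tfrac{\gamma}{2}V(\mu)$. The first summand is linear in $\mu$, and the algebraic identity $V(\mu) = \int\|u(y)\|^{2}d\mu(y) - \|u(\mu)\|_{\mathcal{H}}^{2}$ exhibits $V$ as a linear functional of $\mu$ minus the convex functional $\mu\mapsto\|u(\mu)\|_{\mathcal{H}}^{2}$ (convex squared-norm composed with the linear mean embedding), hence $V$ is concave. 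Since $\gamma\geq 0$, $-\tfrac{\gamma}{2}V(\mu)$ is convex, and $C_{u,\gamma}(x,\cdot)$ is a sum of a linear and a convex functional, hence convex. Both endpoints of $\gamma\in[0,1]$ are thus genuinely used: $\gamma\geq 0$ for convexity and $\gamma\leq 1$ for lower boundedness.

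For \textbf{lower semi-continuity} (in the weak topology of $\mathcal{P}(\mathcal{Y})$) I would in fact establish full continuity. The identity $\|u(y)-u(y')\|_{\mathcal{H}}^{2}=k(y,y)-2k(y,y')+k(y',y')$ and continuity of $k$ yield norm-continuity of the feature map $u:\mathcal{Y}\to\mathcal{H}$, and compactness of $\mathcal{Y}$ then makes $u$ bounded and every integrand appearing in \eqref{k-cost} continuous and bounded on $\mathcal{Y}$ or on $\mathcal{Y}\times\mathcal{Y}$. The three single-integral terms are therefore weakly continuous in $\mu$ by the portmanteau theorem, and the double-integral term is continuous because $\mu_{n}\to\mu$ weakly on the compact $\mathcal{Y}$ implies $\mu_{n}\otimes\mu_{n}\to\mu\otimes\mu$ weakly (a standard product-measure fact). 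Summing the four terms in \eqref{k-cost} gives continuity of $\mu\mapsto C_{k,\gamma}(x,\mu)$, which is strictly stronger than l.s.c. The main obstacle is really only the sign bookkeeping flagged above --- none of the three claims requires a delicate estimate, just the right algebraic rewriting of the same cost.
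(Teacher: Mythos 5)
Your proof is correct, and the structure closely follows the paper's: the lower bound uses the same bias--variance rewriting
$C_{u,\gamma}(x,\mu)=\tfrac{1}{2}\|u(x)-u(\mu)\|_{\mathcal{H}}^{2}+\tfrac{1-\gamma}{2}V(\mu)$ with both terms nonnegative for $\gamma\le 1$, and convexity uses exactly the same decomposition (linear first term, $-\tfrac{\gamma}{2}\Var(u\sharp\mu)$ convex since the variance is concave). You spell out the concavity of $V$ via the identity $V(\mu)=\int\|u(y)\|^{2}d\mu(y)-\|u(\mu)\|_{\mathcal{H}}^{2}$, which the paper leaves implicit by simply invoking concavity of $\Var$; your version is more explicit but not materially different.

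The one place you take a genuinely different route is lower semi-continuity. The paper cites a generic integral-functional l.s.c. result \citep[Lemma 7.3]{santambrogio2015optimal} together with compactness and continuity of $k$, without explicitly addressing the double-integral (variance) term. You instead prove the stronger statement of full weak continuity directly: continuity of $k$ gives a continuous, hence bounded (by compactness), feature map $u$, so the three single-integral terms in \eqref{k-cost} are continuous in $\mu$ by the definition of weak convergence, and the double-integral term is continuous because weak convergence $\mu_{n}\to\mu$ on a compact metric space yields $\mu_{n}\otimes\mu_{n}\to\mu\otimes\mu$ weakly. This is a self-contained argument that explicitly handles the quadratic-in-$\mu$ variance term, which the paper's citation does not directly cover; the trade-off is that it leans on a standard but unreferenced fact about weak convergence of product measures. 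Either route is fine; yours is slightly more complete on the variance term and establishes continuity rather than mere l.s.c.
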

\begin{figure}[!t]
\vspace{-1mm}\hspace{-5.5mm}\begin{subfigure}[b]{0.51\linewidth}
\centering
\includegraphics[width=0.995\linewidth]{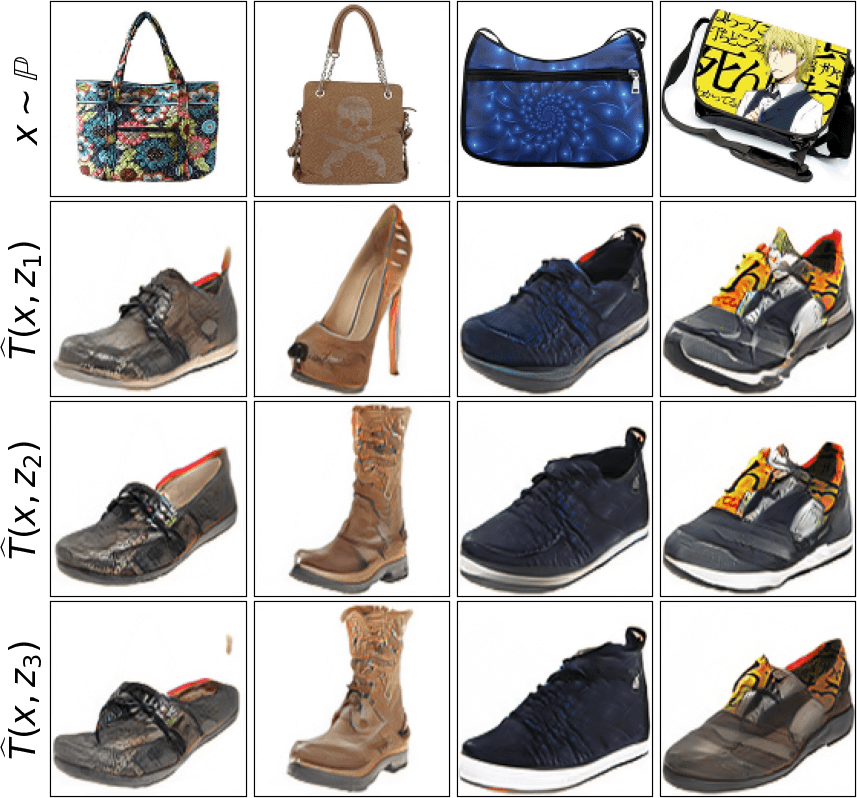}
\vspace{-2mm}\caption{\centering Handbag $\rightarrow$ shoes, $128\times 128$.}
\label{fig:handbag-shoes-128}
\end{subfigure}\hspace{2mm}
\begin{subfigure}[b]{0.51\linewidth}
\centering
\includegraphics[width=0.995\linewidth]{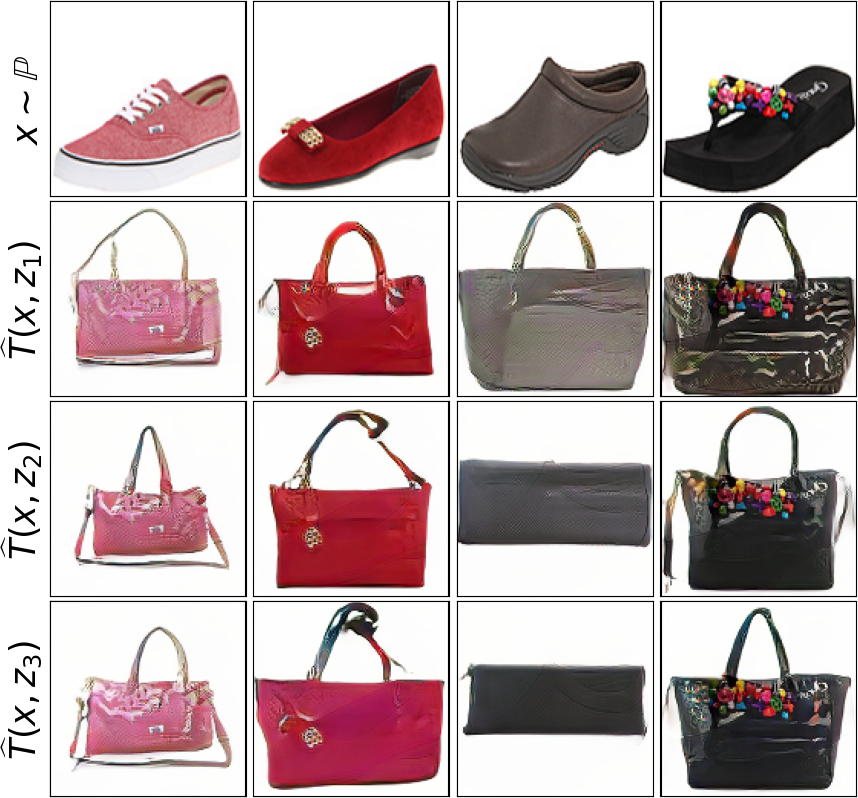}
\vspace{-2mm}\caption{\centering Shoes $\rightarrow$ handbags, $128\times 128$.}
\label{fig:shoes-handbag-128}
\end{subfigure}
\caption{Unpaired one-to-many translation with kernel Neural Optimal Transport (NOT).}
\label{fig:handbags-shoes}\vspace{-5mm}
\end{figure}
\begin{corollary}[Existence and duality for kernel costs]
Let $k$ be a continuous PDS kernel and $\gamma\in [0,1]$. Then an OT plan $\pi^{*}$ for cost $C_{k,\gamma}(x,\mu)$ exists and duality formulas \eqref{ot-weak-dual} and \eqref{L-def} hold true.
\vspace{-2mm}
\end{corollary}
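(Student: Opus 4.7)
The plan is to reduce the statement to the previous Lemma together with the general existence/duality results for weak OT that were quoted in Section 2. First, by the preceding Lemma, whenever $k$ is a continuous PDS kernel and $\gamma\in[0,1]$, the cost $C_{k,\gamma}(x,\mu)$ is convex, lower semi-continuous and lower bounded in $\mu$. In the terminology of Section 2 (following \citep{backhoff2019existence}), this means $C_{k,\gamma}$ is \emph{appropriate}. Hence the existence statement of \citep[\wasyparagraph 1.3.1]{backhoff2019existence}, which was already invoked in Section 2 for generic appropriate costs, applies directly: a minimizer $\pi^{*}\in\Pi(\mathbb{P},\mathbb{Q})$ of the weak OT problem \eqref{ot-primal-form-weak} with cost $C_{k,\gamma}$ exists.

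For the first duality formula \eqref{ot-weak-dual}, I would simply invoke \citep[\wasyparagraph 1.3.2]{backhoff2019existence} to appropriate costs; its hypotheses are satisfied since compactness of $\mathcal{X}=\mathcal{Y}\subset\mathbb{R}^{D}$ together with continuity of $k$ yield continuity and boundedness of $C_{k,\gamma}(x,\mu)$ jointly in $(x,\mu)$ (with $\mathcal{P}(\mathcal{Y})$ equipped with weak convergence), which in particular implies that the $C_{k,\gamma}$-transform is well defined and that the class of admissible potentials $f$ (upper-bounded, continuous, not rapidly decreasing) is rich enough to saturate the dual.

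For the NOT-style dual \eqref{L-def}, I would repeat the argument of \citep[Lemma 4]{korotin2023neural}: once \eqref{ot-weak-dual} is known, one rewrites $f^{C_{k,\gamma}}(x)=\inf_{\mu}\{C_{k,\gamma}(x,\mu)-\int f\,d\mu\}$ and replaces the infimum over $\mu\in\mathcal{P}(\mathcal{Y})$ by an infimum over measurable $T:\mathcal{X}\times\mathcal{Z}\to\mathcal{Y}$ with $\mu=T_{x}\sharp\mathbb{S}$. This substitution is without loss of generality because $\mathbb{S}$ is atomless on the Polish space $\mathcal{Z}$, so every $\mu\in\mathcal{P}(\mathcal{Y})$ is of the form $T_{x}\sharp\mathbb{S}$ for some measurable $T_{x}$; measurability in $x$ of a selector can be arranged by a standard measurable selection argument. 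This yields \eqref{L-def}.

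The main obstacle is the verification that the hypotheses of the Backhoff--Veraguas duality actually hold in our setting, i.e.\ that $C_{k,\gamma}$ is jointly continuous and suitably bounded on $\mathcal{X}\times\mathcal{P}(\mathcal{Y})$. Compactness of $\mathcal{X}=\mathcal{Y}$ plus continuity of $k$ give uniform bounds on all kernel terms appearing in \eqref{k-cost}, and weak continuity of $\mu\mapsto\int k(\cdot,\cdot)\,d\mu$ (and of the double integral, by a product-measure/Portmanteau argument) yields the required continuity; everything else is bookkeeping.
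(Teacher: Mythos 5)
Your proof is correct and reproduces the argument the paper relies on implicitly; the paper gives no explicit proof of this corollary, treating it as an immediate consequence of the preceding lemma on appropriateness of $C_{k,\gamma}$ together with the existence and duality results already quoted in Section 2 (Backhoff et al.\ \S1.3.1--1.3.2 for existence and \eqref{ot-weak-dual}, and Korotin et al.\ Lemma 4 for the rewriting as \eqref{L-def}). Your additional verification of joint continuity and boundedness of $C_{k,\gamma}$ is more than the paper's definition of \emph{appropriate} (lower bounded, lower semicontinuous and convex in $\mu$) strictly requires, but it is harmless.
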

We focus on characteristic kernels $k$ and show that they resolve the ambiguity in $\arginf_{T}$ sets.

\begin{lemma}[Uniqueness of the optimal plan for characteristic kernel costs] Let $k$ be a characteristic PDS kernel and $\gamma\in (0,1]$. Then the OT plan $\pi^{*}$ for cost $C_{k,\gamma}(x,\mu)$ is unique.
\label{lemma-unique-plan}
\end{lemma}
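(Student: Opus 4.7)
The plan is to prove uniqueness by showing strict convexity of the transport functional along mixtures of plans, using the characteristic property to go from equality of kernel mean embeddings to equality of the conditional distributions themselves.

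First I would rewrite the total cost in terms of the feature map. Let $u:\mathcal{Y}\rightarrow\mathcal{H}$ be a feature map with $k(y,y')=\langle u(y),u(y')\rangle_{\mathcal{H}}$, and recall the mean embedding $u(\mu)=\int_{\mathcal{Y}} u(y)\,d\mu(y)$. Using the identity $\tfrac12\int\int\|u(y)-u(y')\|_{\mathcal{H}}^2 d\mu(y)d\mu(y')=\int\|u(y)\|_{\mathcal{H}}^2 d\mu(y)-\|u(\mu)\|_{\mathcal{H}}^2$, direct expansion of \eqref{u-cost} gives
\begin{equation*}
C_{k,\gamma}(x,\mu)=\tfrac12\|u(x)\|_{\mathcal{H}}^2-\langle u(x),u(\mu)\rangle_{\mathcal{H}}+\tfrac{1-\gamma}{2}\!\int\!\|u(y)\|_{\mathcal{H}}^2 d\mu(y)+\tfrac{\gamma}{2}\|u(\mu)\|_{\mathcal{H}}^2.
\end{equation*}
Integrating against $d\mathbb{P}(x)$ for a plan $\pi\in\Pi(\mathbb{P},\mathbb{Q})$, the first and third summands become constants depending only on $\mathbb{P}$ and $\mathbb{Q}$, so
\begin{equation*}
\mathcal{F}(\pi)\stackrel{\text{def}}{=}\int C_{k,\gamma}(x,\pi(\cdot|x))d\mathbb{P}(x)=\mathrm{const}-\!\int k(x,y)d\pi(x,y)+\tfrac{\gamma}{2}\!\int\|u(\pi(\cdot|x))\|_{\mathcal{H}}^2 d\mathbb{P}(x).
\end{equation*}

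Next I would suppose $\pi_1,\pi_2\in\Pi(\mathbb{P},\mathbb{Q})$ are both optimal and consider $\pi_t=(1-t)\pi_1+t\pi_2\in\Pi(\mathbb{P},\mathbb{Q})$ for $t\in[0,1]$. Since both plans share the $\mathcal{X}$-marginal $\mathbb{P}$, their regular conditionals satisfy $\pi_t(\cdot|x)=(1-t)\pi_1(\cdot|x)+t\pi_2(\cdot|x)$ for $\mathbb{P}$-a.e.\ $x$, and by linearity of the kernel mean embedding $u(\pi_t(\cdot|x))=(1-t)u(\pi_1(\cdot|x))+tu(\pi_2(\cdot|x))$. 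The linear-in-$\pi$ piece of $F(t):=\mathcal{F}(\pi_t)$ contributes nothing to the second derivative, and expanding $\|u(\pi_t(\cdot|x))\|_{\mathcal{H}}^2$ as a quadratic polynomial in $t$ yields
\begin{equation*}
F''(t)=\gamma\int_{\mathcal{X}}\|u(\pi_1(\cdot|x))-u(\pi_2(\cdot|x))\|_{\mathcal{H}}^2\,d\mathbb{P}(x)\geq 0.
\end{equation*}

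Finally, convexity of $\mu\mapsto C_{k,\gamma}(x,\mu)$ (Lemma \ref{lemma-appropriate-k}) makes $F$ convex on $[0,1]$, and optimality of $\pi_1,\pi_2$ gives $F(0)=F(1)=\min F$; combined with convexity this forces $F$ to be constant on $[0,1]$ and hence $F''\equiv 0$. Because $\gamma>0$, it follows that $u(\pi_1(\cdot|x))=u(\pi_2(\cdot|x))$ for $\mathbb{P}$-a.e.\ $x\in\mathcal{X}$. Since $k$ is characteristic, the mean embedding $\mu\mapsto u(\mu)$ is injective on $\mathcal{P}(\mathcal{Y})$, so $\pi_1(\cdot|x)=\pi_2(\cdot|x)$ for $\mathbb{P}$-a.e.\ $x$, and $\pi_1=\pi_2$. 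The main step to be careful with is checking that the quadratic part really contributes $F''(t)=\gamma\int\|u(\pi_1(\cdot|x))-u(\pi_2(\cdot|x))\|_{\mathcal{H}}^2 d\mathbb{P}(x)$ rather than something that could degenerate independently of the conditionals; this is precisely where one uses that both plans have the same $\mathcal{X}$-marginal, so the mixture acts affinely on the conditionals and the characteristic hypothesis can be invoked.
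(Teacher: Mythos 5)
Your proof is correct and follows essentially the same route as the paper: both exploit that the characteristic property makes $\mu\mapsto\|u(\mu)\|_{\mathcal{H}}^{2}$ strictly convex (the only nonlinear contribution to the weak OT functional once the linear pieces are isolated) and derive uniqueness of the conditional distributions from that. The paper simply asserts that strict convexity of $C_{k,\gamma}(x,\cdot)$ makes $\pi\mapsto\int_{\mathcal{X}}C_{k,\gamma}\big(x,\pi(\cdot|x)\big)\,d\mathbb{P}(x)$ strictly convex with a unique minimizer, whereas you spell out the step the paper leaves implicit—namely that plans sharing the $\mathcal{X}$-marginal mix affinely in their conditionals—and track it through the explicit second derivative $F''(t)=\gamma\int_{\mathcal{X}}\|u(\pi_1(\cdot|x))-u(\pi_2(\cdot|x))\|_{\mathcal{H}}^{2}\,d\mathbb{P}(x)$.
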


\begin{theorem}[Optimality of stochastic functions in all optimal saddle points]Let $k$ be a continuous characteristic PDS kernel and $\gamma\in (0,1]$. Consider weak OT problem \eqref{ot-primal-form-weak} with cost $C_{k,\gamma}$ and its dual problem \eqref{ot-weak-dual}.
For any optimal potential $f^{*}\in\argsup_{f}\inf_{T}\mathcal{L}(f,T)$ it holds that
\begin{equation}T^{*}\!\in\! \arginf\nolimits_{T}\mathcal{L}(f^{*},T)\Longleftrightarrow 
T^{*}_{x}\sharp\mathbb{S}=\pi^{*}(y|x)\mbox{ holds true }\mathbb{P}\mbox{-almost surely for all }x\in\mathcal{X}\mbox{ },
\end{equation}
i.e., \underline{every} optimal saddle point $(f^{*},T^{*})$ provides a stochastic OT map $T^{*}$.
\label{thm-resolved-kernel}
\vspace{-1mm}
\end{theorem}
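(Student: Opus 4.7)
My plan is to reduce the inner minimization $\inf_T \mathcal{L}(f^*, T)$ to a pointwise (in $x$) minimization over $\mu \in \mathcal{P}(\mathcal{Y})$ of $h(x, \mu) \stackrel{def}{=} C_{k,\gamma}(x, \mu) - \int_{\mathcal{Y}} f^*(y) d\mu(y)$, and then exploit \emph{strict convexity} of $C_{k,\gamma}(x, \cdot)$ — a consequence of $\gamma > 0$ together with $k$ being characteristic — to pin down the unique minimizer. For the strict convexity, I would use the identity $\frac{1}{2}\int\int \|u(y) - u(y')\|^2_{\mathcal{H}} d\mu(y)d\mu(y') = \int \|u(y)\|^2_{\mathcal{H}} d\mu(y) - \|u(\mu)\|^2_{\mathcal{H}}$ with $u(\mu)\stackrel{def}{=}\int u(y)d\mu(y)$ to rewrite \eqref{u-cost} as
\[
C_{u,\gamma}(x, \mu) = \tfrac{1}{2}\|u(x)\|^2_{\mathcal{H}} - \langle u(x), u(\mu)\rangle_{\mathcal{H}} + \tfrac{1-\gamma}{2}\int_{\mathcal{Y}} \|u(y)\|^2_{\mathcal{H}} d\mu(y) + \tfrac{\gamma}{2}\|u(\mu)\|^2_{\mathcal{H}}.
\]
The first three terms are affine in $\mu$, while the last is the composition of the linear kernel mean embedding $\mu \mapsto u(\mu)$ with the strictly convex $v \mapsto \frac{\gamma}{2}\|v\|^2_{\mathcal{H}}$. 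Since $k$ is characteristic, $\mu_1 \neq \mu_2$ forces $u(\mu_1) \neq u(\mu_2)$, and strict convexity of $C_{k,\gamma}(x,\cdot)$ on $\mathcal{P}(\mathcal{Y})$ follows (and transfers to $h(x,\cdot)$ since the $f^*$ term is affine).

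The ($\Leftarrow$) direction is immediate from \eqref{T-in-arginf}: if $T^*_x\sharp\mathbb{S} = \pi^*(y|x)$ for $\mathbb{P}$-a.e. $x$, then $T^*$ is a stochastic OT map and hence lies in $\arginf_T \mathcal{L}(f^*, T)$. For the ($\Rightarrow$) direction, since $\int f^* d\mathbb{Q}$ is independent of $T$, minimizing $\mathcal{L}(f^*, T)$ is equivalent to minimizing $\int_{\mathcal{X}} h(x, T_x\sharp\mathbb{S}) d\mathbb{P}(x)$. Because $\mathbb{S}$ is atomless on the Polish space $\mathcal{Z}$ and $\mathcal{Y}$ is Polish, any measurable family $x \mapsto \mu_x \in \mathcal{P}(\mathcal{Y})$ can be realized as $T_x\sharp\mathbb{S}$ for a jointly measurable $T:\mathcal{X}\times\mathcal{Z}\to\mathcal{Y}$ (standard noise-outsourcing / Blackwell--Dubins). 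Consequently the inner infimum factors pointwise and equals $\int_{\mathcal{X}} \inf_{\mu} h(x, \mu) d\mathbb{P}(x) = \int_{\mathcal{X}} f^{*C}(x) d\mathbb{P}(x)$.

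Combining the equality case of dual \eqref{ot-weak-dual} applied to the optimal pair $(\pi^*, f^*)$ with uniqueness of the OT plan (Lemma \ref{lemma-unique-plan}) yields $h(x, \pi^*(\cdot|x)) = f^{*C}(x)$ for $\mathbb{P}$-a.e. $x$, and the strict convexity established above identifies $\pi^*(\cdot|x)$ as the \emph{unique} minimizer of $h(x, \cdot)$. Therefore any $T^* \in \arginf_T \mathcal{L}(f^*, T)$ must satisfy $T^*_x\sharp\mathbb{S} = \pi^*(y|x)$ for $\mathbb{P}$-a.e. $x$, completing the equivalence. The main technical hurdle is the measurable-selection step: rigorously justifying the factorization of $\inf_T$ into $\int \inf_\mu$ and constructing a jointly measurable $T$ attaining a prescribed $x$-indexed family $\mu_x$. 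Atomlessness of $\mathbb{S}$ and Polishness of $\mathcal{Y}$ enter essentially here, but once this construction is in place the remainder of the argument is just the pointwise strict-convexity identity.
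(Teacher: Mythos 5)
Your proposal is correct and follows essentially the same route as the paper's proof: reduce $\inf_T\mathcal{L}(f^*,T)$ to a pointwise (in $x$) minimization over $\mu_x\in\mathcal{P}(\mathcal{Y})$, then invoke strict convexity of $\mu\mapsto C_{k,\gamma}(x,\mu)$ — obtained precisely from the kernel-mean-embedding identity $\Var(u\sharp\mu)=\int\|u(y)\|_{\mathcal H}^2d\mu(y)-\|u(\mu)\|_{\mathcal H}^2$ together with the characteristic property — to conclude the minimizer is the unique $\pi^*(\cdot|x)$. The paper delegates the reduction to distributions to a citation of the NOT paper's \wasyparagraph 4.1 and the strict convexity to the proof of Lemma~\ref{lemma-unique-plan}, while you spell out both (including the noise-outsourcing step and the pointwise equality via the duality-attainment argument); this is more detail, not a different argument.
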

Bilinear kernel $k(x,y)=\langle x,y\rangle$ is not characteristic and is not covered by our Theorem \ref{thm-resolved-kernel}; its respective $\gamma$-weak quadratic cost $C_{2,\gamma}$ suffers from fake solutions (\wasyparagraph\ref{sec-issues-w2}). In the next subsection, we give examples of practically interesting kernels $k(x,y)$ which are ideologically similar to the bilinear but are characteristic. Consequently, their respective costs $C_{k}$ do not have ambiguity in $\arginf_{T}$ sets.

\vspace{-2mm}\subsection{Practical Aspects of Learning with Kernel Costs}
\label{sec-practical-aspects}

\vspace{-1mm}\textbf{Optimization.} To learn the stochastic OT map $T^{*}$ for kernel cost \eqref{k-cost}, we use NOT's training procedure \citep[Algorithm 1]{korotin2023neural}. It requires stochastic estimation of $C_{k,\gamma}(x,T_{x}\sharp\mathbb{S})$ to compute the corresponding term in \eqref{L-def}. Similar to the $\gamma$-weak quadratic cost \citep[Equation 23]{korotin2023neural}, it is possible to derive the following \textit{unbiased} Monte-Carlo estimator $\widehat{C}_{k,\gamma}$ for $x\in\mathcal{X}$ and a batch $Z\sim\mathbb{S}$ ($|Z|\geq 2$):\vspace{-2mm}
\begin{eqnarray}
    \widehat{C}_{k,\gamma}\big(x,T_{x}(Z)\big)=\frac{1}{2}k(x,x)+\frac{1\!-\!
    \gamma}{2|Z|}\sum_{z\in Z}k\big(T_{x}(z),T_{x}(z)\big)-
    \nonumber
    \\
    \frac{1}{|Z|}\sum_{z\in Z}k(x,T_{x}(z))+
    \frac{\gamma}{2|Z|(|Z|-1)}\sum_{z\neq z'}k\big(T_{x}(z),T_{x}(z')\big)\approx C_{k,\gamma}(x,T_{x}\sharp\mathbb{S}).
    \label{kernel-estimator}
    \vspace{-1mm}
\end{eqnarray}
The time complexity of estimator \eqref{kernel-estimator} is $O(|Z|^{2})$ since it requires considering pairs $z,z'$ in batch to estimate the variance term. Specifically for the bilinear kernel $k(y,y')=\langle y,y'\rangle$, the variance can be estimated in $O(|Z|)$ operations \citep[Equation 23]{korotin2023neural}, but NOT algorithm may suffer from fake solutions (\wasyparagraph\ref{sec-issues-w2}).

\begin{figure}[!t]
\vspace{-1mm}\hspace{-5.5mm}\begin{subfigure}[b]{0.51\linewidth}
\centering
\includegraphics[width=0.995\linewidth]{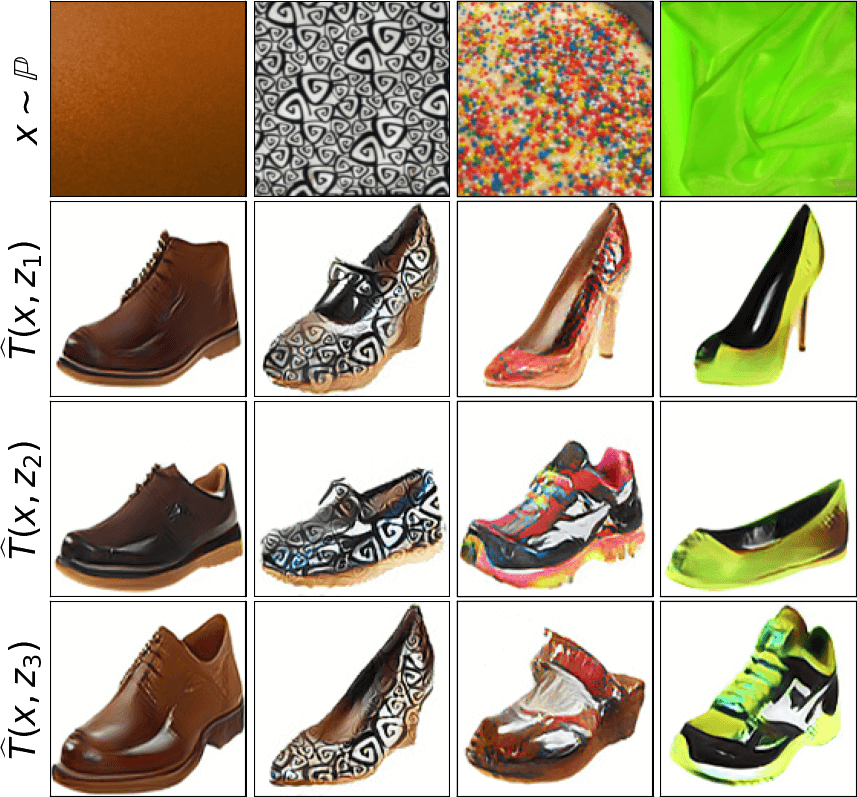}
\vspace{-2mm}\caption{\centering Texture $\rightarrow$ shoes, $128\times 128$.}
\label{fig:dtd-shoes-128}
\end{subfigure}\hspace{2mm}
\begin{subfigure}[b]{0.51\linewidth}
\centering
\includegraphics[width=0.995\linewidth]{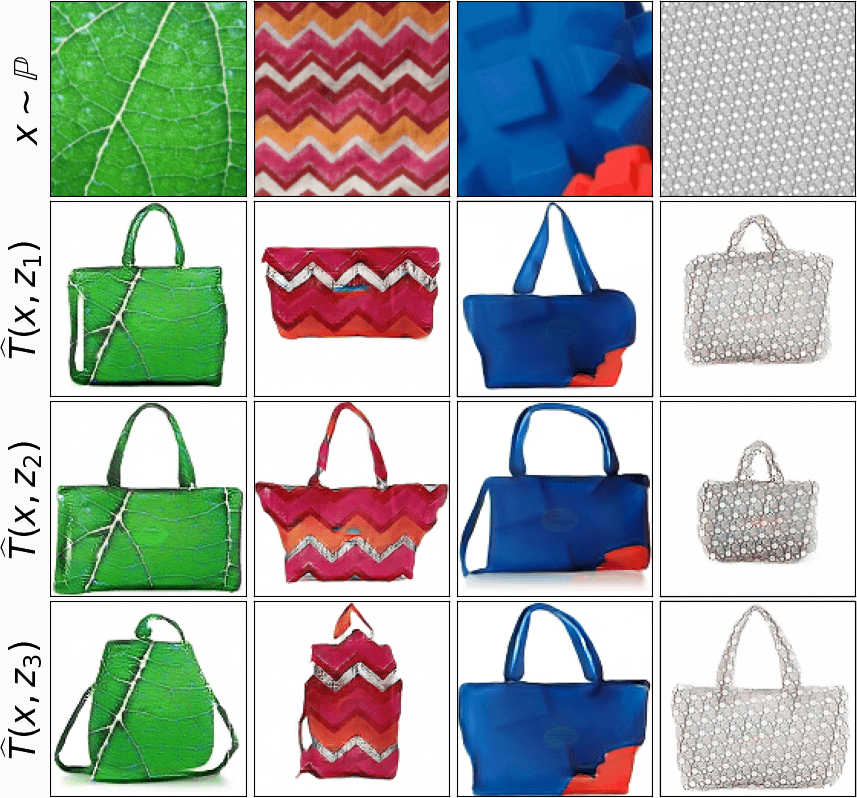}
\vspace{-2mm}\caption{\centering Texture $\rightarrow$ handbags, $128\times 128$.}
\label{fig:dtd-handbag-128}
\end{subfigure}
\caption{Unpaired one-to-many translation with kernel Neural Optimal Transport (NOT).}\label{fig:textures}\vspace{-5mm}
\end{figure}
\textbf{Kernels.} Consider the family of distance-induced kernels ${k(x,y)\!=\!\frac{1}{2}\|x\|^{\alpha}\!+\!\frac{1}{2}\|y\|^{\alpha}\!-\!\frac{1}{2}\|x-y\|^{\alpha}}$. For these kernels, we have $\|u(x)-u(x')\|_{\mathcal{H}}^{2}=\|x-x'\|^{\alpha}$, i.e., \eqref{u-cost}, \eqref{k-cost} can be expressed as
\begin{equation}C_{k,\gamma}(x,\mu)=C_{u,\gamma}(x,\mu)=\frac{1}{2}\int_{\mathcal{Y}}\|x-y\|^{\alpha}d\mu(y)-\frac{\gamma}{2}\cdot\bigg[\frac{1}{2}\int_{\mathcal{Y}\times\mathcal{Y}}\hspace{-1mm}\|y-y'\|^{\alpha}d\mu(y)d\mu(y')\bigg].
\label{energy-based-cost}
\end{equation}
For $\alpha=2$ the kernel is bilinear, i.e., $k(x,y)=\langle x,y\rangle$; it is PDS but not characteristic and \eqref{energy-based-cost} simply becomes the $\gamma$-weak quadratic cost \eqref{weak-w2-cost}. In the experiments (\wasyparagraph\ref{sec-evaluation}), we focus on the case $\alpha=1$; it yields a PDS and characteristic kernel \citep[Definition 13 \& Proposition 14]{sejdinovic2013equivalence}. 

\vspace{-1mm}\section{Related Work}
\vspace{-0.3mm}In deep learning, \textbf{OT costs} are primarily used as losses to train generative models. Such approaches are called Wasserstein GANs \citep{arjovsky2017towards}; they are \textbf{\underline{not related}} to our paper since they only compute OT costs but not OT plans. Below we discuss methods to compute \textbf{OT plans}.

\textbf{Existing OT solvers.} NOT \citep{korotin2023neural} is \underline{the only} parametric algorithm which is capable of computing OT plans for \underline{\textit{weak}} costs \eqref{ot-primal-form-weak}. Although NOT is generic, the authors tested it only with the $\gamma$-weak quadratic cost \eqref{weak-w2-cost}. The core of NOT is saddle point formulation \eqref{L-def} which subsumes analogs \citep[Eq. 9]{korotin2021neural}, \citep[Eq. 14]{rout2022generative}, \citep[Eq. 11]{fan2022scalable}, \citep[Eq. 11]{henry2019martingale}, \citep[Eq. 10]{gazdieva2022optimal}, \citep[Eq. 7]{korotin2022wasserstein} for \textit{strong} costs \eqref{ot-primal-form}. For the strong quadratic cost, \citep{makkuva2020optimal}, \citep[Eq. 2.2]{taghvaei20192}, \citep[Eq. 10]{korotin2021wasserstein} consider analogous to \eqref{dual-barycentric} formulations restricted to convex potentials; they use Input Convex Neural Networks (ICNNs \citep{amos2017input}) to approximate the potentials. ICNNs are popular in OT \citep{korotin2021continuous,mokrov2021large,huang2020convex,alvarez-melis2022optimizing,bunne2022proximal} but recent studies \citep{korotin2021neural,korotin2022wasserstein,fan2022variational} show that OT algorithms based on them underperform compared to unrestricted formulations such as NOT.

In \citep{genevay2016stochastic,seguy2018large,daniels2021score}, the authors propose neural algorithms for $f$-divergence \textit{regularized} costs \citep{genevay2019entropy}. The first two methods suffer from bias in high dimensions \citep[\wasyparagraph 4.2]{korotin2021neural}.
Algorithm \citep{daniels2021score} alleviates the bias but is not end-to-end and is computationally expensive due to using the Langevin dynamics.

\vspace{-1mm}There also exist GAN-based \citep{goodfellow2014generative} methods
\citep{lu2020large,xie2019scalable,gonzalez2022gan} to learn OT plans (or maps) for \textit{strong} costs. However, they are harder to set up in practice due to the large amount of tunable hyperparameters.

\textbf{Kernels in OT.} In \citep{zhang2019optimal,oh2020novel}, the authors propose a \textbf{strong} kernel $\mathbb{W}_{2}$ distance and an algorithm to approximate the transport map under the Gaussianity assumption on $\mathbb{P},\mathbb{Q}$. In \citep{li2021hilbert}, the authors generalize Sinkhorn divergences \citep{genevay2019sample} to Hilbert spaces. These papers consider discrete OT formulations and data-to-data matching tasks; they do not use neural networks to approximate the OT map.



\vspace{-2mm}\section{Evaluation}
\label{sec-evaluation}
\vspace{-1mm}In Appendix \ref{sec-toy-1D}, we learn OT between \underline{\textit{toy 1D distributions}} and perform comparisons with discrete OT. In Appendix \ref{sec-toy}, we conduct tests on \underline{\textit{toy 2D distributions}}. In this section, we test our algorithm on an \underline{\textit{unpaired image-to-image translation}} task. We perform \underline{\textit{comparison}} with principal translation methods in Appendix \ref{sec-exp-comparison}. The code is written in \texttt{PyTorch} framework and is available at\vspace{-2mm}
\begin{center}
\url{https://github.com/iamalexkorotin/KernelNeuralOptimalTransport}
\end{center}

\vspace{-2mm}\textbf{Image datasets.} We test the following datasets as $\mathbb{P},\mathbb{Q}$: aligned anime faces\footnote{\url{kaggle.com/reitanaka/alignedanimefaces}}, celebrity faces \citep{liu2015faceattributes}, shoes \citep{yu2014fine}, Amazon handbags, churches from LSUN dataset \citep{yu2015lsun}, outdoor images from the MIT places database \citep{zhou2014learning}, describable textures \citep{cimpoi14describing}. The size of datasets varies from 5K to 500K images. 

\textbf{Train-test split.} We pick 90\% of each dataset for unpaired training. The rest 10\% are considered as the test set. All the results presented here are \textit{exclusively} for test images, i.e., \textit{unseen data}.

\textbf{Transport costs.} We focus on the $\gamma$-weak cost for the kernel $k(x,y)=\frac{1}{2}\|x\|+\frac{1}{2}\|y\|-\frac{1}{2}\|x-y\|$. For completeness, we test \underline{\textit{other popular PDS kernels}} in Appendix \ref{sec-different-kernels}.

Other \textbf{training details} (optimizers, architectures, pre-processing, etc.) are given in Appendix \ref{sec-details}.

We learn stochastic OT maps between various pairs of datasets. We rescale images to $128\times 128$ and use $\gamma=\frac{1}{3}$ in the experiments with the kernel cost. Additionally, in Appendix \ref{sec-variance-similarity} we analyse how \textit{\underline{varying parameter $\gamma$}}  affects the diversity of generated samples.
We provide the qualitative results in Figures \ref{fig:anime-church}, \ref{fig:handbags-shoes} and \ref{fig:textures}; \underline{\textit{extra results}} are in Appendix \ref{sec-extra-results}. Thanks to the first term in \eqref{energy-based-cost}, our translation map $\hat{T}_{x}(z)$ tries to minimally change the image content $x$ in the pixel space. At the same time, the second term (kernel variance) in \eqref{energy-based-cost} enforces the map to produce diverse outputs for different $z\sim\mathbb{S}$.

\begin{wraptable}{r}{7.3cm}
\vspace{-1mm}\centering
\scriptsize
\begin{tabular}{|c|c|c|c|}
 \toprule
\makecell{\textbf{Datasets}\\($128\!\times\!128$)} & \makecell{$C_{2,0}$\\(strong) 
} & \makecell{$C_{2,\gamma}$\\ (weak)
} & \makecell{$C_{k,\gamma}$\\(weak)\\ \textbf{Ours}} \\ \midrule
Handbags $\rightarrow$ shoes & 35.7 & 33.9 $\pm$ 0.2 & \textbf{26.7} $\pm$ 0.06  \\ \midrule
Shoes $\rightarrow$ handbags & 39.8 & $-$ & \textbf{29.51} $\pm$ 0.19 \\  \midrule
Outdoor $\rightarrow$ church & 25.5 & 25.97 $\pm$ 0.14 & \textbf{15.16} $\pm$ 0.03 \\ \midrule
Celeba (f) $\rightarrow$ anime & 38.73 & 28.21 $\pm$ 0.12 & \textbf{21.96} $\pm$ 0.07 \\ \bottomrule
\end{tabular}
\caption{\centering Test \textbf{FID}$\downarrow$ of NOT with various costs.}\label{table-fid}\vspace{-5mm}
\end{wraptable}We provide quantitative comparison with NOT with the $\gamma$-weak quadratic cost $C_{2,\gamma}$. We compute FID score \citep{heusel2017gans} between the mapped input test subset and the output test subset (Table \ref{table-fid}). For $C_{2,\gamma}$, we use the pre-trained models provided by the authors of NOT \citep[\wasyparagraph 5]{korotin2023neural}.\footnote{\url{https://github.com/iamalexkorotin/NeuralOptimalTransport}} We observe that FID of NOT with kernel cost $C_{k,\gamma}$ is better than that of NOT with cost $C_{2,\gamma}$. We show qualitative examples in
Appendix \ref{sec-not-extra-comparison}. In Appendix \ref{sec-exp-fluctuation}, we perform a detailed comparison of NOT's \underline{\textit{training stability}} with the weak quadratic and kernel costs.

\vspace{-2mm}\section{Discussion}
\label{sec-discussion}

\vspace{-2mm}\textbf{Potential impact.} Neural OT methods and their usage in generative models constantly advance. We expect our proposed weak kernel quadratic costs to improve applications of OT to unpaired learning. In particular, we hope that our theoretical analysis provides better understanding of the performance. 

\textbf{Limitations (theory).} In our Theorem \ref{thm-resolved-kernel}, we implicitly assume the existence a maximizer $f^{*}$ of dual form \eqref{ot-weak-dual} for kernel costs $C_{k,\gamma}$. Deriving precise conditions for existence of such maximizers is a challenging question. We hope that this issue will be addressed in the future theoretical OT research.

\textbf{Limitations (practice).} Applying kernel costs to domains of different nature (\textit{RGB images} $\rightarrow$ \textit{depth maps}, \textit{infrared images} $\rightarrow$ \textit{RGB images}) is not straightforward as it might require selecting meaningful \textit{shared} features $u$ (or kernel $k$). Studying this question is a promising avenue for the future research.

\textbf{Reproducibility.} We provide the source code for all experiments and release the checkpoints for all models of \wasyparagraph\ref{sec-evaluation}. The details are given in \texttt{README.MD} in the official repository.

\texttt{ACKNOWLEDGEMENTS.} The work was supported by the Analytical center under the RF Government (subsidy agreement 000000D730321P5Q0002, Grant No. 70-2021-00145 02.11.2021).


\bibliography{references}
\bibliographystyle{iclr2023_conference}

\appendix

\clearpage
\section{Toy Experiments in 1D and comparison with Discrete OT}
\label{sec-toy-1D}

\vspace{-2mm}In this section, we learn transport plans between various pairs of toy 1D distributions and compare them with the discrete optimal transport (DOT) considered as the \underline{ground truth}. We use the distance-induced kernel and consider $\gamma\in \{1,10\}$. All the rest training details (fully-connected architectures, optimizers, etc.) match those of \citep[Appendix C]{korotin2023neural}. We consider \textit{Gaussian} $\mathcal{N}(0,1)$ $\rightarrow$ \textit{Mixture of 2 Gaussians} and \textit{Mixture of 3 Gaussians} $\rightarrow$ \textit{Mixture of 2 Gaussians}.

\vspace{-1mm}In Figure \ref{fig:toy-1d}, we visualize the pairs $\mathbb{P},\mathbb{Q}$ (1st and 2nd columns), the plan $\hat{\pi}$ learned by Kernel NOT (3rd column) and the plan $\pi^{*}$ learned by DOT (4th column). To compute DOT, we sample $10^{3}$ random points $x\sim\mathbb{P},y\sim\mathbb{Q}$ and compute a discrete plan by \texttt{ot.optim.cg} solver from Python OT (POT) library \url{https://pythonot.github.io/}. Our learned plan $\hat{\pi}$ and DOT's plan $\pi^{*}$ nearly match. Note also that, as one may expect, with the increase of $\gamma$ from $1$ to $10$, the conditional variance of the plan increases and for \textbf{very} high $\gamma=10$ it becomes similar to the trivial plan $\mathbb{P}\times\mathbb{Q}$. This is analogous to the entropic optimal transport, see, e.g., \citep[Figure 4.2]{peyre2019computational}.
\begin{figure}[!h]
\vspace{-3mm}
\begin{subfigure}[b]{1.04\linewidth}
\centering
\hspace{-7mm}\includegraphics[width=0.995\linewidth]{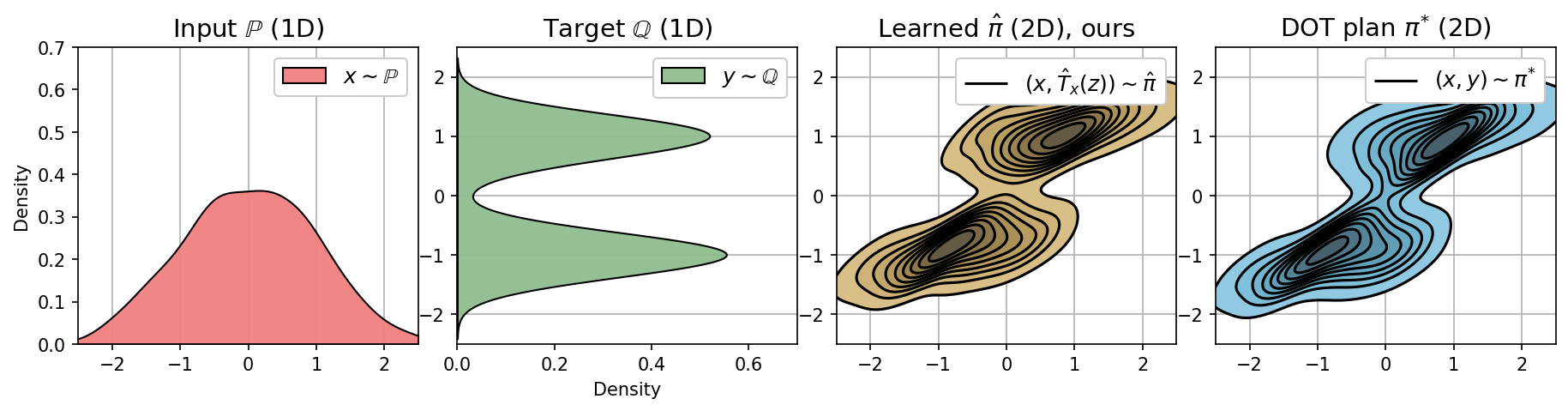}
\caption{\vspace{-1mm}\centering  Gaussian $\mathcal{N}(0,1)$ $\rightarrow$ Mixture of 2 Gaussians, $\gamma=1$.}
\label{fig:toy-1d-1to2-1}
\end{subfigure}
\vspace{-4mm}

\begin{subfigure}[b]{1.04\linewidth}
\centering
\hspace{-7mm}\includegraphics[width=0.995\linewidth]{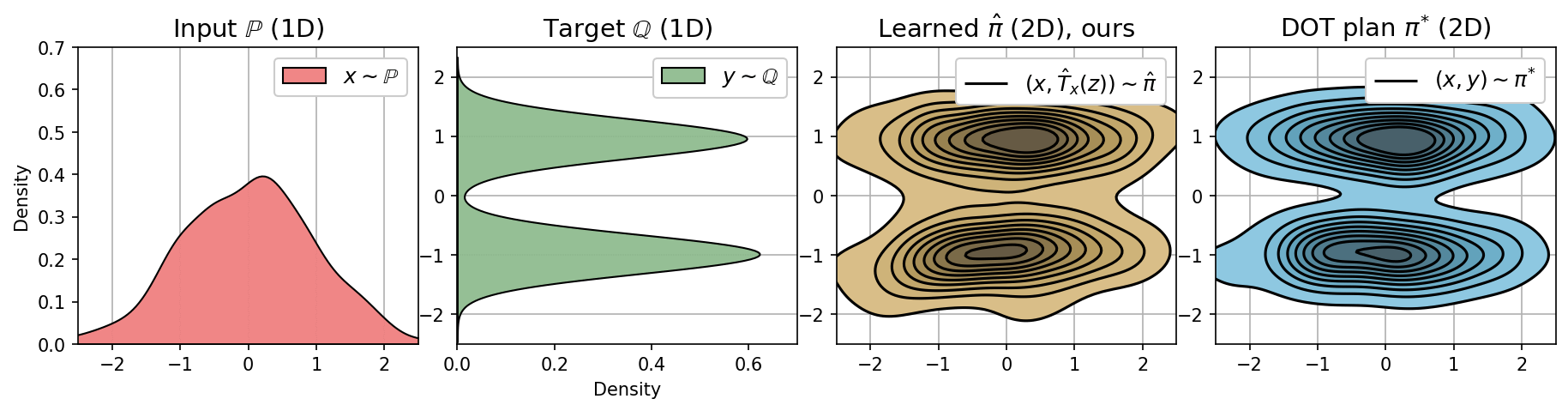}
\caption{\vspace{-1mm}\centering  Gaussian $\mathcal{N}(0,1)$ $\rightarrow$ Mixture of 2 Gaussians, $\gamma=10$.}
\label{fig:toy-1d-1to2-10}
\end{subfigure}
\vspace{-4mm}

\begin{subfigure}[b]{1.04\linewidth}
\centering
\hspace{-7mm}\includegraphics[width=0.995\linewidth]{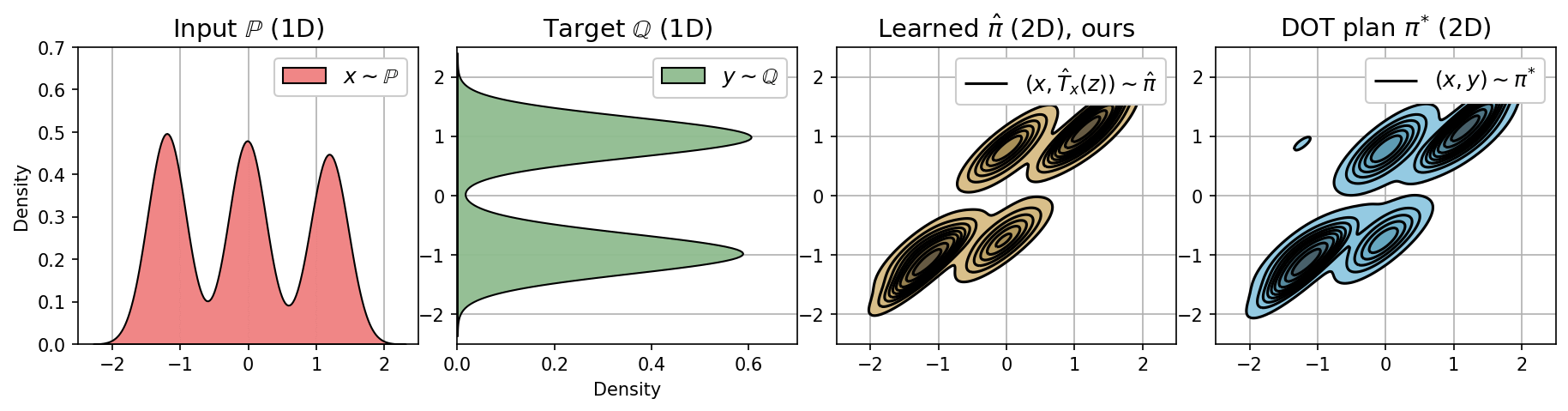}
\caption{\vspace{-1mm}\centering  Mixture of 3 Gaussians $\rightarrow$ Mixture of 2 Gaussians, $\gamma=1$.}
\label{fig:toy-1d-3to2-1}
\end{subfigure}
\vspace{-4mm}

\begin{subfigure}[b]{1.04\linewidth}
\centering
\hspace{-7mm}\includegraphics[width=0.995\linewidth]{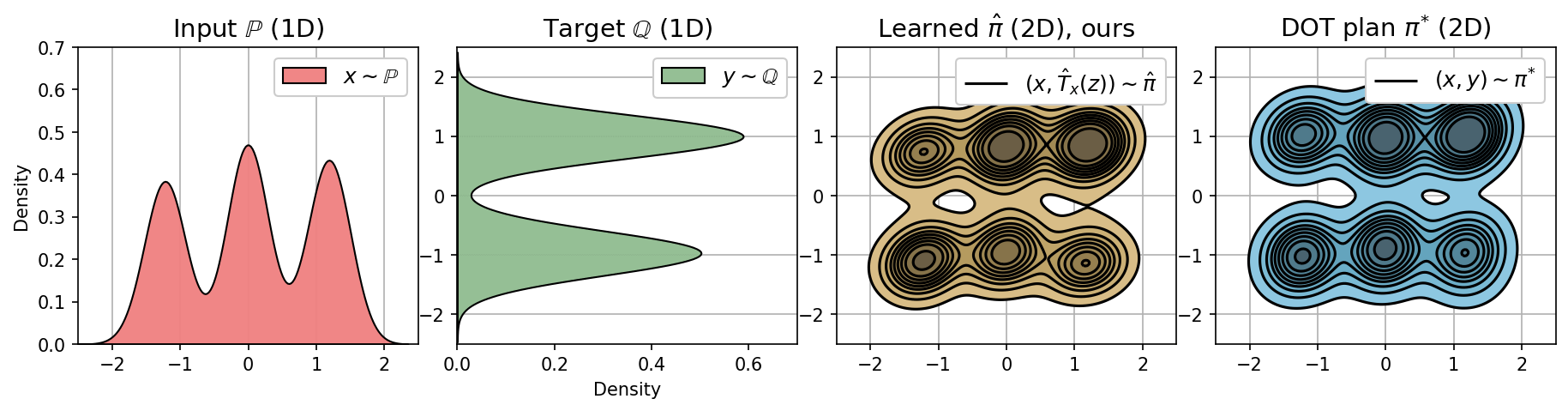}
\caption{\vspace{-3mm}\centering  Mixture of 3 Gaussians $\rightarrow$ Mixture of 2 Gaussians, $\gamma=10$.}
\label{fig:toy-1d-3to2-10}
\end{subfigure}
\caption{Stochastic plans (3rd and 4th columns) between toy 1D distributions (1st and 2nd columns) learned by our Kernel NOT (3rd column) and discrete OT with the weak kernel cost (4th column).}
\label{fig:toy-1d}\vspace{-18mm}
\end{figure}

\clearpage

\section{Toy Experiments in 2D}
\label{sec-toy}

In this section, we learn transport maps between various common pairs of toy 2D distributions. We use the distance-induced kernel and $\gamma=1$. All the rest training details (fully-connected architectures, optimizers, etc.) exactly match those of \citep[Appendix B]{korotin2023neural}. We consider \textit{Gaussian} $\mathcal{N}(0,\frac{1}{2}I_{2})$ $\rightarrow$ \textit{Gaussian} $\mathcal{N}(0,I_{2})$ (the same experiment as in Figures \ref{fig:gamma-1} and \ref{fig:toy-fluctuation}), \textit{Gaussian} $\mathcal{N}(0,[\frac{1}{2}]^2 I_{2})$ $\rightarrow$ \textit{Mixture of 8 Gaussians} and \textit{Gaussian} $\mathcal{N}(0,[\frac{1}{2}]^2 I_{2})$ $\rightarrow$ \textit{Swiss roll} as $\mathbb{P},\mathbb{Q}$ pairs. In Figure \ref{fig:toy-extra}, we provide the learned stochastic (one-to-many) maps. Since the \underline{ground truth OT maps} for kernel costs \underline{are not known}, we provide only \textit{qualitative results}.

\begin{figure}[!h]
\vspace{-1mm}
\begin{subfigure}[b]{1.04\linewidth}
\centering
\hspace{-5mm}\includegraphics[width=0.995\linewidth]{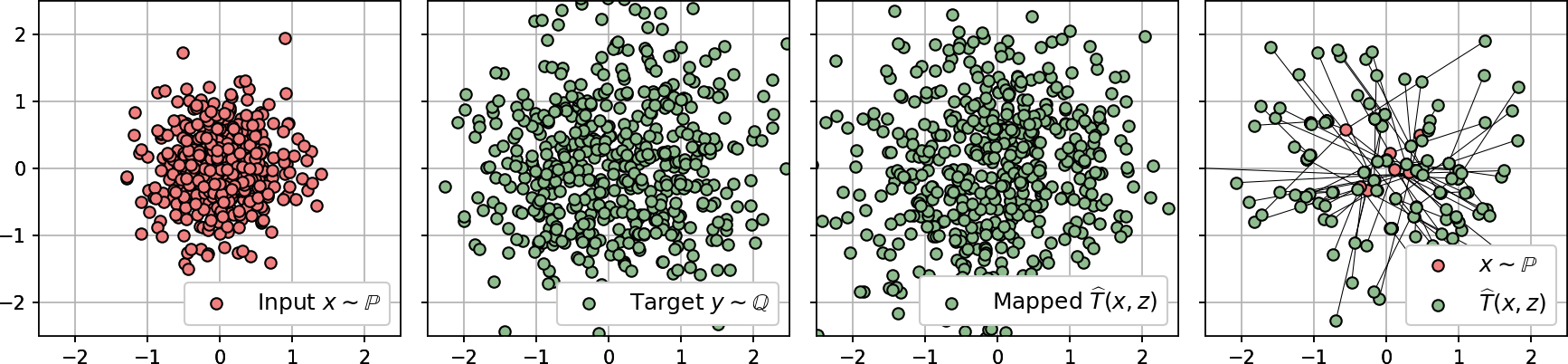}
\caption{\centering Gaussian $\mathcal{N}(0,[\frac{1}{2}]^2 I_{2})$ $\rightarrow$ Gaussian $\mathcal{N}(0,I_{2})$.}
\end{subfigure}
\vspace{-2mm}

\begin{subfigure}[b]{1.04\linewidth}
\centering
\hspace{-6mm}\includegraphics[width=0.995\linewidth]{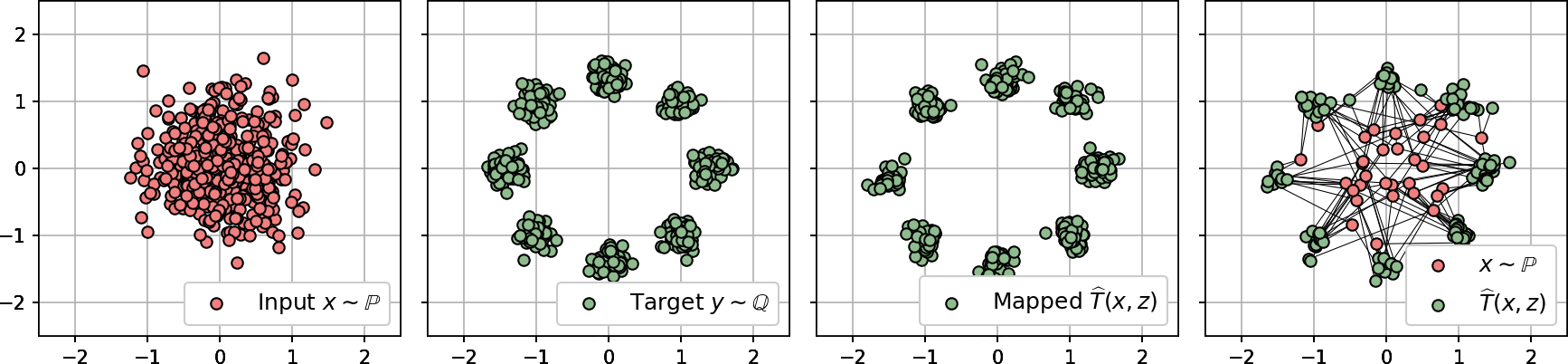}
\caption{\centering  Gaussian $\mathcal{N}(0,[\frac{1}{2}]^2 I_{2})$ $\rightarrow$ Mixture of 8 Gaussians.}
\end{subfigure}

\begin{subfigure}[b]{1.04\linewidth}
\centering
\hspace{-6mm}\includegraphics[width=0.995\linewidth]{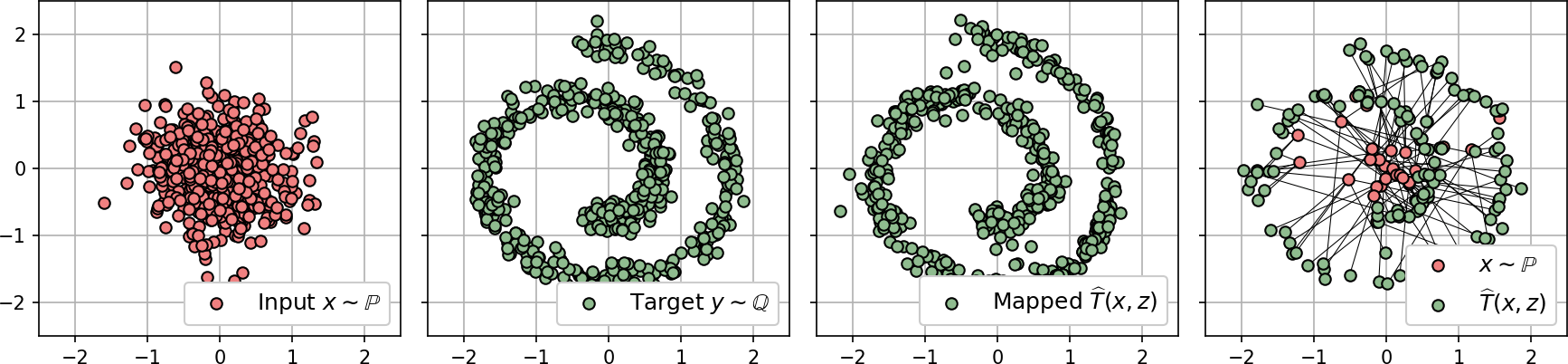}
\caption{\centering  Gaussian $\mathcal{N}(0,\frac{1}{2}I_{2})$ $\rightarrow$ Swiss roll.}
\end{subfigure}
\begin{subfigure}[b]{1.04\linewidth}
\centering
\hspace{-6mm}\includegraphics[width=0.995\linewidth]{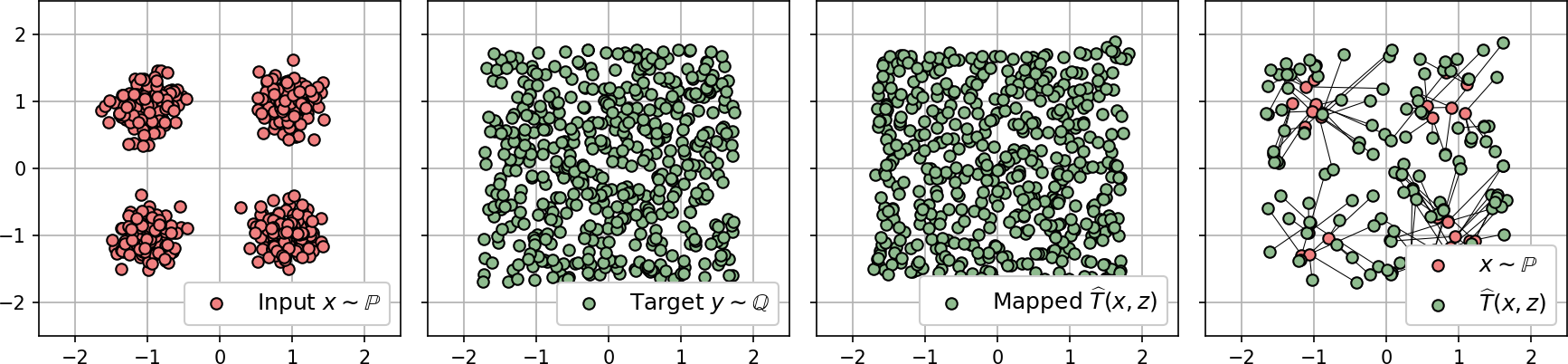}
\caption{\centering  Mixture of 4 Gaussians $\rightarrow$ Uniform.}
\label{fig:toy-4mix-kernel}
\end{subfigure}
\vspace{-4mm}
\caption{Stochastic (one-to-many) maps learned between toy 2D distributions by Kernel NOT.}
\label{fig:toy-extra}
\end{figure}

\clearpage
\section{Additional Toy 2D Example}
\label{sec-toy-example-fake-extra}

In this section, we provide an additional toy 2D example demonstrating the issue with fake solutions for the weak quadratic cost. We consider a mixture of 4 Gaussians as $\mathbb{P}$ and the uniform distribution on a square as $\mathbb{Q}$. We train NOT with the $\gamma$-weak quadratic cost for $\gamma=0,\frac{1}{2},1$ and report the results in Figure \ref{fig:toy-divergence-2}. For $\gamma=0$ (Figure \ref{fig:gamma-0-2}), we see that NOT with the weak quadratic cost\footnote{In this case, the cost is the \textbf{strong} quadratic cost.} learns the target distribution. However, when $\gamma=\frac{1}{2}$ (Figure \ref{fig:gamma-0.5-2}) and $\gamma=1$ (Figure \ref{fig:gamma-1-2}), the method does not converge and yields fake solutions. In addition, in Figure \ref{fig:toy-fluctuation-2}, we show that for $\gamma=1$ the method notably fluctuates between the fake solutions. This is analogous to the toy example with Gaussians in \wasyparagraph\ref{sec-issues-w2}, see Figure \ref{fig:toy-fluctuation}. For completeness, we run NOT with our proposed kernel cost \eqref{energy-based-cost} on this pair $(\mathbb{P},\mathbb{Q})$ and $\gamma=1$ and show that it learns the distribution $\mathbb{Q}$, see Figure \ref{fig:toy-4mix-kernel}.

\begin{figure}[!h]
\hspace{-6mm}\begin{subfigure}[b]{0.2775\linewidth}
\centering
\includegraphics[width=\linewidth]{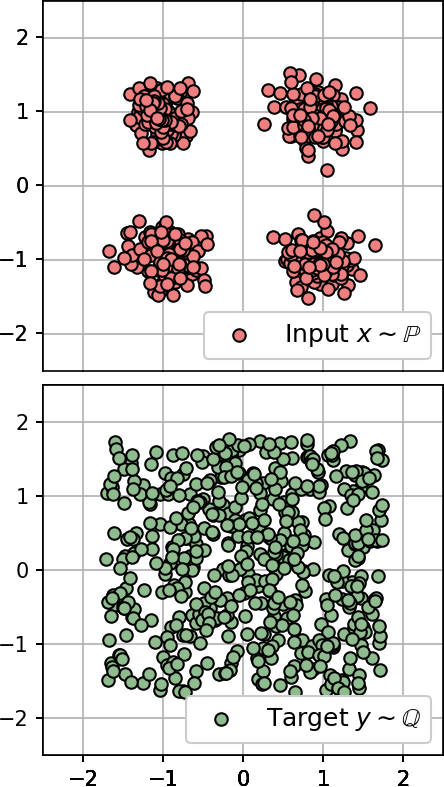}
\vspace{-2mm}\caption{\centering \vspace{0.5mm}Input and target\newline distributions $\mathbb{P}$ and $\mathbb{Q}$.}
\end{subfigure}
\begin{subfigure}[b]{0.25\linewidth}
\centering
\includegraphics[width=\linewidth]{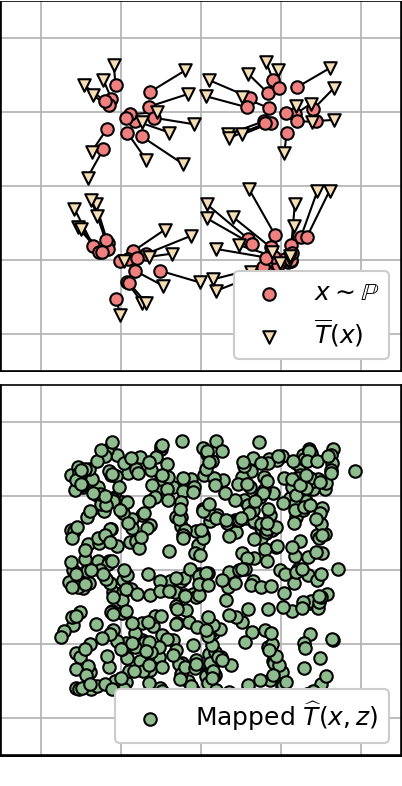}
\vspace{-2mm}\caption{\centering Fitted map $\hat{T}_{x}(z)$\newline for $\gamma=0$.}
\label{fig:gamma-0-2}
\end{subfigure}
\begin{subfigure}[b]{0.25\linewidth}
\centering
\includegraphics[width=\linewidth]{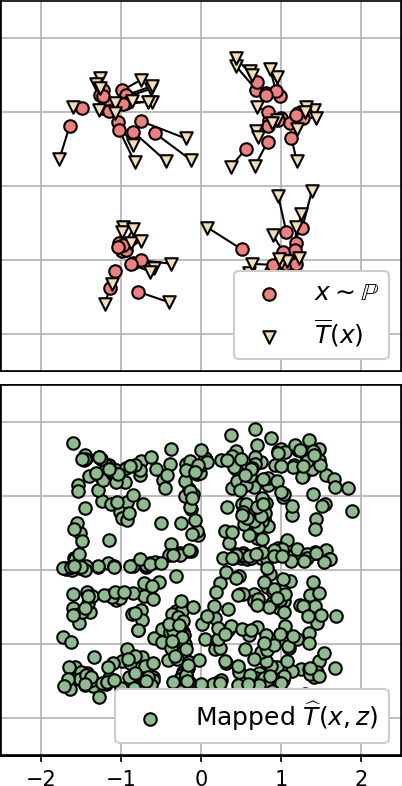}
\vspace{-2mm}\caption{\centering Fitted map $\hat{T}_{x}(z)$\newline for $\gamma=\frac{1}{2}$.}
\label{fig:gamma-0.5-2}
\end{subfigure}
\begin{subfigure}[b]{0.25\linewidth}
\centering
\includegraphics[width=\linewidth]{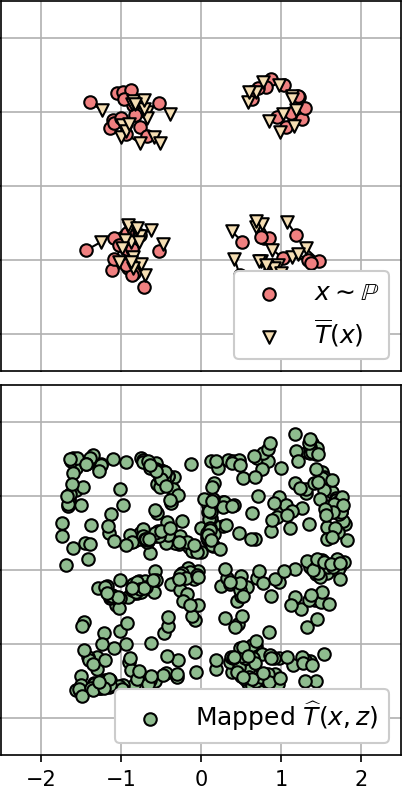}
\vspace{-2mm}\caption{\centering Fitted map $\hat{T}_{x}(z)$\newline for $\gamma=1$.}
\label{fig:gamma-1-2}
\end{subfigure}\vspace{-2mm}
\caption{Stochastic maps $\widehat{T}$ between $\mathbb{P}$ and $\mathbb{Q}$ fitted by NOT with costs $C_{2,\gamma}$ for various $\gamma$.}
\label{fig:toy-divergence-2}\vspace{-4mm}
\end{figure}

\begin{figure}[!h]
\hspace{-6mm}\begin{subfigure}[b]{0.277\linewidth}
\centering
\includegraphics[width=\linewidth]{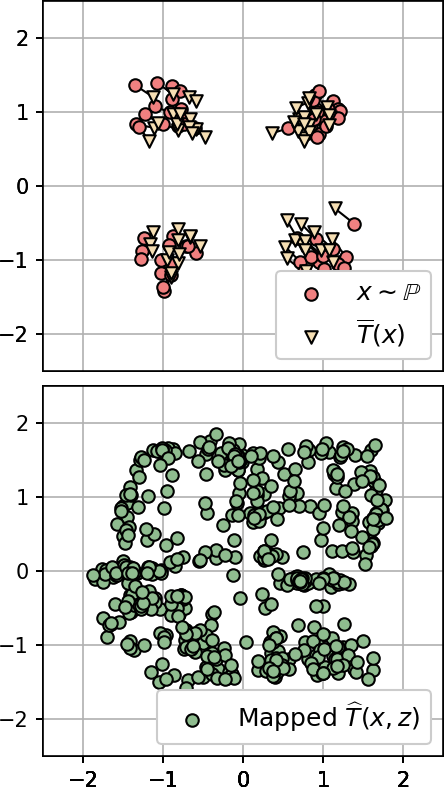}
\vspace{-2mm}\caption{\centering Iteration 10k.}
\label{fig:iter-10000-2}
\end{subfigure}
\begin{subfigure}[b]{0.25\linewidth}
\centering
\includegraphics[width=\linewidth]{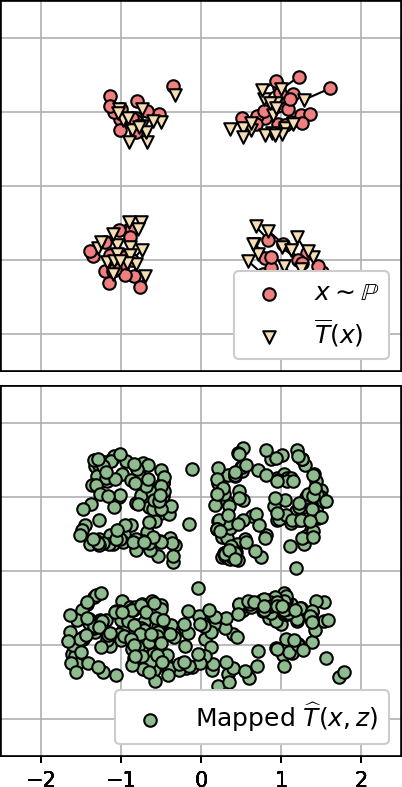}
2\vspace{-2mm}\caption{\centering Iteration 10k+100.}
\label{fig:iter-10100-2}
\end{subfigure}
\begin{subfigure}[b]{0.25\linewidth}
\centering
\includegraphics[width=\linewidth]{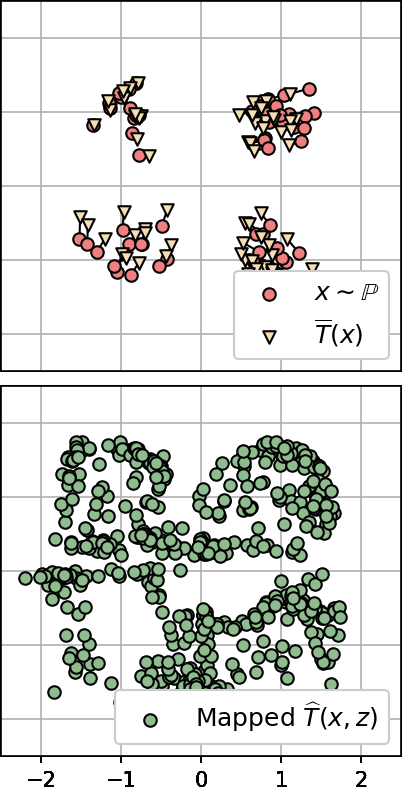}
\vspace{-2mm}\caption{\centering Iteration 10k+200.}
\label{fig:iter-10200-2}
\end{subfigure}
\begin{subfigure}[b]{0.25\linewidth}
\centering
\includegraphics[width=\linewidth]{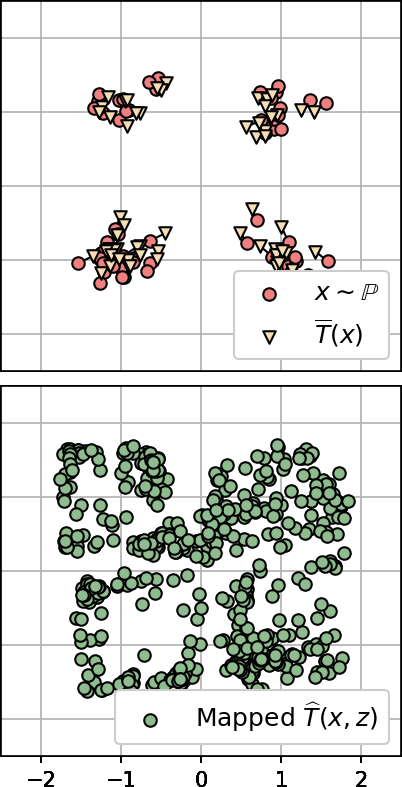}
\vspace{-2mm}\caption{\centering Iteration 10k+300.}
\label{fig:iter-10300-2}
\end{subfigure}
\caption{The evolution of the learned transport map $\widehat{T}$ during training on a toy 2D example ($\gamma=1$).}
\label{fig:toy-fluctuation-2}
\vspace{-4mm}
\end{figure}

\section{Variance-Similarity Trade-off}
\label{sec-variance-similarity}

In this section, we study how parameter $\gamma$ affects the resulting learned transport map. In \citep[Appendix A]{korotin2023neural}, the authors empirically show that for the $\gamma$-weak quadratic cost $C_{2,\gamma}$ the variety of samples $T_{x}(z)$ produced for a fixed $x$ and $z\sim\mathbb{S}$ increases with the increase of $\gamma$, but their similarity to $x$ decreases. We formalize this statement and generalize it for kernel costs $C_{k,\gamma}$.

For a plan $\pi$, we define its (feature) \textit{conditional variance} and (the square of) input-output \textit{distance} by
\begin{equation}\CVar_{u}(\pi)\!\stackrel{def}{=}\!\int_{\mathcal{X}}\!\Var\big(u\sharp\pi(y|x)\big)d\mathbb{\pi}(x)\quad\text{and}\quad\Dist_{u}^{2}(\pi)\!\stackrel{def}{=}\!\int_{\mathcal{X}\times\mathcal{Y}}\hspace{-4mm}\|u(x)-u(y)\|_{\mathcal{H}}^{2}d\pi(x,y),
\label{kernel-var-def}
\end{equation}
respectively. Recall that $\Var\big(u\sharp\pi(y|x)\big)=\int_{\mathcal{Y}\times\mathcal{Y}}\|u(y)-u(y')\|_{\mathcal{H}}^{2}d\pi(y|x)d\pi(y'|x)$. We note that the $\gamma$-weak kernel cost of a plan $\pi\in\Pi(\mathbb{P},\mathbb{Q})$ is given by
\begin{equation}\text{Cost}_{k,\gamma}(\pi)=\frac{1}{2}\Dist_{u}(\pi)-\frac{\gamma}{2}\CVar_{u}(\pi).
\label{cost-through-var-dist}
\end{equation}
Our following proposition explains the behaviour of the above mentioned values for OT plans.

\begin{figure*}[!h]
\begin{subfigure}[b]{0.215\linewidth}
\centering
\includegraphics[width=0.97\linewidth]{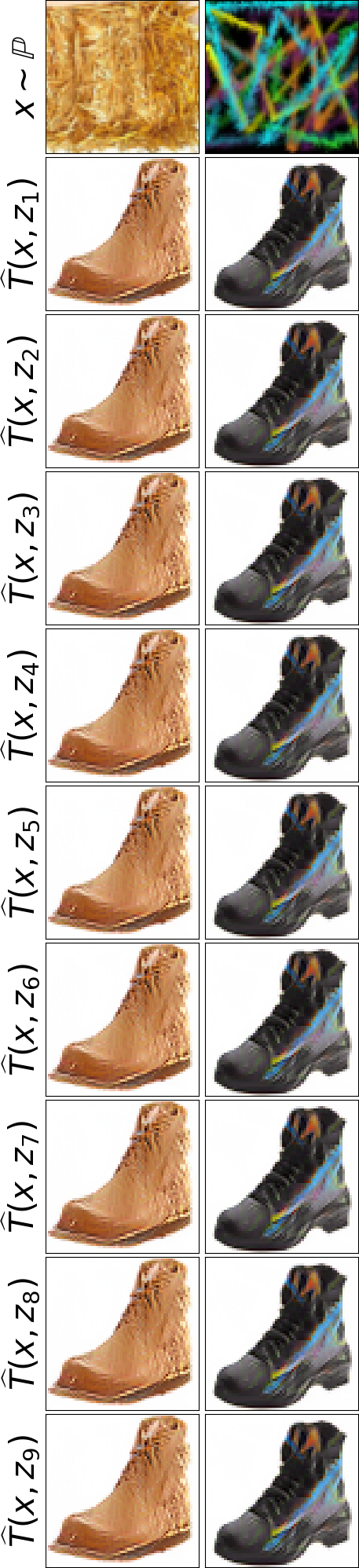}
\caption{$\gamma=0$}
\label{fig:diversity-gamma-0}
\end{subfigure}
\begin{subfigure}[b]{0.1875\linewidth}
\centering
\includegraphics[width=0.97\linewidth]{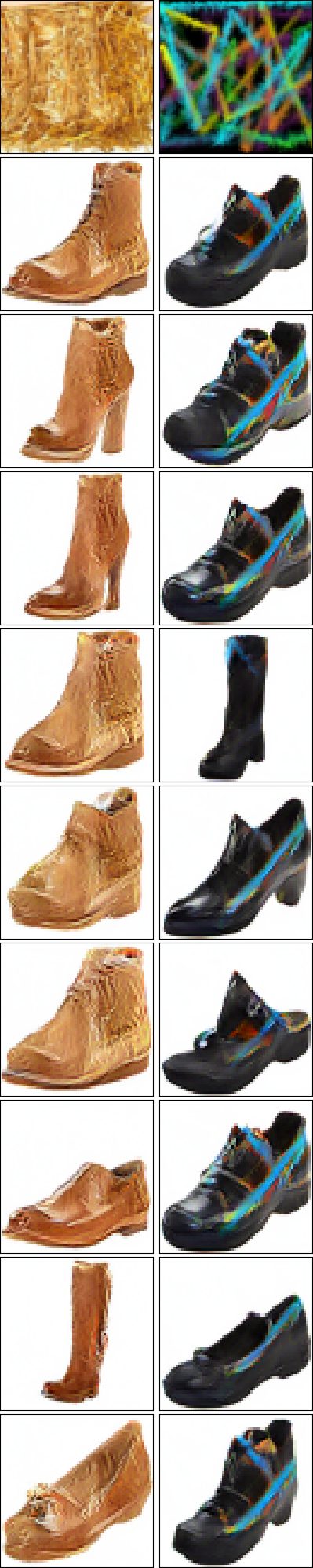}
\caption{$\gamma=\frac{1}{3}$}
\label{fig:diversity-gamma-0.33}
\end{subfigure}
\begin{subfigure}[b]{0.1875\linewidth}
\centering
\includegraphics[width=0.97\linewidth]{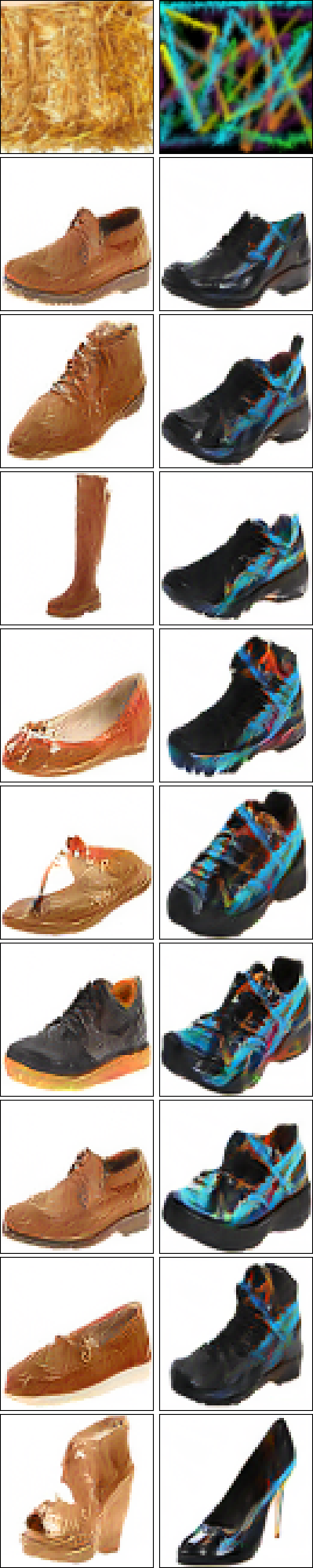}
\caption{$\gamma=\frac{2}{3}$}
\label{fig:diversity-gamma-0.66}
\end{subfigure}
\begin{subfigure}[b]{0.1875\linewidth}
\centering
\includegraphics[width=0.97\linewidth]{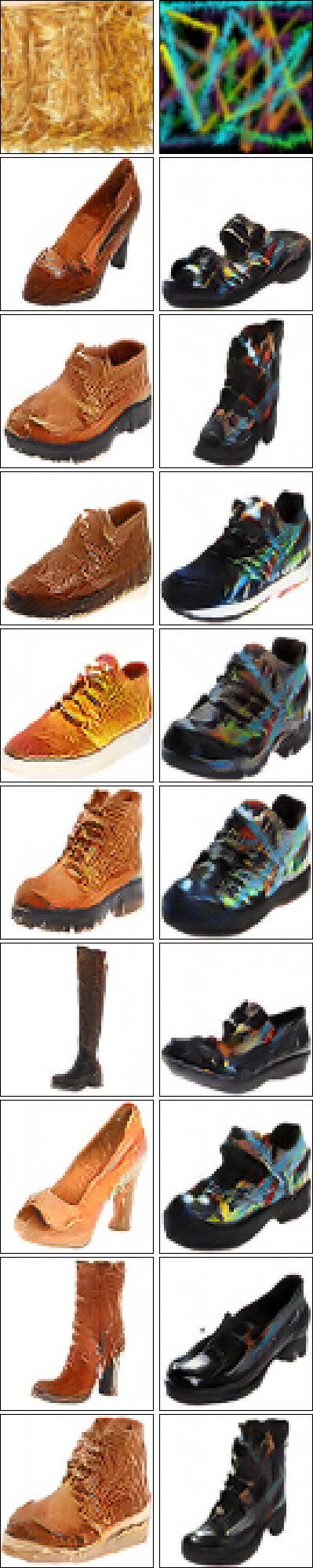}
\caption{$\gamma=1$}
\label{fig:diversity-gamma-1}
\end{subfigure}
\begin{subfigure}[b]{0.1875\linewidth}
\centering
\includegraphics[width=0.97\linewidth]{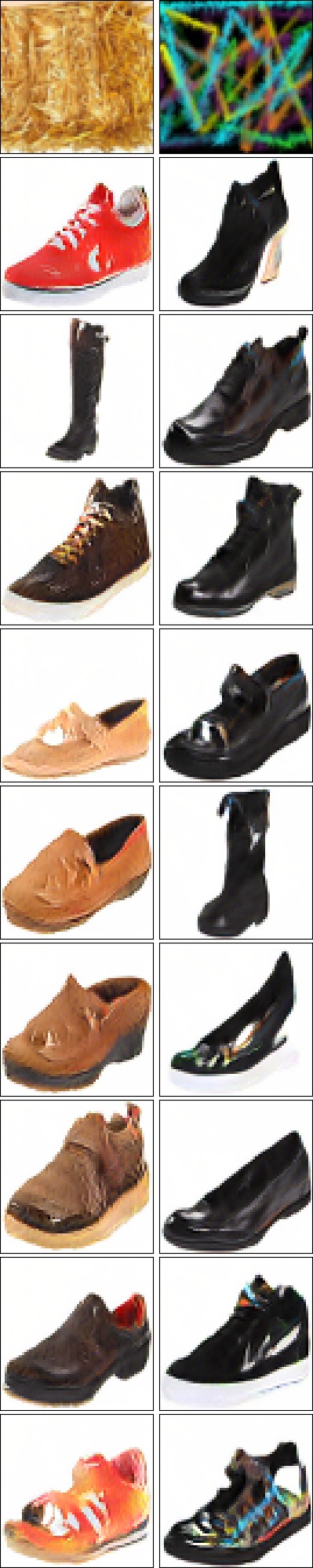}
\caption{$\gamma=\frac{4}{3}$}
\label{fig:diversity-gamma-1.33}
\end{subfigure}
\caption{\textit{Texture} $\rightarrow$ \textit{shoes} ($64\times 64$) translation with the $\gamma$-weak kernel cost for various values $\gamma$.}
\label{fig:variance-similarity}
\end{figure*}
\begin{table}[!h]
\centering
\small
\begin{tabular}{|c|c|c|c|c|c|}
 \toprule
  & $\gamma=0$ & $\gamma=\frac{1}{3}$ & $\gamma=\frac{2}{3}$ & $\gamma=1$ & $\gamma=\frac{4}{3}$ \\
  \midrule
 $\CVar_{u}(\widehat{\pi}_{\gamma})$ & 0.72 & 15.2 & 16.28 & 17.86 & 18.26 \\
  \midrule
$\Dist^{2}_{u}(\widehat{\pi}_{\gamma})$  &  46.6 & 48.13 & 48.75 & 49.87 & 51.24 \\
  \midrule
$\Cost_{k,\gamma}(\widehat{\pi}_{\gamma})$  & 23.3 & 21.56 & 19.0 & 16.01 & 13.48 \\
\bottomrule
\end{tabular}
\vspace{2mm}
\captionsetup{justification=centering}
\caption{The values of $\CVar(\widehat{\pi}_{\gamma})$,  $\Dist(\widehat{\pi}_{\gamma})$ and $\Cost_{k,\gamma}(\widehat{\pi}_{\gamma})$ of learned plans $\widehat{\pi}_{\gamma}\approx \pi_{\gamma}^{*}$ with \\different values of $\gamma$ on \textit{texture} $\rightarrow$ \textit{shoes} ($64\times 64$) translation with the kernel quadratic cost.}\label{table-cvar}\vspace{-7mm}
\end{table}

\begin{proposition}[Behavior of the conditional variance and input-output distance]

Let $\pi_{\gamma}^{*}\in\Pi(\mathbb{P},\mathbb{Q})$ be an OT plan for $\gamma$-weak kernel cost. Then for $\gamma_{2}>\gamma_{1}\geq 0$ it holds true
\begin{equation}
\CVar_{u}(\pi_{\gamma_{1}}^{*})\leq\CVar_{u}(\pi_{\gamma_{2}}^{*})\quad\text{and}\quad \Dist_{u}(\pi_{\gamma_{1}}^{*})\leq\Dist_{u}(\pi_{\gamma_{2}}^{*}),
\label{cvar-sim-behaviour}
\end{equation}
i.e., for larger $\gamma$, the OT plan $\pi^{*}_{\gamma}$ on average for each $x$ yields more conditionally diverse samples $\pi^{*}(y|x)$ but they are less close to $x$ in features w.r.t. $\|\cdot\|_{\mathcal{H}}^{2}$. The OT cost is non-increasing, i.e.,
\label{prop-var-sim}\begin{equation}
    \Cost_{k,\gamma_{1}}(\pi_{\gamma_{1}}^{*})\geq \Cost_{k,\gamma_{1}}(\pi_{\gamma_{2}}^{*}).
    \label{ot-cost-non-increasing}
\end{equation}\end{proposition}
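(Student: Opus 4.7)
The plan is to exploit the optimality of $\pi_{\gamma_1}^{*}$ and $\pi_{\gamma_2}^{*}$ for their respective costs, using the decomposition \eqref{cost-through-var-dist}, namely $\Cost_{k,\gamma}(\pi)=\tfrac{1}{2}\Dist_{u}^{2}(\pi)-\tfrac{\gamma}{2}\CVar_{u}(\pi)$, which makes the dependence on $\gamma$ linear. This is the standard monotone-comparative-statics trick: two optimality inequalities, added together, kill the $\Dist_u^2$ terms and leave a clean statement in $\CVar_u$.

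Concretely, first I would write the two optimality relations
\begin{equation*}
\Cost_{k,\gamma_1}(\pi_{\gamma_1}^{*}) \leq \Cost_{k,\gamma_1}(\pi_{\gamma_2}^{*}), \qquad \Cost_{k,\gamma_2}(\pi_{\gamma_2}^{*}) \leq \Cost_{k,\gamma_2}(\pi_{\gamma_1}^{*}),
\end{equation*}
which hold because $\pi_{\gamma_i}^{*}$ minimizes $\Cost_{k,\gamma_i}$ over $\Pi(\mathbb{P},\mathbb{Q})$. Expanding via \eqref{cost-through-var-dist} and summing, the $\Dist_u^2$ contributions cancel and the surviving terms reduce to
\begin{equation*}
\tfrac{\gamma_2-\gamma_1}{2}\bigl[\CVar_u(\pi_{\gamma_1}^{*}) - \CVar_u(\pi_{\gamma_2}^{*})\bigr] \leq 0,
\end{equation*}
which gives $\CVar_u(\pi_{\gamma_1}^{*})\leq \CVar_u(\pi_{\gamma_2}^{*})$ since $\gamma_2>\gamma_1$.

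Next, I would feed this back into the first optimality inequality alone. Rearranging $\Cost_{k,\gamma_1}(\pi_{\gamma_1}^{*}) \leq \Cost_{k,\gamma_1}(\pi_{\gamma_2}^{*})$ and using \eqref{cost-through-var-dist} yields
\begin{equation*}
\tfrac{1}{2}\bigl[\Dist_u^2(\pi_{\gamma_1}^{*})-\Dist_u^2(\pi_{\gamma_2}^{*})\bigr] \leq \tfrac{\gamma_1}{2}\bigl[\CVar_u(\pi_{\gamma_1}^{*})-\CVar_u(\pi_{\gamma_2}^{*})\bigr] \leq 0,
\end{equation*}
by the variance monotonicity just established (and the assumption $\gamma_1\geq 0$). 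This gives $\Dist_u(\pi_{\gamma_1}^{*})\leq\Dist_u(\pi_{\gamma_2}^{*})$. For the non-increasing OT cost claim (which I read as a typo for $\Cost_{k,\gamma_2}(\pi_{\gamma_2}^{*})\leq\Cost_{k,\gamma_1}(\pi_{\gamma_1}^{*})$), I would chain $\Cost_{k,\gamma_2}(\pi_{\gamma_2}^{*}) \leq \Cost_{k,\gamma_2}(\pi_{\gamma_1}^{*}) = \Cost_{k,\gamma_1}(\pi_{\gamma_1}^{*}) - \tfrac{\gamma_2-\gamma_1}{2}\CVar_u(\pi_{\gamma_1}^{*})$ and observe that $\CVar_u\geq 0$.

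I do not anticipate a genuine obstacle here: the whole argument is a short algebraic consequence of optimality plus the linear-in-$\gamma$ structure of $\Cost_{k,\gamma}$. The only mild subtlety is that $\pi_{\gamma_1}^{*},\pi_{\gamma_2}^{*}$ need to exist, which is already guaranteed by the Corollary following Lemma~\ref{lemma-appropriate-k} for continuous PDS kernels and $\gamma\in[0,1]$; existence for $\gamma_2>1$ is not needed in the regime considered in the paper.
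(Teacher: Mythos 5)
Your proof is correct and takes essentially the same approach as the paper: write the two optimality inequalities for $\pi_{\gamma_1}^*$ and $\pi_{\gamma_2}^*$, expand using the decomposition $\Cost_{k,\gamma}(\pi)=\tfrac12\Dist_u^2(\pi)-\tfrac{\gamma}{2}\CVar_u(\pi)$, and combine. The $\CVar_u$ step is identical to the paper's (sum the two inequalities so $\Dist_u^2$ cancels). The only difference is how you extract the $\Dist_u$ monotonicity: the paper multiplies the two optimality inequalities by $\gamma_2$ and $\gamma_1$ respectively and sums so that the $\CVar_u$ terms cancel, whereas you substitute the already-established $\CVar_u$ monotonicity back into the first optimality inequality to get
\begin{equation*}
\tfrac12\bigl(\Dist_u^2(\pi_{\gamma_1}^*)-\Dist_u^2(\pi_{\gamma_2}^*)\bigr)\le\tfrac{\gamma_1}{2}\bigl(\CVar_u(\pi_{\gamma_1}^*)-\CVar_u(\pi_{\gamma_2}^*)\bigr)\le 0,
\end{equation*}
using $\gamma_1\ge 0$. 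Both are one-line linear manipulations of the same pair of inequalities; yours reuses the first result and is marginally tidier. You also correctly flagged that \eqref{ot-cost-non-increasing} as typeset must be a typo for $\Cost_{k,\gamma_2}(\pi_{\gamma_2}^*)\le\Cost_{k,\gamma_1}(\pi_{\gamma_1}^*)$: as literally written it would contradict the optimality of $\pi_{\gamma_1}^*$ for $\Cost_{k,\gamma_1}$ unless it held with equality, and the paper's own chain $\Cost_{k,\gamma_2}(\pi_{\gamma_2}^*)\le\Cost_{k,\gamma_2}(\pi_{\gamma_1}^*)=\Cost_{k,\gamma_1}(\pi_{\gamma_1}^*)-\tfrac{\gamma_2-\gamma_1}{2}\CVar_u(\pi_{\gamma_1}^*)$ (which also has a flipped sign in its final step and a missing factor of $\tfrac12$) is plainly aimed at exactly the monotonicity-in-$\gamma$ conclusion you restated and proved.
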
The proof is given in Appendix \ref{sec-proofs}. We empirically check the proposition by training OT maps for $C_{k,\gamma}$ for \textit{texture} $\rightarrow$ \textit{shoes} translation ($64\!\times\!64$), distance-induced $k$ (\wasyparagraph\ref{sec-evaluation}) and $\gamma\in\{0,\frac{1}{3},\frac{2}{3},1,\frac{4}{3}\}$, see Figure \ref{fig:variance-similarity} and Table \ref{table-cvar}. We observe the increase of the variety of samples with the increase of $\gamma$. At the same time, with the increase of $\gamma$, output samples become less similar to the inputs.

\vspace{-2mm}\section{Experiments with Different Kernels}
\label{sec-different-kernels}

We empirically test several popular kernels on \textit{texture} $\rightarrow$ \textit{handbag} translation ($64\!\times\!64$), $\gamma=\frac{1}{3}$. We consider bilinear, distance-based, Gaussian and Laplacian kernels. The three latter kernels are characteristic. The quantitative and qualitative results are given in Figure \ref{fig:different-kernels}.

\begin{figure*}[!h]
\begin{subfigure}[b]{0.49\linewidth}
\centering\vspace{-2mm}
\includegraphics[width=0.97\linewidth]{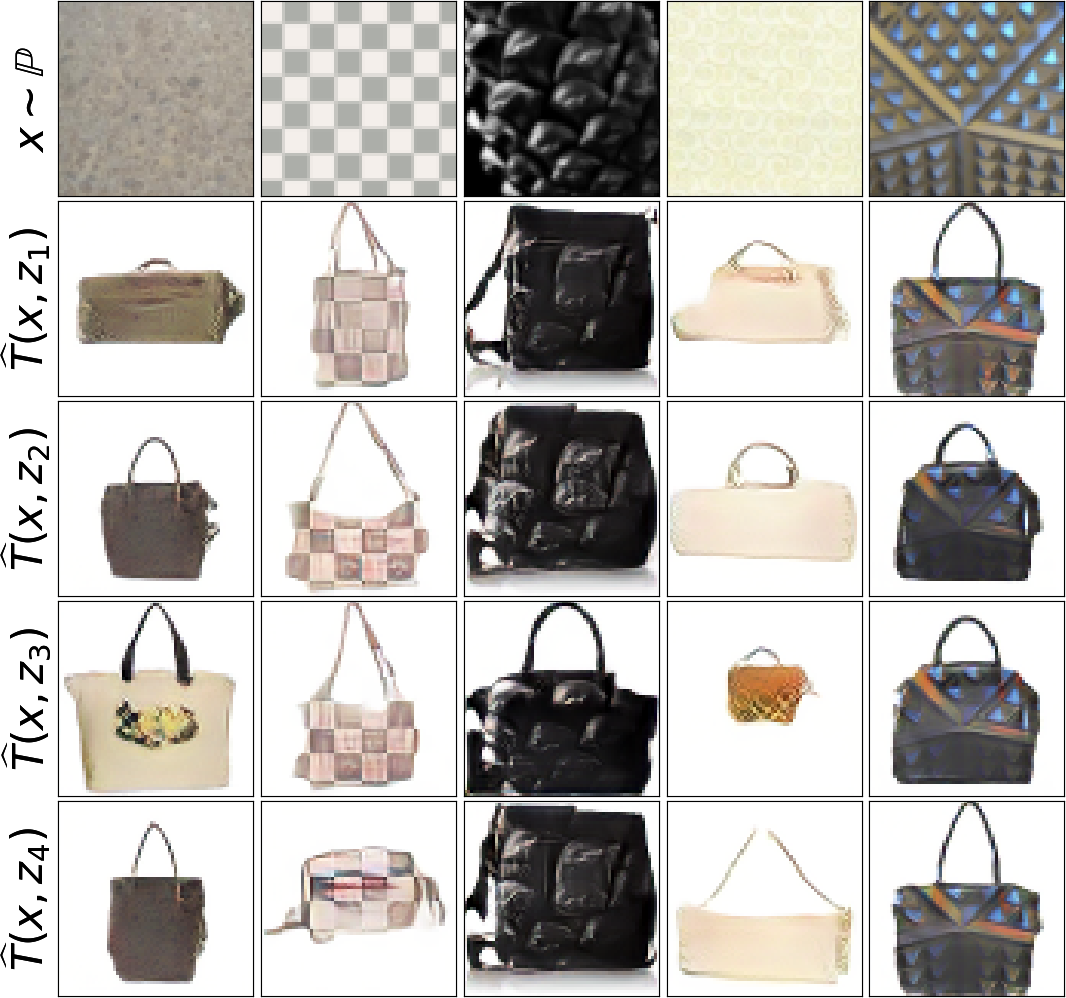}
\caption{\centering Bilinear $k(y,y')=\langle y,y'\rangle$ \citep{korotin2023neural},\newline FID $=18.79\pm 0.08$.}
\label{fig:bilinear-kernel}
\end{subfigure}
\begin{subfigure}[b]{0.49\linewidth}
\centering
\includegraphics[width=0.97\linewidth]{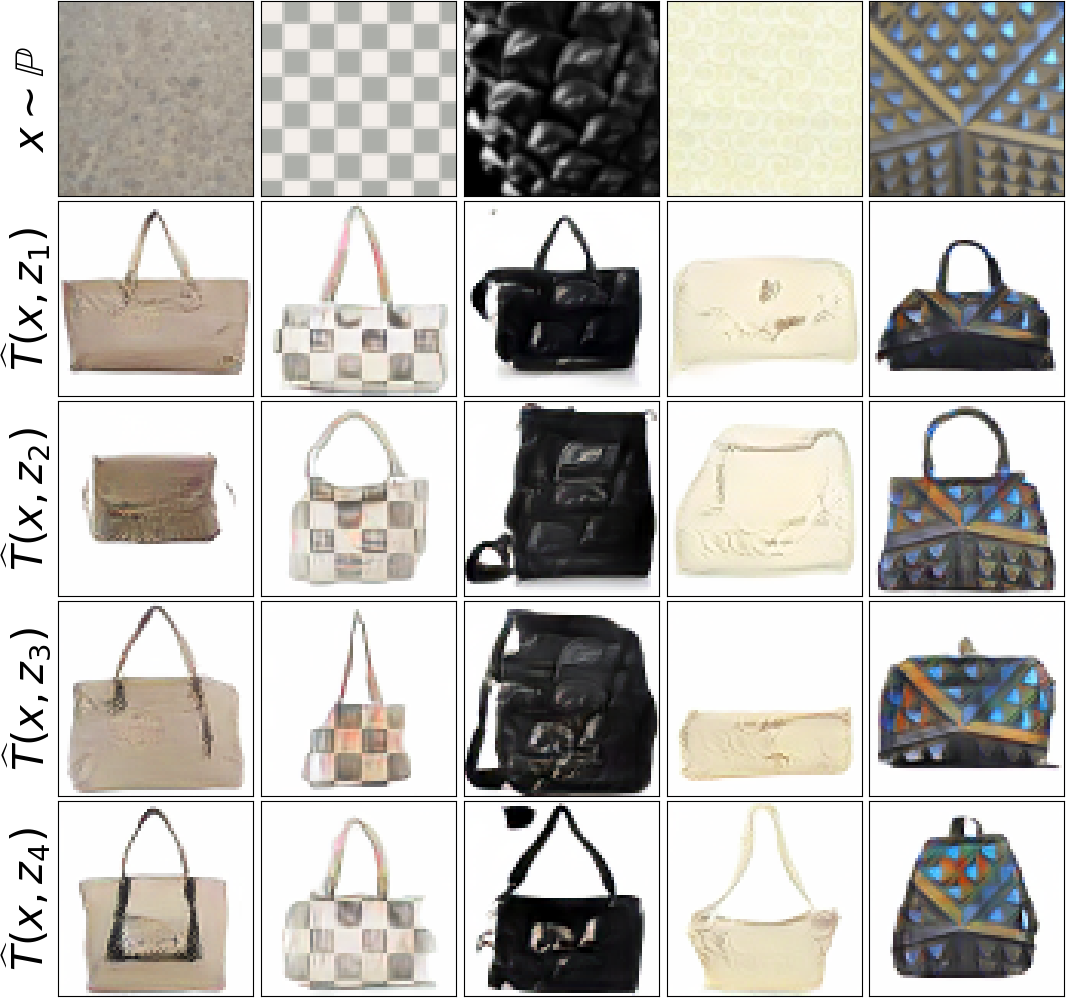}
\caption{\centering Distance $k(y,y')\!=\!\frac{1}{2}\|y\|\!+\!\frac{1}{2}\|y'\|\!-\!\frac{1}{2}\|y'-y\|$,\newline \vspace{1mm}FID $=\textbf{16.13}\pm0.1$.}
\label{fig:energy-kernel}
\end{subfigure}

\vspace{3mm}\begin{subfigure}[b]{0.49\linewidth}
\centering
\includegraphics[width=0.97\linewidth]{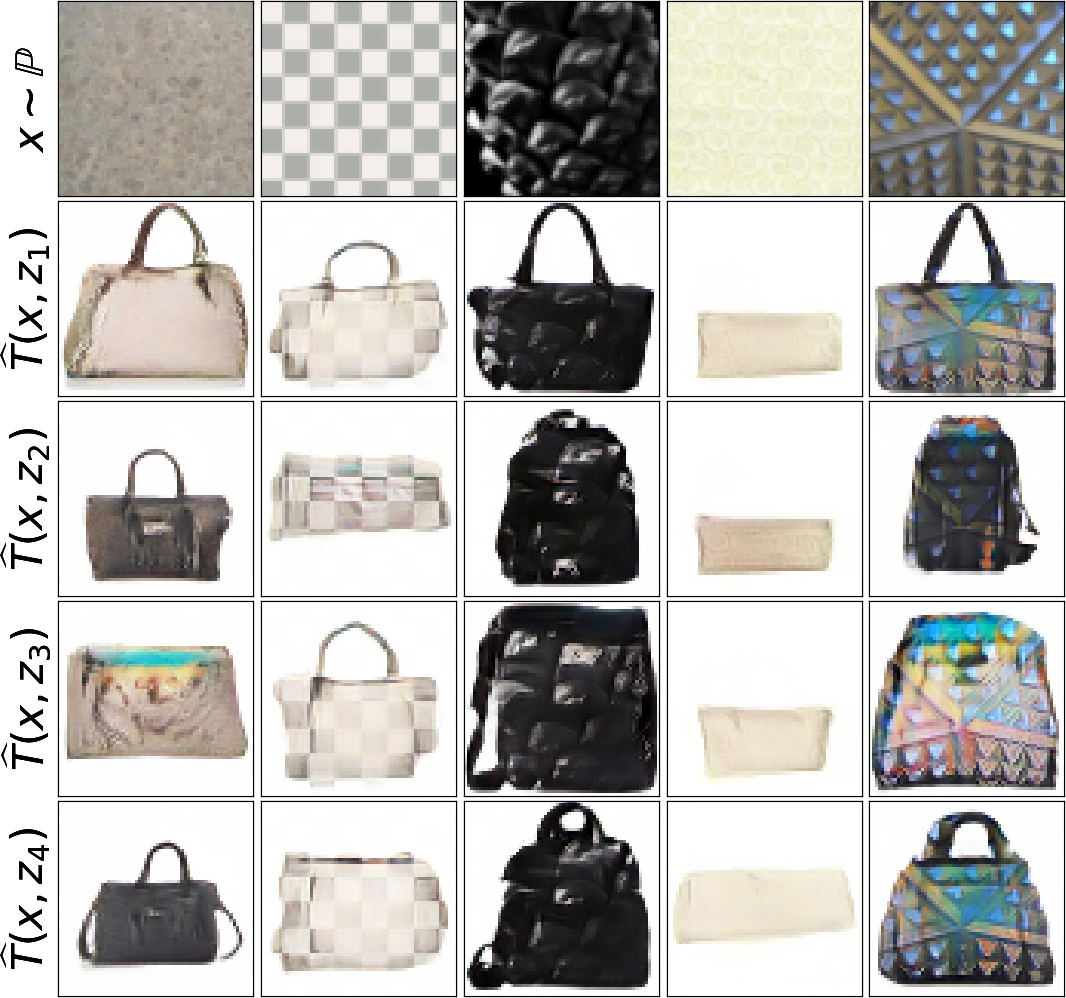}
\caption{\centering Gaussian  $k(y,y')=\exp(-\frac{\|y-y'\|^{2}}{2D})$,\newline FID $=20.89\pm 0.09$.}
\label{fig:gaussian-kernel}
\end{subfigure}
\begin{subfigure}[b]{0.49\linewidth}
\centering
\includegraphics[width=0.97\linewidth]{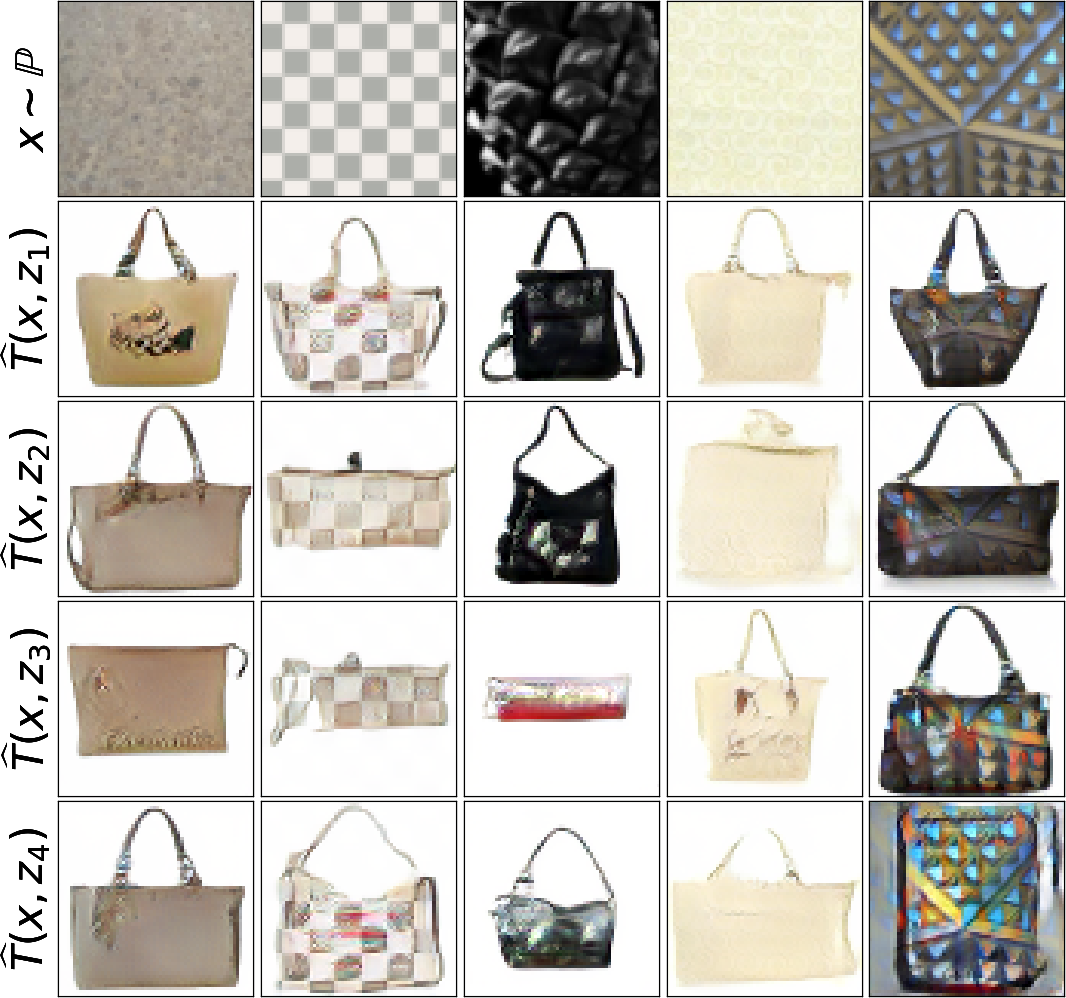}
\caption{\centering Laplacian  $k(y,y')=\exp(-\frac{\|y-y'\|}{2D})$,\newline FID $=20\pm 0.09$.}
\label{fig:laplacian-kernel}
\end{subfigure}
\caption{\textit{Texture} $\rightarrow$ \textit{handbags} ($64\times 64$) translation with the different $\gamma$-weak kernel costs.}
\label{fig:different-kernels}\vspace{-3mm}
\end{figure*}

For the kernels in view, the squared feature distance can be expressed as $\|u(x)-u(y)\|^{2}_{\mathcal{H}}=h(\|x-y\|)$ for some increasing function $h:\mathbb{R}_{+}\rightarrow\mathbb{R}_{+}$. Due to this, all the stochastic transport maps try to preserve the input content in the pixel space. According to FID, the distance-based kernel performs better than the bilinear one, which agrees with the results in \wasyparagraph\ref{sec-evaluation}. Interestingly, both Gaussian and Laplacian kernels are slightly outperformed by the bilinear kernel. We do not know why this happens, but we presume that this might be related to their boundness ($0<k(x,y)\leq 1$) and $\exp$ operation.



\section{Restricted Duality for the Weak Quadratic Cost}
\label{sec-restricted-duality}
In this section, we derive duality formula \eqref{dual-barycentric} for the $\gamma$-weak quadratic cost. First, we derive formula \eqref{dual-barycentric} for $\gamma=1$ by using \citep[Theorems 1.1, 1.2]{gozlan2020mixture}. Next, following the discussion in \citep[\wasyparagraph 5.2]{gozlan2020mixture}, we generalize duality formula \eqref{dual-barycentric} to arbitrary $\gamma>0$. We note that the constants in our derivations differ from those in \citep{gozlan2020mixture} since our quadratic cost $C_{2,\gamma}$ differs by a constant multiplicative factor.

\textbf{Part 1} ($\gamma=1$). From \citep[Theorems 1.1, 1.2]{gozlan2020mixture} it follows that there exists a lower-semi-continuous and convex function $v^{*}:\mathbb{R}^{D}\rightarrow\mathbb{R}\cup\{\infty\}$ which maximizes the following expression:
\begin{equation}
    \mathcal{W}_{2,1}^{2}(\mathbb{P},\mathbb{Q})=\max_{v\in\text{ l.s.c. Cvx}}\bigg[\int_{\mathcal{X}}\inf_{y\in\mathbb{R}^{D}}\big[v(y)+\frac{1}{2}\|x-y\|^{2}\big] d\mathbb{P}(x)-\int_{\mathcal{Y}}v(y) d\mathbb{Q}(y)\bigg].
    \label{w21-dual}
\end{equation}
Importantly, $\phi^{*}(x)\stackrel{def}{=}\overline{(\frac{\|\cdot\|^{2}}{2}+v^{*})}(x)$ is the optimal restricted potential, i.e., $\nabla\phi^{*}$ implements the projection of $\mathbb{P}$ to $\mathbb{Q}$ and is a $1$-smooth convex function.

We consider the change of variables $\phi(x)=\overline{(\frac{\|\cdot\|^{2}}{2}+v)}(x)$ in \eqref{w21-dual}. Since $\frac{\|y\|^{2}}{2}+v(y)$ is $1$-strongly convex, its conjugate $\phi(x)$ is $1$-smooth. From the lower-semi-continuity of $v$ it follows that
\begin{equation}
    \overline{\phi}(y)=\overline{\overline{\big(\frac{\|\cdot\|^{2}}{2}+v\big)}}(y)=\big(\frac{\|\cdot\|^{2}}{2}+v\big)(y)=\frac{1}{2}\|y\|^{2}+v(y)\Longrightarrow -v(y)=\frac{1}{2}\|y\|^{2}-\overline{\phi}(y).
    \label{v-through-phi-conj}
\end{equation}
We derive
\begin{eqnarray}
    \inf_{y\in\mathbb{R}^{D}}\big[v(y)+\frac{1}{2}\|x-y\|^{2}\big]=\frac{1}{2}\|x\|^{2}+\inf_{y\in\mathbb{R}^{D}}\big[\phi(y)-\langle x,y\rangle\big]=
    \nonumber
    \\
    \frac{1}{2}\|x\|^{2}-\sup_{y\in\mathbb{R}^{D}}\big[\langle x,y\rangle-\phi(y)\big]=\frac{1}{2}\|x\|^{2}-\overline{\phi}(x).
    \label{inf-through-conjugate}
\end{eqnarray}
We substitute \eqref{v-through-phi-conj} and \eqref{inf-through-conjugate} to \eqref{w21-dual} and obtain
\begin{eqnarray}
\mathcal{W}_{2,1}^{2}(\mathbb{P},\mathbb{Q})=\hspace{-3mm}\max_{\phi\in \text{\normalfont Cvx}(1)}\bigg[\int_{\mathcal{X}} \frac{\|x\|^{2}}{2}d\mathbb{P}(x)\!+\!\int_{\mathcal{Y}}\frac{\|y\|^{2}}{2}d\mathbb{Q}(x)\!-\!\bigg(\int_{\mathcal{X}}\phi(x)d\mathbb{P}(x)\!+\!\int_{\mathcal{Y}}\overline{\phi}(y)d\mathbb{Q}(y)\bigg)\bigg].
\label{dual-barycentric-w21}
\end{eqnarray}
We only need to note that \eqref{dual-barycentric-w21} exactly matches the desired \eqref{dual-barycentric} for $\gamma=1$.

\textbf{Part 2} (arbitrary $\gamma>0$). In \citep[\wasyparagraph 5.2]{gozlan2020mixture}, the authors show that the OT problem between $\mathbb{P}$ and $\mathbb{Q}$ for the $\gamma$-weak cost becomes the OT problem between $\frac{1}{\gamma} \mathbb{P}$ and $\mathbb{Q}$ for the $1$-weak cost. It holds
\begin{equation}\mathcal{W}_{2,\gamma}^{2}(\mathbb{P},\mathbb{Q})=\big[\gamma\cdot\hspace{-20mm}\underbrace{\inf_{\mathbb{P}'\preceq \mathbb{Q}}\mathbb{W}_{2}^{2}(\frac{1}{\gamma} \sharp\mathbb{P},\mathbb{P}')}_{=\mathcal{W}_{2,1}^{2}(\frac{1}{\gamma} \sharp\mathbb{P},\mathbb{Q})\mbox{,\scriptsize\hspace{1mm}see \citep[Thm. 1.4]{backhoff2019existence}}}\hspace{-20mm}\big]+\frac{1-\gamma}{2\gamma}\int_{\mathcal{X}}\|x\|^{2}d\mathbb{P}(x)+\frac{1-\gamma}{2}\int_{\mathcal{Y}}\|y\|^{2}d\mathbb{Q}(x).
\label{w2-through-projection}
\end{equation}
Moreover, $\phi^{*}(x)$ is the optimal restricted potential for $\gamma$-weak cost between $\mathbb{P},\mathbb{Q}$ if and only if $\phi^{*}_{1}(x)=\gamma\phi^{*}(\gamma ^{-1}x)$ is the optimal restricted potential between $\frac{1}{\gamma} \sharp\mathbb{P}$ and $\mathbb{Q}$ for the $1$-weak quadratic cost. Note that $\nabla\phi^{*}(x)=\nabla\phi^{*}_{1}(\frac{1}{\gamma} x)$, i.e., $\nabla\phi^{*}(x)$ first scales $x\sim\mathbb{P}$ by $\frac{1}{\gamma} $ and then implements projection $\nabla\phi_{1}^{*}$ of $\frac{1}{\gamma} \sharp\mathbb{P}$ to $\mathbb{Q}$. In particular, the function $\phi^{*}$ is $\frac{1}{\gamma}$-Lipschitz.

We reparameterize the duality formula \eqref{dual-barycentric-w21} for distributions $\frac{1}{\gamma} \sharp\mathbb{P}$ and $\mathbb{Q}$ as follows:
\begin{eqnarray}
    \mathcal{W}_{2,1}^{2}(\frac{1}{\gamma} \sharp\mathbb{P},\mathbb{Q})=\hspace{-3mm}\max_{\phi_{1}\in \text{\normalfont Cvx}(1)}\bigg[\int_{\mathcal{X}} \frac{\|x\|^{2}}{2}d\big(\frac{1}{\gamma} \sharp\mathbb{P}\big)(x)\!+\!\int_{\mathcal{Y}}\frac{\|y\|^{2}}{2}d\mathbb{Q}(x)\!-
    \nonumber
    \\
    \bigg(\int_{\mathcal{X}}\phi_{1}(x)d\big(\frac{1}{\gamma} \sharp\mathbb{P}\big)(x)\!+\!\int_{\mathcal{Y}}\overline{\phi_{1}}(y)d\mathbb{Q}(y)\bigg)\bigg]=
    \nonumber
    \\
    \max_{\phi_{1}\in \text{\normalfont Cvx}(1)}\bigg[\frac{1}{\gamma^{2}}\int_{\mathcal{X}} \frac{\|x\|^{2}}{2}d\mathbb{P}(x)\!+\!\int_{\mathcal{Y}}\frac{\|y\|^{2}}{2}d\mathbb{Q}(x)\!-
    \bigg(\int_{\mathcal{X}}\phi_{1}(\frac{1}{\gamma} x)d\mathbb{P}(x)\!+\!\int_{\mathcal{Y}}\overline{\phi_{1}}(y)d\mathbb{Q}(y)\bigg)\bigg]=
    \label{dual-change-of-var}
    \\
    \max_{\phi\in \text{\normalfont Cvx}(\frac{1}{\gamma} )}\bigg[\frac{1}{\gamma^{2}}\int_{\mathcal{X}} \frac{\|x\|^{2}}{2}d\mathbb{P}(x)\!+\!\int_{\mathcal{Y}}\frac{\|y\|^{2}}{2}d\mathbb{Q}(x)\!-
    \frac{1}{\gamma}\bigg(\int_{\mathcal{X}}\phi(x)d\mathbb{P}(x)\!+\!\int_{\mathcal{Y}}\overline{\phi}(y)d\mathbb{Q}(y)\bigg)\bigg].
    \label{dual-one-to-gamma}
\end{eqnarray}
In transition from \eqref{dual-change-of-var} to \eqref{dual-one-to-gamma}, we use the change of variables for $\phi(x)=\gamma\phi_{1}(\frac{1}{\gamma} x)$ known as the \textit{right scalar multiplication}. It yields $\overline{\phi}(y)=\gamma\overline{\phi_{1}}(y)$. 

To finish the derivation of dual form \eqref{dual-barycentric}, we simply substitute \eqref{dual-one-to-gamma} to
\eqref{w2-through-projection}.
\section{Proofs}
\label{sec-proofs}

\begin{proof}[Proof of Lemma \ref{lemma-fake-problem}] Assume the opposite, i.e., $T^{\dagger}\sharp(\mathbb{P}\times\mathbb{S})=\mathbb{Q}$. Then $T^{\dagger}$ implicitly represents some transport plan $\pi^{\dagger}\in\Pi(\mathbb{P},\mathbb{Q})$ between $\mathbb{P}$ and $\mathbb{Q}$. By the definition of $f^{*}$, $T^{\dagger}$ and \eqref{L-def}, it holds
\begin{eqnarray}
\text{Cost}(\mathbb{P},\mathbb{Q})=\mathcal{L}(f^{*},T^{\dagger})=
\nonumber
\\
\int_{\mathcal{X}}C(x,T^{\dagger}_{x}\sharp\mathbb{S})d\mathbb{P}(x)-\int_{\mathcal{X}}\int_{\mathcal{Z}}f^{*}\big(T^{\dagger}_{x}(z)\big)d\mathbb{S}(z)d\mathbb{P}(x)+\int_{\mathcal{Y}}f^{*}(y)d\mathbb{Q}(y)=
\label{before-change-var}
\\
\int_{\mathcal{X}}C(x,T^{\dagger}_{x}\sharp\mathbb{S})d\mathbb{P}(x)-\int_{\mathcal{Y}}f^{*}(y)d\mathbb{Q}(y)+\int_{\mathcal{Y}}f^{*}(y)d\mathbb{Q}(y)=
\label{after-change-var}
\\
\int_{\mathcal{X}}C(x,T^{\dagger}_{x}\sharp\mathbb{S})d\mathbb{P}(x)=\int_{\mathcal{X}}C(x,\pi^{\dagger}(\cdot|x))d\pi^{\dagger}(x),
\label{fake-plan-is-optimal}
\end{eqnarray}
where in transition between lines \eqref{before-change-var} and \eqref{after-change-var}, we use the change of variables formula for $y=T_{x}^{\dagger}(z)$ and the equality $T^{\dagger}\sharp(\mathbb{P}\times\mathbb{S})=\mathbb{Q}$. From \eqref{fake-plan-is-optimal} we see that the cost of the plan $\pi^{\dagger}\in\Pi(\mathbb{P},\mathbb{Q})$ equals the optimal cost $\text{Cost}(\mathbb{P},\mathbb{Q})$. As a result, it is an optimal plan by definition. In turn, $T^{\dagger}$ is an stochastic OT map but not a fake solution. This is a contradiction. Thus, it holds that $T^{\dagger}\sharp(\mathbb{P}\times\mathbb{S})\neq\mathbb{Q}$.
\end{proof}

\begin{proof}[Proof of Lemma \ref{lemma-restricted-to-f}] We compute the $C$-transform of $f^{*}(y)=\frac{\|y\|^{2}}{2}-\overline{\phi^{*}}(x)$. We have
\begin{eqnarray}
    (f^{*})^{C}(x)=\inf_{\mu\in\mathcal{P}(\mathcal{Y})}\big[C_{2,\gamma}(x,\mu)-\int_{\mathcal{Y}}f^{*}(y)d\mu(y)\big]=
    \nonumber
    \\
    \inf_{\mu\in\mathcal{P}(\mathcal{Y})}\bigg[\frac{1}{2}\int_{\mathcal{Y}}\|x-y\|^{2}d\mu(y)-\frac{\gamma}{2}\Var(\mu)-\int_{\mathcal{Y}}\big[\frac{\|y\|^{2}}{2}-\overline{\phi^{*}}(y)\big]d\mu(y)\bigg]=
    \nonumber
    \\
    \inf_{\mu\in\mathcal{P}(\mathcal{Y})}\bigg[\frac{1}{2}\|x\|^{2}-\langle x,\int_{\mathcal{Y}}y\hspace{.5mm}d\mu(y)\rangle+\frac{1}{2}\int_{\mathcal{Y}}\|y\|^{2}d\mu(y)-\frac{\gamma}{2}\Var(\mu)-\int_{\mathcal{Y}}\big[\frac{\|y\|^{2}}{2}-\overline{\phi^{*}}(y)\big]d\mu(y)\bigg]=
    \nonumber
    \\
    \inf_{\mu\in\mathcal{P}(\mathcal{Y})}\bigg[\frac{1}{2}\|x\|^{2}-\langle x,\int_{\mathcal{Y}}y\hspace{.5mm}d\mu(y)\rangle-\frac{\gamma}{2}\Var(\mu)+\int_{\mathcal{Y}}\overline{\phi^{*}}(y)d\mu(y)\bigg]=
    \nonumber
    \\
    \frac{1}{2}\|x\|^{2}+\inf_{\mu\in\mathcal{P}(\mathcal{Y})}\bigg[\int_{\mathcal{Y}}\big(\overline{\phi^{*}}(y)-\langle x,y\rangle\big)d\mu(y)-\frac{\gamma}{2}\Var(\mu)\bigg]=
    \nonumber
    \\
    \frac{1}{2}\|x\|^{2}+\inf_{\mu\in\mathcal{P}(\mathcal{Y})}\bigg[\int_{\mathcal{Y}}\big(\overline{\phi^{*}}(y)-\frac{\gamma}{2}\|y\|^{2}-\langle x,y\rangle\big)d\mu(y)+\frac{\gamma}{2}\left\|\int_{\mathcal{Y}}y\hspace{.5mm}d\mu(y)\right\|^{2}\bigg].
    \label{c-transform-restricted}
\end{eqnarray}
Pick any $\mu\in\mathcal{P}(\mathcal{Y})$ and denote its expectation by $m=\int_{\mathcal{Y}}y\hspace{.5mm}d\mu(y)$. Since $\phi^{*}$ is $\frac{1}{\gamma}$-smooth, it holds that $\overline{\phi^{*}}$ is $\gamma$-strongly convex, i.e., $\overline{\phi^{*}}(y)-\frac{\gamma}{2}\|y\|^{2}-\langle x,y\rangle$ is convex. We use the Jensen's inequality:
\begin{equation}
    \int_{\mathcal{Y}}\big(\overline{\phi^{*}}(y)-\frac{\gamma}{2}\|y\|^{2}-\langle x,y\rangle\big)d\mu(y)\geq \overline{\phi^{*}}(m)-\frac{\gamma}{2}\|m\|^{2}-\langle x,m\rangle.
    \label{jensen-expect}
\end{equation}
Therefore, we may restrict the feasible set of $\inf$ in \eqref{c-transform-restricted} to Dirac distributions $\delta_{m}$, $m\in\mathcal{Y}$. That is,
\begin{eqnarray}
    (f^{*})^{C}(x)=\frac{1}{2}\|x\|^{2}+\inf_{m\in\mathcal{Y}}\bigg[\overline{\phi^{*}}(m)-\frac{\gamma}{2}\|m\|^{2}-\langle x,m\rangle+\frac{\gamma}{2}\left\|m\right\|^{2}\bigg]=
    \nonumber
    \\
    \frac{1}{2}\|x\|^{2}+\inf_{m\in\mathcal{Y}}\bigg[\overline{\phi^{*}}(m)-\langle x,m\rangle\bigg]=\frac{1}{2}\|x\|^{2}-\underbrace{\sup_{m\in\mathcal{Y}}\bigg[\langle x,m\rangle-\overline{\phi^{*}}(m)\bigg]}_{=\overline{\overline{\phi^{*}}}(x)}=\frac{1}{2}\|x\|^{2}-\phi^{*}(x),
    \label{f-c-through-conjugate}
\end{eqnarray}
where $\overline{\overline{\phi^{*}}}(x)=\phi^{*}(x)$ holds since $\phi^{*}$ is continuous. We substitute \eqref{f-c-through-conjugate} to \eqref{ot-weak-dual} and obtain
\begin{eqnarray}
    \int_{\mathcal{X}}(f^{*})^{C}(x)d\mathbb{P}(x)+\int_{\mathcal{Y}}f^{*}(y)d\mathbb{Q}(y)=
    \nonumber
    \\
    \int_{\mathcal{X}}\big[\frac{\|x\|^{2}}{2}-\phi^{*}(x)\big]d\mathbb{P}(x)+\int_{\mathcal{Y}}\big[\frac{\|y\|^{2}}{2}-\overline{\phi^{*}}(y)\big]d\mathbb{Q}(y)=\mathcal{W}_{2,\gamma}^{2}(\mathbb{P},\mathbb{Q}),
    \label{f-maximizes}
\end{eqnarray}
where in line \eqref{f-maximizes}, we use the optimality of $\phi^{*}$ and \eqref{dual-barycentric}. We conclude that $f^{*}$ maximizes \eqref{ot-weak-dual}, \eqref{L-def}.
\end{proof}

\begin{proof}[Proof of Proposition \ref{prop-u-convex}]For all $y',y''\in U_{\psi}(y)$, $\alpha\in[0,1]$, from the convexity of $\psi$ it follows that
\begin{equation}
    \psi\big(\alpha y'+(1-\alpha)y''\big)\leq \alpha \psi(y')+(1-\alpha)\psi(y'').
    \label{U-conv-upper-bound}
\end{equation}
By the definition of the subgradient of a convex function it also holds
\begin{eqnarray}\psi\big(\alpha y'+(1-\alpha)y''\big)\geq\psi(y)+\langle x,\alpha y'+(1-\alpha)y''-y\rangle=
\nonumber
\\
    \alpha\big(\psi(y)+\langle x, y'-y\rangle\big)+(1-\alpha)\cdot\big(\psi(y)+\langle x, y''-y\rangle\big)=\alpha \psi(y')+(1-\alpha)\psi(y'')
    \label{U-conv-lower-bound}
\end{eqnarray}
for all $x\in\partial_{y}\psi(y).$ Therefore, \eqref{U-conv-upper-bound} and \eqref{U-conv-lower-bound} are equalities, and $\alpha y'+(1-\alpha)y''\in U_{\psi}(y)$.
\end{proof}

\begin{proof}[Proof of Theorem \ref{lemma-characterization}]It holds that
\begin{eqnarray}
    T^{\dagger}\in\arginf_{T}\mathcal{L}(f^{*},T)\Leftrightarrow T^{\dagger}\in\arginf_{T}\int_{\mathcal{X}} \left( C\big(x,T_{x}\sharp\mathbb{S}\big)-\int_{\mathcal{Z}} f^{*}\big(T_{x}(z)\big) d\mathbb{S}(z)\right) d\mathbb{P}(x).
    \nonumber
\end{eqnarray}
The latter condition holds if and only if $\mathbb{P}$-almost surely for all $x\in\mathcal{X}$ we have 
\begin{equation}T_{x}^{\dagger}\in\arginf_{T_{x}:\mathcal{Z}\rightarrow\mathcal{Y}}\big[C\big(x,T_{x}^{\dagger}\sharp\mathbb{S}\big)-\int_{\mathcal{Z}} f^{*}\big(T_{x}^{\dagger}(z)\big) d\mathbb{S}(z)\big].
\label{Tx-arginf}
\end{equation}
We substitute $f^{*}=\frac{\|\cdot\|^{2}}{2}-\overline{\phi^{*}}$ and $C=C_{2,\gamma}$. As a result, we derive
\begin{eqnarray}
    C_{2,\gamma}\big(x,T_{x}^{\dagger}\sharp\mathbb{S}\big)-\int f^{*}\big(T_{x}^{\dagger}(z)\big) d\mathbb{S}(z)=
    \nonumber
    \\
    \frac{1}{2}\int_{\mathcal{Z}}\|x-T_{x}^{\dagger}(z)\|^{2}d\mathbb{S}(z)-\frac{\gamma}{2}\Var(T_{x}^{\dagger}\sharp\mathbb{S})-\frac{1}{2}\int_{\mathcal{Z}}\|T_{x}^{\dagger}(z)\|^{2}d\mathbb{S}(z)+\int_{\mathcal{Z}}\overline{\phi^{*}}\big(T_{x}^{\dagger}(z)\big)d\mathbb{S}(z)=
    \nonumber
    \\
    \frac{1}{2}\|x\|^{2}-\langle x,\int_{\mathcal{Z}}T_{x}^{\dagger}(z)d\mathbb{S}(z)\rangle-\frac{\gamma}{2}\Var(T_{x}^{\dagger}\sharp\mathbb{S})+\int_{\mathcal{Z}}\overline{\phi^{*}}\big(T_{x}^{\dagger}(z)\big)d\mathbb{S}(z)=
    \nonumber
    \\
    \frac{1}{2}\|x\|^{2}-\langle x,\int_{\mathcal{Z}}T_{x}^{\dagger}(z)d\mathbb{S}(z)\rangle+\frac{\gamma}{2}\left\|\int_{\mathcal{Z}}T_{x}^{\dagger}(z)d\mathbb{S}(z)\right\|^{2}+\int_{\mathcal{Z}}\big[\overline{\phi^{*}}\big(T_{x}^{\dagger}(z)\big)-\frac{\gamma}{2}\|T_{x}^{\dagger}(z)\|^{2}\big]d\mathbb{S}(z)=
    \nonumber
    \\
    \frac{1}{2}\|x\|^{2}-\langle x,\int_{\mathcal{Z}}T_{x}^{\dagger}(z)d\mathbb{S}(z)\rangle+\frac{\gamma}{2}\left\|\int_{\mathcal{Z}}T_{x}^{\dagger}(z)d\mathbb{S}(z)\right\|^{2}+\int_{\mathcal{Z}}\psi\big(T_{x}^{\dagger}(z)\big)d\mathbb{S}(z)\geq
    \label{jensen-before}
    \\
    \frac{1}{2}\|x\|^{2}-\langle x,\int_{\mathcal{Z}}T_{x}^{\dagger}(z)d\mathbb{S}(z)\rangle+\frac{\gamma}{2}\left\|\int_{\mathcal{Z}}T_{x}^{\dagger}(z)d\mathbb{S}(z)\right\|^{2}+\psi\bigg(\int_{\mathcal{Z}}T_{x}^{\dagger}(z)d\mathbb{S}(z)\bigg),
    \label{jensen-action}
\end{eqnarray}
where we substitute $\psi=\overline{\phi^{*}}-\frac{\gamma}{2}\|\cdot\|^{2}$. Recall that $\psi$ is convex since $\phi^{*}$ is $\frac{1}{\gamma}$-smooth \citep{kakade2009duality}. In transition from \eqref{jensen-before} to \eqref{jensen-action}, we use the convexity of $\psi$ and the Jensen's inequality.

\textbf{Part 1.} To begin with, we prove implication "$\Longrightarrow$" in \eqref{characterization-arginf}.

If \eqref{Tx-arginf} holds true, then \eqref{jensen-action} is the equality. If it is not, one may pick ${T'_{x}(z)\stackrel{def}{=}\int_{\mathcal{Z}}T_{x}^{\dagger}(z')d\mathbb{S}(z')}$ which provides smaller value \eqref{jensen-action} for \eqref{Tx-arginf} than $T_{x}^{\dagger}$. Let $T^{*}$ be any stochastic OT map and $\pi^{*}$ be its respective OT plan. We know that $T^{*}\in\arginf_{T}\mathcal{L}(f^{*},T)$, i.e., the optimal value \eqref{jensen-action} of \eqref{Tx-arginf} equals
\begin{eqnarray}
    \frac{1}{2}\|x\|^{2}-\langle x,\int_{\mathcal{Z}}T_{x}^{*}(z)d\mathbb{S}(z)\rangle+\frac{\gamma}{2}\left\|\int_{\mathcal{Z}}T^{*}_{x}(z)d\mathbb{S}(z)\right\|^{2}+\psi\bigg(\int_{\mathcal{Z}}T^{*}_{x}(z)d\mathbb{S}(z)\bigg)=
    \label{opt-value-l}
    \\
    \frac{1}{2}\|x\|^{2}-\langle x,\nabla\phi^{*}(x)\rangle+\frac{\gamma}{2}\|\nabla\phi^{*}(x)\|^{2}+\psi\big(\nabla\phi^{*}(x)\big),
    \label{opt-value-l-phi}
\end{eqnarray}
where we use the equalities $\nabla\phi^{*}(x)=\int_{\mathcal{Y}}yd\pi^{*}(y|x)=\int_{\mathcal{Z}}T^{*}_{x}(z)d\mathbb{S}(z)$, see \wasyparagraph\ref{sec-background-ot}.
The function $m\mapsto \frac{1}{2}\|x\|^{2}-\langle x,m\rangle+\frac{\gamma}{2}\|m\|^{2}+\psi\big(m\big)$ is $\gamma$-strongly convex in $m$, therefore, the minimizer $m^{*}=\nabla\phi^{*}(x)$ is unique. This yields that $\int_{\mathcal{Z}}T^{\dagger}_{x}(z)d\mathbb{S}(z)=m^{*}=\nabla\phi^{*}(x)$.

We know that \eqref{jensen-before} equals \eqref{jensen-action}, i.e., 
\begin{equation}
\int_{\mathcal{Z}}\psi\big(T_{x}^{\dagger}(z)\big)d\mathbb{S}(z)-\psi\bigg(\int_{\mathcal{Z}}T_{x}^{\dagger}(z)d\mathbb{S}(z)\bigg)=\int_{\mathcal{Z}}\psi\big(T_{x}^{\dagger}(z)\big)d\mathbb{S}(z)-\psi\big(\nabla\phi^{*}(x)\big)=0,
\label{jensens-gap}    
\end{equation}
and the Jensen's gap vanishes.
Now we are going to prove that $\text{Supp}(T_{x}\sharp\mathbb{S})\subset U_{\psi}(\nabla\phi^{*}(x))$. We need to show that for every $x'\in \partial_{y}(\nabla\phi^{*}(x))$ the following inequality
\begin{equation}\psi(y)\geq \psi(\nabla\phi^{*}(x))+\langle x',y-\nabla\phi^{*}(x)\rangle
\label{is-this-eq}
\end{equation}
is the equality $T_{x}^{\dagger}\sharp\mathbb{S}$-almost surely for all $y\in\text{Supp}(T_{x}^{\dagger}\sharp\mathbb{S})$. Assume the opposite, i.e., for some $x'$, \eqref{is-this-eq} holds true not $T_{x}^{\dagger}\sharp\mathbb{S}$-almost surely. We integrate \eqref{is-this-eq} w.r.t. $y\!\sim\! T_{x}^{\dagger}\sharp\mathbb{S}$ and get the strict inequality
\begin{equation}
    \int_{\mathcal{Y}}\psi(y)d\big(T_{x}^{\dagger}\sharp\mathbb{S}\big)(y)> \psi\big(\nabla\phi^{*}(x)\big)+\langle x',\int_{\mathcal{Y}}y\hspace{0.3mm}d\big(T_{x}^{\dagger}\sharp\mathbb{S}\big)(y)-\nabla\phi^{*}(x)\rangle.
    \nonumber
\end{equation}
We use the change of variables for $y=T_{x}^{\dagger}(z)$ and obtain
\begin{equation}
    \int_{\mathcal{Z}}\psi\big(T_{x}^{\dagger}(z)\big)d\mathbb{S}(z)>\psi\big(\nabla\phi^{*}(x)\big)+\langle y',\underbrace{\int_{\mathcal{Z}}T^{\dagger}_{x}(z)d\mathbb{S}(z)-\nabla\phi^{*}(x)}_{=0}\rangle=\psi\big(\nabla\phi^{*}(x)\big),
    \nonumber
\end{equation}
which contradicts \eqref{jensens-gap}. This finishes the proof of implication "$\Longrightarrow$".

\textbf{Part 2.} Now we prove  implication "$\Longleftarrow$" in \eqref{characterization-arginf}.

For every $T^{\dagger}$ satisfying the conditions on the right-hand side of \eqref{characterization-arginf}, the Jensen's gap \eqref{jensens-gap} is zero since $\psi$ is linear in $U_{\psi}\big(\nabla\phi^{*}(x)\big)$. Therefore, \eqref{jensen-before} equals \eqref{jensen-action}. Due to $\int_{\mathcal{Z}}T_{x}^{\dagger}(z)d\mathbb{S}(z)=\nabla\phi^{*}(x)$, we have that \eqref{jensen-action} attains the optimal (minimal) values. Consequently, \eqref{Tx-arginf} holds.

Finally, we note that convexity of $\arginf_{T}\mathcal{L}(f^{*},T)$ follows from the convexity of sets $U_{\psi}(\cdot)$. \end{proof}

\begin{proof}[Proof of Lemma \ref{lemma-is-bar-optimal}.]
The plan $\pi\in\Pi(\mathbb{P},\mathbb{Q})$ is optimal if and only if $\int_{\mathcal{Y}}y\hspace{0.3mm}d\pi(y|x)=\nabla\phi^{*}(x)$. That is, a stochastic map $T:\mathcal{X}\times\mathcal{Z}\rightarrow\mathcal{Y}$ is optimal if and only if it pushes $\mathbb{P}$ to $\mathbb{Q}$ (represents some plan $\pi\in\Pi(\mathbb{P},\mathbb{Q})$), i.e., $T\sharp(\mathbb{P}\!\times\!\mathbb{S})=\mathbb{Q}$, and $\int_{\mathcal{Z}}T_{x}(z)d\mathbb{S}=\nabla\phi^{*}(x)$. The optimal barycentric projection $\overline{T^{*}}$ satisfies the second condition by the definition.

Consider implication "$\stackrel{(a)}{\Longrightarrow}$". Since $\overline{T^{*}}$ is a stochastic OT map, the first condition $\overline{T^{*}}\sharp(\mathbb{P}\!\times\!\mathbb{S})=\mathbb{Q}$ holds true. Recall that by the definition of $\phi^{*}$ and $\overline{T^{*}}$ we have $\overline{T^{*}}\sharp(\mathbb{P}\!\times\!\mathbb{S})=\nabla\phi^{*}\sharp\mathbb{P}=(\frac{1}{\gamma} \sharp\mathbb{P})$. Thus, $\mathbb{Q}=\Proj_{\preceq\mathbb{Q}}(\frac{1}{\gamma} \sharp\mathbb{P})$. 
Consider implication "$\stackrel{(a)}{\Longleftarrow}$". Since $\Proj_{\preceq\mathbb{Q}}(\frac{1}{\gamma} \sharp\mathbb{P})=\mathbb{Q}$, the first condition $\overline{T^{*}}\sharp(\mathbb{P}\!\times\!\mathbb{S})=\mathbb{Q}$ holds true. Therefore, $\overline{T^{*}}$ is a stochastic OT map.

Now we prove implication "$\stackrel{(b)}{\Longrightarrow}$". Let  $T^{*}$ be any stochastic OT map. We compute the second moment of $T_{x}^{*}\sharp(\mathbb{P}\times\mathbb{S})=\mathbb{Q}$ below:
\begin{eqnarray}
    \int_{\mathcal{Y}}\|y\|^{2}d\mathbb{Q}(y)=\int_{\mathcal{X}}\int_{\mathcal{Z}}\|T_{x}^{*}(z)\|^{2}d\mathbb{S}(z)d\mathbb{P}(x)=
    \label{second-moment-q}
    \\
    \int_{\mathcal{X}}\bigg(\Var(T^{*}_{x}\sharp\mathbb{S})+\left\|\int_{\mathcal{Z}}T^{*}_{x}(z)d\mathbb{S}(z)\right\|^{2}\bigg)d\mathbb{P}(x)=
    \nonumber
    \\
    \int_{\mathcal{X}}\Var(T^{*}_{x}\sharp\mathbb{S})d\mathbb{P}(x)+\int_{\mathcal{X}}\|\nabla\phi^{*}(x)\|^{2}d\mathbb{P}(x)=
    \label{change-expect-var-q}
    \\
    \int_{\mathcal{X}}\Var(T^{*}_{x}\sharp\mathbb{S})d\mathbb{P}(x)+\int_{\mathcal{Y}}\|y\|^{2}d\mathbb{Q}(y),
    \label{second-moment-q-and-var}
\end{eqnarray}
where in transition to \eqref{change-expect-var-q}, we use the equality $\int_{\mathcal{Z}}T^{*}_{x}(z)d\mathbb{S}(z)=\nabla\phi^{*}(x)$; in transition to \eqref{second-moment-q-and-var}, we use the change of variables formula for $y=\nabla\phi^{*}(x)$ and $\nabla\phi^{*}(x)\sharp\mathbb{P}=\mathbb{Q}$. Finally, by comparing \eqref{second-moment-q} and \eqref{second-moment-q-and-var}, we obtain that $\Var(T_{x}\sharp\mathbb{S})=0$ holds $\mathbb{P}$-almost surely. That is, $T^{*}$ is deterministic (does not depend on $z$) and  $(\mathbb{P}\!\times\!\mathbb{S})$-almost surely matches the optimal barycentric projection $\overline{T^{*}}$.
\end{proof}

\begin{proof}[Proof of Proposition \ref{prop-interpolant}.]

We are going to show that the second moment of $T^{\alpha}\sharp(\mathbb{P}\!\times\!\mathbb{S})$ is less than that of $\mathbb{Q}$. Consequently, $T^{\alpha}\sharp(\mathbb{P}\!\times\!\mathbb{S})\neq\mathbb{Q}$, and $T^{\alpha}$ can not be a stochastic OT map. First, for $T^{\dagger}$ we have
\begin{eqnarray}
    \int_{\mathcal{X}}\int_{\mathcal{Z}}\|T_{x}^{\dagger}(z)\|^{2}d\mathbb{S}(z)d\mathbb{P}(x)=\underbrace{\Var\big(T^{\dagger}\sharp(\mathbb{P}\times\mathbb{S})\big)}_{\leq\Var(\mathbb{Q})}-\left\|\int_{\mathcal{X}}\int_{\mathcal{Z}}T_{x}^{\dagger}(z)d\mathbb{S}(z)d\mathbb{P}(x)\right\|^{2}\leq
    \nonumber
    \\
    \Var(\mathbb{Q})-\left\|\int_{\mathcal{X}}\nabla\phi^{*}(x)d\mathbb{P}(x)\right\|^{2}=\Var(\mathbb{Q})-\|m_{\mathbb{Q}}\|^{2}=\int_{\mathcal{X}}\int_{\mathcal{Z}}\|T^{*}_{x}(z)\|^{2}d\mathbb{S}(z)d\mathbb{P}(x), 
    \label{second-moment-upper-bound}
\end{eqnarray}
where in the last equality we use $T^{*}\sharp(\mathbb{P}\times\mathbb{S})=\mathbb{Q}$. Finally, we derive
\begin{eqnarray}
    \int_{\mathcal{X}}\int_{\mathcal{Z}}\|T^{\alpha}_{x}(z)\|^{2}d\mathbb{S}(z)d\mathbb{P}(x)=\int_{\mathcal{X}}\int_{\mathcal{Z}}\|\alpha T^{*}_{x}(z)+(1-\alpha)T_{x}^{\dagger}(z)\|^{2}d\mathbb{S}(z)d\mathbb{P}(x)<
    \nonumber
    \\
    \int_{\mathcal{X}}\int_{\mathcal{Z}}\big[\alpha\| T^{*}_{x}(z)\|^{2}+(1-\alpha)\| T_{x}^{\dagger}(z)\|^{2}\big]d\mathbb{S}(z)d\mathbb{P}(x)=
    \label{jensen-norm}
    \\
    \alpha\int_{\mathcal{X}}\int_{\mathcal{Z}}\| T^{*}_{x}(z)\|^{2}d\mathbb{S}(z)d\mathbb{P}(x)+(1-\alpha)\int_{\mathcal{X}}\int_{\mathcal{Z}}\| T_{x}^{\dagger}(z)\|^{2}d\mathbb{S}(z)d\mathbb{P}(x)\leq
    \nonumber
    \\
    \int_{\mathcal{X}}\int_{\mathcal{Z}}\| T^{*}_{x}(z)\|^{2}d\mathbb{S}(z)d\mathbb{P}(x)=[\mbox{second moment of }\mathbb{Q}].
    \label{substitute-upper-bound}
\end{eqnarray}
In transition to \eqref{jensen-norm}, we use the Jensen's inequality for $\|\cdot\|^{2}$. The inequality is strict since $\|\cdot\|^{2}$ is strictly convex and $T^{\dagger}\neq T^{*}$ ($\mathbb{P}\times\mathbb{S}$-almost surely). In transition to \eqref{substitute-upper-bound}, we use \eqref{second-moment-upper-bound}. That is, $T^{\alpha}\sharp(\mathbb{P}\times\mathbb{S})\neq\mathbb{Q}$ as its second moment is smaller.
\end{proof}
\textbf{Remark}. The assumption $\Proj_{\preceq\mathbb{Q}}(\frac{1}{\gamma}\sharp\mathbb{P})\neq \mathbb{Q}$ in Proposition \ref{prop-interpolant} is needed to guarantee the existence of a function $T^{\dagger}\in \arginf_{T}\mathcal{L}(f^{*},T)$ which differs from the given stochastic OT map $T^{*}$. Due to our Lemma \ref{lemma-is-bar-optimal}, the optimal barycenteric projection $\overline{T^{*}}\in \arginf_{T}\mathcal{L}(f^{*},T)$ is not an OT map. Thus, $T^{\dagger}=\overline{T^{*}}$ is a suitable example. Since $\overline{T^{*}}\sharp(\mathbb{P}\!\times\!\mathbb{S})\preceq \mathbb{Q}$, we also have $\Var\big(\overline{T^{*}}\sharp(\mathbb{P}\!\times\mathbb{S}\big)\big)\leq \Var(\mathbb{Q})$.

\begin{proof}[Proof of Lemma \ref{lemma-appropriate-k}] The  lower-semi-continuity of $C_{k,\gamma}(x,\mu)=C_{u,\gamma}(x,\mu)$ in $(x,\mu)$ follows from the continuity of $k$, compactness of $\mathcal{X}=\mathcal{Y}\subset\mathbb{R}^{D}$ and \citep[Lemma 7.3]{santambrogio2015optimal}.
\footnote{We use the lower semi-continuity (in $\mu$) of $C(x,\mu)$ w.r.t. the weak convergence of distributions in $\mathcal{P}(\mathcal{Y})$. In contrast, \cite{backhoff2019existence} work with $\mu\in \mathcal{P}_{p}(\mathcal{Y})\subset \mathcal{P}(\mathcal{Y})$, i.e., with the distributions which have a finite $p$-th moment. They prove the existence and duality results for weak OT \eqref{ot-primal-form-weak} assuming that $C(x,\mu)$ is lower semi-continuous w.r.t. the convergence in the Wasserstein-$p$ sence in $\mathcal{P}_{p}(\mathcal{Y})$. Since we consider compact $\mathcal{Y}$, it holds that $\mathcal{P}_{p}(\mathcal{Y})= \mathcal{P}(\mathcal{Y})$ and these notions of convergence coincide \citep[Def. 6.8]{villani2008optimal}.}

The term $\frac{1}{2}\int_{\mathcal{Y}}\|u(x)-u(y)\|^{2}_{\mathcal{H}}d\mu(y)$ in \eqref{u-cost} is linear in $\mu$ and, consequently, convex. The second term in \eqref{u-cost} equals to $-\frac{\gamma}{2}\Var(u\sharp\mu)$. The pushforward operator $u\sharp$ is linear and $\Var(\cdot)$ is concave. Therefore, the second term is convex in $\mu$ ($\gamma\geq 0$). As a result, $C_{k,\gamma}(x,\mu)$ is convex in $\mu$.

To prove that $C_{k,\gamma}$ is lower-bounded, we rewrite \eqref{u-cost} analogously to \eqref{weak-w2-cost}, i.e.,
\begin{eqnarray}
    C_{k,\gamma}(x,\mu)=\frac{1}{2}\|u(x)-\int_{\mathcal{Y}}u(y)d\mu(y)\|^{2}_{\mathcal{H}}+\frac{1-\gamma}{2}\cdot\bigg[\frac{1}{2}\int_{\mathcal{Y}\times\mathcal{Y}}\hspace{-3mm}\|u(y)-u(y')\|_{\mathcal{H}}^{2}d\mu(y)d\mu(y')\bigg]=\nonumber
    \\
    \frac{1}{2}\|u(x)-\int_{\mathcal{Y}}u(y)d\mu(y)\|^{2}_{\mathcal{H}}+\frac{1-\gamma}{2}\cdot\Var(u\sharp\mu).
    \label{u-cost-v2}
\end{eqnarray}
Both terms are non-negative ($\gamma\in[0,1]$), i.e., $C_{k,\gamma}(x,\mu)$ is lower bounded by $0$.
\end{proof}

\begin{proof}[Proof of Lemma \ref{lemma-unique-plan}]To begin with, we prove that if $k$ is characteristic, then $C_{k,\gamma}(x,\mu)=C_{u,\gamma}(x,\mu)$ is \textit{strictly} convex in $\mu$. The term $\int_{\mathcal{X}}\|u(x)-u(y)\|^{2}_{\mathcal{H}}d\mu(y)$ in \eqref{u-cost} is linear in $\mu$, so we focus on the second (variance) term $-\frac{\gamma}{2}\Var(u\sharp\mu)$. We prove that $\Var(u\sharp\mu)$ is strictly concave. We derive
\begin{eqnarray}
    \Var(u\sharp\mu)=\frac{1}{2}\int_{\mathcal{Y}\times\mathcal{Y}}\hspace{-3mm}\|u(y)-u(y')\|_{\mathcal{H}}^{2}d\mu(y)d\mu(y')=
    \nonumber
    \\
    \frac{1}{2}\int_{\mathcal{Y}}\|u(y)\|^{2}_{\mathcal{H}}d\mu(y)-\int_{\mathcal{Y}\times\mathcal{Y}}\hspace{-3mm}\langle u(y),u(y')\rangle_{\mathcal{H}} d\mu(y)d\mu(y')+\frac{1}{2}\int_{\mathcal{Y}}\|u(y')\|^{2}_{\mathcal{H}}d\mu(y')=
    \nonumber
    \\
    \int_{\mathcal{Y}}\|u(y)\|^{2}_{\mathcal{H}}d\mu(y)-\big\langle\int_{\mathcal{Y}} \!u(y)d\mu(y),\int_{\mathcal{Y}}\!u(y')d\mu(y')\big\rangle_{\mathcal{H}}=
    \nonumber
    \\
    \int_{\mathcal{Y}}\|u(y)\|^{2}_{\mathcal{H}}d\mu(y)-\big\|\int_{\mathcal{Y}} \!u(y)d\mu(y)\big\|^{2}_{\mathcal{H}}.
    \label{kernel-var}
\end{eqnarray}
The first term in \eqref{kernel-var} is linear in $\mu$, so it sufficies to prove that $\big\|\int_{\mathcal{Y}} \!u(y)d\mu(y)\big\|^{2}_{\mathcal{H}}=\|u(\mu)\|_{\mathcal{H}}^{2}$ is strictly convex.
To do this, we pick any $\mu_{1}\neq\mu_{2}\in\mathcal{P}(\mathcal{X})$, $\alpha\in(0,1)$. Since the kernel $k$ is characteristic, it holds that $u(\mu_{1})\neq u(\mu_{2})$. The squared norm function $\|\cdot\|_{\mathcal{H}}^{2}$ is strictly convex. As a result, the following  strict inequality holds 
$$\alpha\|u(\mu_{1})\|^{2}_{\mathcal{H}}+(1-\alpha)\|u(\mu_{2})\|^{2}_{\mathcal{H}}>\|u\big(\alpha\mu_{1}+(1-\alpha)\mu_{2}\big)\|^{2}_{\mathcal{H}},$$
which yields strict convexity of $\|u(\mu)\|^{2}_{\mathcal{H}}.$ Consequently, $C_{k,\gamma}(x,\mu)$ is strictly convex in $\mu$. In this case, the weak OT functional $\pi\mapsto \int_{\mathcal{X}}C_{k,\gamma}(x,\pi(\cdot|x))d\mathbb{P}(x)$ is also strictly convex and yields the unique minimizer $\pi^{*}\in\Pi(\mathbb{P},\mathbb{Q})$ which is the OT plan \citep[\wasyparagraph 1.3.1]{backhoff2019existence}.
\end{proof}

\begin{proof}[Proof of Theorem \ref{thm-resolved-kernel}.] To begin with, we expand the functional $\mathcal{L}$:
\begin{eqnarray}
    T^{*}\in\arginf_{T}\mathcal{L}(f^{*},T)\Longleftrightarrow T^{*}\in \arginf_{T}\int_{\mathcal{X}} \left( C\big(x,T_{x}\sharp\mathbb{S}\big)-\int_{\mathcal{Z}} f^{*}\big(T_{x}(z)\big) d\mathbb{S}(z)\right) d\mathbb{P}(x).
    \label{arginf-T-expanded}
\end{eqnarray}
Define $\mu_{x}^{*}\stackrel{def}{=}T^{*}_{x}\sharp\mathbb{S}$. We are going to prove that $\mu_{x}^{*}\equiv \pi^{*}(y|x)$, where $\pi^{*}\in\Pi(\mathbb{P},\mathbb{Q})$ is the (unique) optimal plan; this yields that $T^{*}$ is a stochastic OT map. Since the optimization over functions in NOT equals to the optimization over distributions that they generate \citep[\wasyparagraph 4.1]{korotin2023neural}, we have
\begin{eqnarray}
    \{\mu_{x}^{*}\}\in \arginf_{\{\mu_{x}\}}\int_{\mathcal{X}} \big( C\big(x,\mu_{x}\big)-\int_{\mathcal{Y}} f^{*}(y) d\mu_{x}(y)\big) d\mathbb{P}(x),
    \label{arginf-mu}
\end{eqnarray}
where the $\inf$ is taken over collections of distributions $\mu_{x}\in\mathcal{P}(\mathcal{Y})$ indexed by $\mathcal{X}$. Importantly, \eqref{arginf-mu} can be split into $x\in\mathcal{X}$ independent problems, i.e., we have that
\begin{eqnarray}
    \mu_{x}^{*}\in\arginf_{\mu}\big[C\big(x,\mu_{x}\big)-\int_{\mathcal{Y}} f^{*}(y) d\mu_{x}(y)\big]
    \label{arginf-mu-sep}
\end{eqnarray}
holds true $\mathbb{P}$-almost surely for all $x\in\mathcal{X}$. Note that the functional $\mu\mapsto C(x,\mu)-\int_{\mathcal{Y}}f^{*}(y)d\mu(y)$ consists of a strictly convex term $C(x,\mu)$, which follows from the proof of Lemma \ref{lemma-unique-plan}, and a linear term (integral over $\mu$). Therefore, the functional itself is strictly convex. Since $\pi^{*}(y|x)$ minimizes this functional, it is the unique solution due to the strict convexity. Therefore, $\mu^{*}(x)=\pi^{*}(y|x)$ holds true $\mathbb{P}$-almost surely for $x\in\mathcal{X}$ and $T^{*}_{x}\sharp\mathbb{S}=\mu_{x}^{*}=\pi^{*}(y|x)$, i.e., $T^{*}$ is a stochastic OT map.
\end{proof}

\begin{proof}[Proof of Proposition \ref{prop-var-sim}]
Since $\pi_{\gamma_{1}}^{*}$ is optimal for the $\gamma_{1}$-weak cost, for all $\pi\in\Pi(\mathbb{P},\mathbb{Q})$ it holds $\Cost_{k,\gamma_{1}}(\pi_{\gamma_1}^{*})\leq \Cost_{k,\gamma_{1}}(\pi)$. In particular, for $\pi=\pi_{\gamma_2}^{*}$ it holds true that
\begin{equation}
    \Dist_{u}^{2}(\pi_{\gamma_1}^{*})-\gamma_{1} \CVar_{u}(\pi_{\gamma_1}^{*})\leq \Dist_{u}^{2}(\pi_{\gamma_2}^{*})-\gamma_{1} \CVar_{u}(\pi_{\gamma_2}^{*}).
    \label{gamma1-smaller-gamma2}
\end{equation}
Analogously, $\pi_{\gamma_{2}}^{*}$ is optimal for the $\gamma_{2}$-weak cost; the following holds:
\begin{equation}
    \Dist_{u}^{2}(\pi_{\gamma_2}^{*})\!-\!\gamma_{2} \CVar_{u}(\pi_{\gamma_2}^{*})\!\leq\! \Dist_{u}^{2}(\pi_{\gamma_1}^{*})\!-\!\gamma_{2} \CVar_{u}(\pi_{\gamma_1}^{*}).
    \label{gamma2-smaller-gamma1}
\end{equation}
We sum \eqref{gamma1-smaller-gamma2} and \eqref{gamma2-smaller-gamma1} and obtain
$$-\gamma_{1} \CVar_{u}(\pi_{\gamma_1}^{*})-\gamma_{2} \CVar_{u}(\pi_{\gamma_2}^{*})\leq -\gamma_{1} \CVar_{u}(\pi_{\gamma_2}^{*})-\gamma_{2} \CVar_{u}(\pi_{\gamma_1}^{*}),$$
or, equivalently,
$$(\gamma_{2}-\gamma_{1})\CVar_{u}(\pi_{\gamma_1}^{*})\leq (\gamma_{2}-\gamma_{1})\CVar_{u}(\pi_{\gamma_2}^{*}),$$
which is equivalent to $\CVar_{u}(\pi_{\gamma_1}^{*})\leq \CVar_{u}(\pi_{\gamma_2}^{*})$ since $\gamma_{2}-\gamma_{1}>0$. Now we multiply \eqref{gamma1-smaller-gamma2} and \eqref{gamma2-smaller-gamma1} by $\gamma_{2}$ and $\gamma_{1}$, respectively, and sum the resulting inequalities. We obtain
$$\gamma_{2}\Dist_{u}^{2}(\pi_{\gamma_1}^{*})+\gamma_{1}\Dist_{u}^{2}(\pi_{\gamma_2}^{*})\leq\gamma_{2}\Dist_{u}^{2}(\pi_{\gamma_2}^{*})+\gamma_{1}\Dist_{u}^{2}(\pi_{\gamma_1}^{*}),$$
or, equivalently,
$$(\gamma_{2}-\gamma_{1})\Dist_{u}^{2}(\pi_{\gamma_1}^{*})\leq(\gamma_{2}-\gamma_{1})\Dist_{u}^{2}(\pi_{\gamma_2}^{*}),$$
which provides $\Dist_{u}(\pi_{\gamma_1}^{*})\leq\Dist_{u}(\pi_{\gamma_2}^{*})$. Finally, we note that 
\begin{eqnarray}\Cost_{k,\gamma_{2}}(\pi_{\gamma_{2}}^{*})=\inf_{\pi\in\Pi(\mathbb{P},\mathbb{Q})}\Cost_{k,\gamma_{2}}(\pi)\leq \Cost_{k,\gamma_{2}}(\pi_{\gamma_{1}}^{*})=
\nonumber
\\
\Cost_{k,\gamma_{1}}(\pi_{\gamma_{1}}^{*})-\underbrace{(\gamma_{2}-\gamma_{1})}_{\geq 0}\underbrace{\CVar_{u}(\pi_{\gamma_{1}}^{*})}_{\geq 0}\geq  \Cost_{k,\gamma_{1}}(\pi_{\gamma_{1}}^{*}),
\end{eqnarray}
which concludes the proof.
\end{proof}



\section{Weak Quadratic vs. Kernel Costs on Real Data}
\label{sec-exp-fluctuation}

The goal of this section is to demonstrate that our proposed kernel cost \eqref{u-cost} consistently outperforms the weak quadratic cost \eqref{weak-w2-cost} in the downstream task of unpaired image translation. We consider the distance-induced kernel $k(x,y)=\frac{1}{2}\|x\|+\frac{1}{2}\|y\|-\frac{1}{2}\|x-y\|$. In short, we run NOT \citep[Algorithm 1]{korotin2023neural} multiple times (with various random seeds) with the same hyperparameters (Appendix \ref{sec-details}) for weak and kernel costs, and then we compare the obtained FID ($\mu\pm \sigma$).

\vspace{-1mm}\textbf{Datasets.} We consider \textit{shoes} $\rightarrow$ \textit{handbags}
and \textit{celeba (female)} $\rightarrow$ \textit{anime} translation. We work only with small $32\times 32$ images to speed up the training and be able to run many experiments.

\vspace{-1mm}\textsc{\textbf{Experiment 1.}} We consider the $\gamma$-\textbf{weak} quadratic cost for each $\gamma\in \{\frac{1}{2}, 1,\frac{3}{2},2\}$ we run NOT with 5 different random seeds and train it for $60$k iterations of $f_{\omega}$. In this experiment, during training, we evaluate \textit{test} FID every 1k iteration and for each experiment we report $3$ FID values: the \textbf{best} FID value of iterations $25$-$60$k\footnote{We choose $25$k iterations as the starting point because at this time point FID roughly stabilizes at a small level. This indicates the model has nearly converged and starts fluctuating around the optimum.} indicating \textit{what the model can achieve best}, the \textbf{max} FID value of iterations $25$-$60$k indicating \textit{what the model achieves worst} (because of potential training instabilities) and the \textbf{last} FID value at the end of training ($60$k) showing \textit{what the model actually achieved}.

\vspace{-1mm}The experimental results of Tables \ref{table-fid-gamma-not-knot}, \ref{table-fid-gamma-not-knot-celeba} provide several important insights. First, we see that with the increase of parameter $\gamma$, the \textit{best} FID stably increases. Second, the overall training becomes less stable: \textit{max} FID becomes extremely large, which indicates severe fluctuations of the model. In particular, both the mean and standard deviation of \textit{last} FID (as well as \textit{max} FID) drastically increase. 

\vspace{-1mm}Why does this happen? We think that with the increase of $\gamma$ sets $U_{\psi}(\cdot)$ \eqref{characterization-arginf} become large. These sets determine how much a fake solution may vary (Theorem \ref{lemma-characterization}). Thus, this naturally leads to high ambiguity of the solutions and results in unstable and unpredictable behaviour.

\vspace{-1mm}\textsc{\textbf{Experiment 2.}} We pick the \textbf{highest} considered value $\gamma=2$ and show that our $\gamma$-weak kernel cost performs better than the weak quadratic cost from the previous experiments.

\vspace{-1mm}We run NOT with the kernel cost 5 times and report the results in the same Tables \ref{table-fid-gamma-not-knot}, \ref{table-fid-gamma-not-knot-celeba}. The results show that even for high $\gamma$ the issues with the fluctuation are notably softened. This is seen from the fact that for $\gamma=2$ the gap between the \textit{last} FID (or \textit{max} FID) and \textit{best} much smaller for the kernel cost than for the quadratic cost. In particular, this gap is comparable to the gap for $\gamma=\frac{1}{2}$-weak quadratic cost for which sets $U_{\psi}(\cdot)$ are presumably small and provide less ambiguity to the solutions.

\begin{table}[!h]
\vspace{-2mm}
\centering
\scriptsize
\begin{tabular}{|c|c|c|c|c|c|c|}
\toprule
\textbf{Method} &
\multicolumn{4}{c|}{Weak quadratic, $C_{2, \gamma}$} & \multicolumn{1}{c|}{\makecell{Weak kernel\\ $C_{k, \gamma}$ [Ours]}} \\ \midrule
Setting & 
$\gamma = 0.5$ & $\gamma = 1$ & $\gamma = 1.5$ & $\gamma = 2$ & $\gamma = 2$ \\ \midrule
Best FID & 
\makecell{$16.08$ $\pm$ $0.37$} & \makecell{$19.87$ $\pm$ $0.69$} & \makecell{$23.01$ $\pm$ {\color{orange}$5.16$}} & \makecell{$25.36$ $\pm$ $1.26$} & \makecell{$16.50$ $\pm$ $0.88$} \\ \midrule
Last FID & 
\makecell{$21.98$ $\pm$ $3.04$} & \makecell{{\color{red}$33.56$} $\pm$ {\color{orange}$10.49$}} & \makecell{{\color{red}$30.35$} $\pm$ {\color{orange}$8.85$}} & \makecell{{\color{red}$37.18$} $\pm$ {\color{orange}$8.58$}} & \makecell{$20.19$ $\pm$ $2.36$} \\ \midrule
Max FID & 
\makecell{$28.40$ $\pm$ $3.18$} & \makecell{{\color{red}$52.03$} $\pm$ {\color{orange}$16.00$}} & \makecell{{\color{red}$46.59$} $\pm$ {\color{orange}$9.53$}} & \makecell{{\color{red}$67.50$} $\pm$ {\color{orange}$30.76$}} & \makecell{$29.85$ $\pm$ $3.75$} \\ \bottomrule 
\end{tabular}
\vspace{-1mm}
\caption{\centering Test FID$\downarrow$ ($\mu\pm\sigma$) on \textit{shoes} $\rightarrow$ \textit{handbags}, $32 \times 32$ of $C_{2, \gamma}$ and $C_{k, \gamma}$ for different $\gamma$.\protect\linebreak {\color{red}Red} color highlights $\mu\geq 30$, {\color{orange}orange} color highlights $\sigma\geq 5$.} \label{table-fid-gamma-not-knot}
\end{table}
\begin{table}[!h]
\vspace{-2mm}
\centering
\scriptsize
\begin{tabular}{|c|c|c|c|c|c|c|}
\toprule
\textbf{Method} &
\multicolumn{4}{c|}{Weak quadratic, $C_{2, \gamma}$} & \multicolumn{1}{c|}{\makecell{Weak kernel\\ $C_{k, \gamma}$ [Ours]}} \\ \midrule
Setting & 
$\gamma = 0.5$ & $\gamma = 1$ & $\gamma = 1.5$ & $\gamma = 2$ & $\gamma = 2$ \\ \midrule
Best FID & 
\makecell{$18.88$ $\pm$ $0.57$} & \makecell{$34.71$ $\pm$ {\color{orange}$6.48$}} & \makecell{$24.49$ $\pm$ $0.64$} & \makecell{{\color{red}$40.49$} $\pm$ {\color{orange}$5.57$}} & \makecell{$19.39$ $\pm$ $1.06$} \\ \midrule
Last FID & 
\makecell{$20.26$ $\pm$ $1.98$} & \makecell{$36.89$ $\pm$ {\color{orange}$8.19$}} & \makecell{$28.49$ $\pm$ $1.66$} & \makecell{{\color{red}$49.66$} $\pm$ {\color{orange}$5.45$}} & \makecell{$20.62$ $\pm$ $1.66$} \\ \midrule
Max FID & 
\makecell{$29.51$ $\pm$ $1.45$} & \makecell{{\color{red}$81.78$} $\pm$ {\color{orange}$25.93$}} & \makecell{{\color{red}$48.44$} $\pm$ {\color{orange}$5.14$}} & \makecell{{\color{red}$76.25$} $\pm$ {\color{orange}$4.26$}} & \makecell{$36.86$ $\pm$ $0.45$} \\ \bottomrule 
\end{tabular}
\vspace{-1mm}
\caption{\centering Test FID$\downarrow$ ($\mu\pm\sigma$) on \textit{celeba (female)} $\rightarrow$ \textit{anime}, $32 \times 32$ of $C_{2, \gamma}$ and $C_{k, \gamma}$ for different $\gamma$.\protect\linebreak {\color{red}Red} color highlights $\mu\geq 40$, {\color{orange}orange} color highlights $\sigma\geq 4$.} \label{table-fid-gamma-not-knot-celeba}
\end{table}
\vspace{-1mm}\textbf{Conclusion.} Our empirical evaluation shows that NOT with our proposed kernel costs yields more stable behaviour than NOT with the weak quadratic cost. This agrees with our theory which suggests that one of the reasons for unstable behaviour and severe fluctuations might be the existence of the fake solutions (\wasyparagraph\ref{sec-issues-w2}). Our weak kernel cost removes all the fake solutions (\wasyparagraph\ref{sec-kernel-w2}).

\section{Additional Training Details}
\label{sec-details}

\textbf{Pre-processing.} In all the cases, we rescale RGB channels of images from $[0,1]$ to $[-1,1]$. As in \citep{korotin2023neural}, we beforehand rescale anime face images to $512\times 512$, and do $256\times 256$ crop with the center located $14$ pixels above the image center to get the face. Next, for all the datasets except for the describable textures, we resize images to the required size ($64\times 64$ or $128\times 128$). Specifically for the describable textures dataset ($\approx$5K textures), we augment the samples. We rescale input textures to minimal border size of 300, do the random resized crop (from $128$ to $300$ pixels) and random horizontal \& vertical flips. Then we resize images to the required size ($64\times 64$ or $128\times 128$).


%

\textbf{Neural networks.} We use WGAN-QC discriminator's ResNet architecture \citep{liu2019wasserstein} for potential $f$. We use UNet\footnote{\url{github.com/milesial/Pytorch-UNet}} \citep{ronneberger2015u} as the stochastic transport map $T(x,z)=T_{x}(z)$. To condition it on $z$, we insert conditional instance normalization (CondIN) layers after each UNet's upscaling block\footnote{\url{github.com/kgkgzrtk/cUNet-Pytorch}}. We use CondIN from AugCycleGAN\footnote{\url{github.com/ErfanMN/Augmented_CycleGAN_Pytorch}} \citep{almahairi2018augmented}. In experiments, $z$ is the 128-dimensional standard Gaussian noise.

\textbf{Optimization.} To learn stochastic OT maps, we use NOT algorithm \citep[Algorithm 1]{korotin2023neural}. We use the Adam optimizer \citep{kingma2014adam} with the default betas for both $T_{\theta}$ and $f_{\omega}$. The learning rate is $lr=1\cdot 10^{-4}$. We use the MultiStepLR scheduler which decreases $lr$ by $2$ after [15k, 25k, 40k, 55k, 70k] (iterations of $f_{\omega}$). The batch size is $|X|=64$, $|Z_x|=4$. The number of inner iterations is $k_{T}=10$. In toy experiments, we do $10$K total iterations of $f_{\omega}$ update. In the image-to-image translation experiments, we observe convergence in $\approx 70$k iterations for $128\times 128$ datasets, in $\approx 40$k iterations for $64\times 64$ datasets. In image-to-image translation, we gradually change $\gamma$. Starting from $\gamma=0$, we linearly increase it to the desired value (mostly $\frac{1}{3}$) during 25K first iterations of $f_{\omega}$.

\textbf{Computational complexity.} NOT with kernel costs for $128\times 128$ images converges in 3-4 days on a 4$\times$ Tesla V100 GPUs (16 GB). This is slightly bigger than the respective time for NOT \citep[\wasyparagraph 6]{korotin2023neural} with the quadratic cost due to the reasons discussed in \wasyparagraph\ref{sec-practical-aspects}.

\newpage
\section{Comparison with NOT with the Quadratic Cost}
\label{sec-not-extra-comparison}

\begin{figure}[!h]
\vspace{-5mm}
\begin{subfigure}[b]{0.32\linewidth}
\centering
\includegraphics[width=0.995\linewidth]{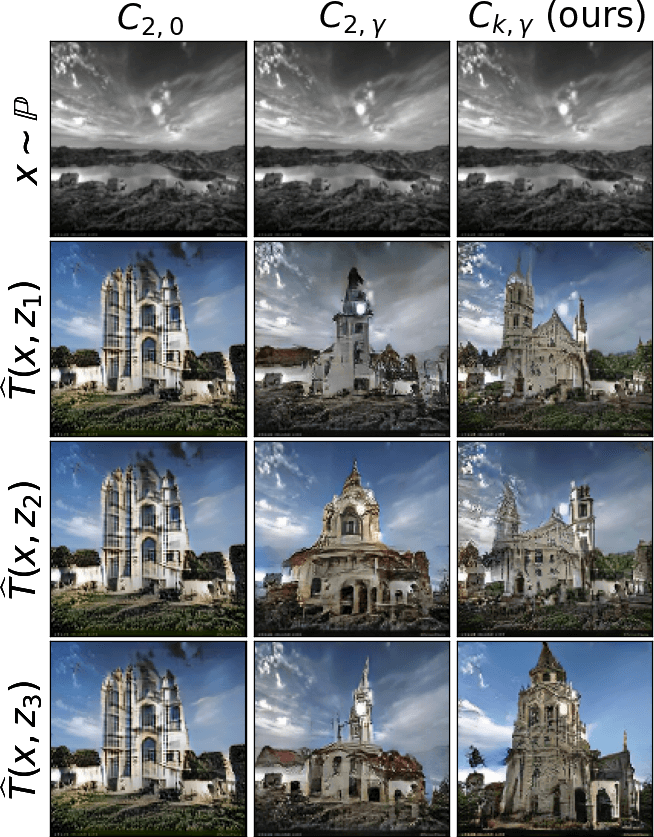}
\caption{\vspace{-1mm}\centering Outdoor $\rightarrow$ church, $128\times 128$.}
\vspace{-1mm}
\end{subfigure}
\vspace{-1mm}\hfill\begin{subfigure}[b]{0.32\linewidth}
\centering
\includegraphics[width=0.995\linewidth]{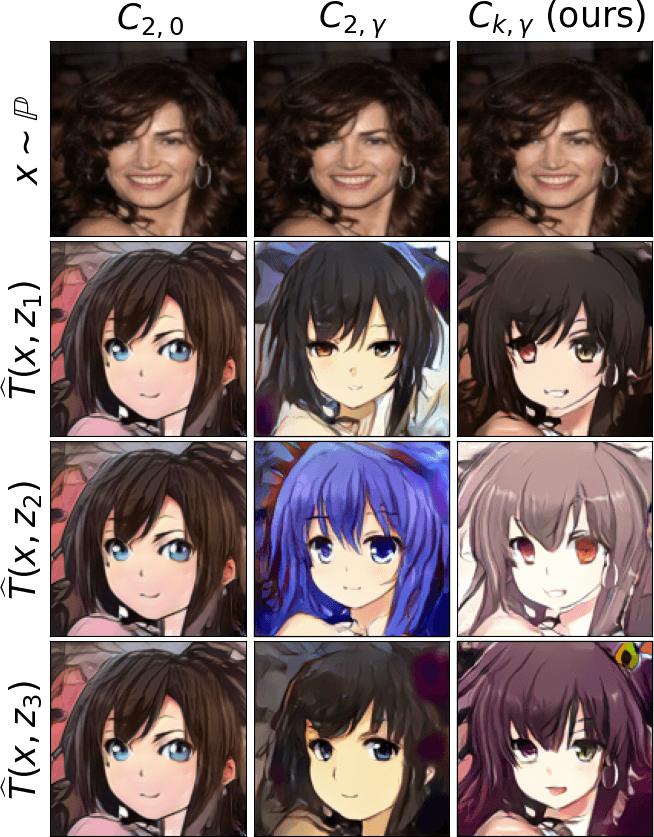}
\caption{\vspace{-1mm}\centering Celeba (f) $\rightarrow$ anime, $128\times 128$.}
\vspace{-1mm}\label{fig:celeba-anime-comparison}
\end{subfigure}
\hfill\begin{subfigure}[b]{0.32\linewidth}
\centering
\includegraphics[width=0.978\linewidth]{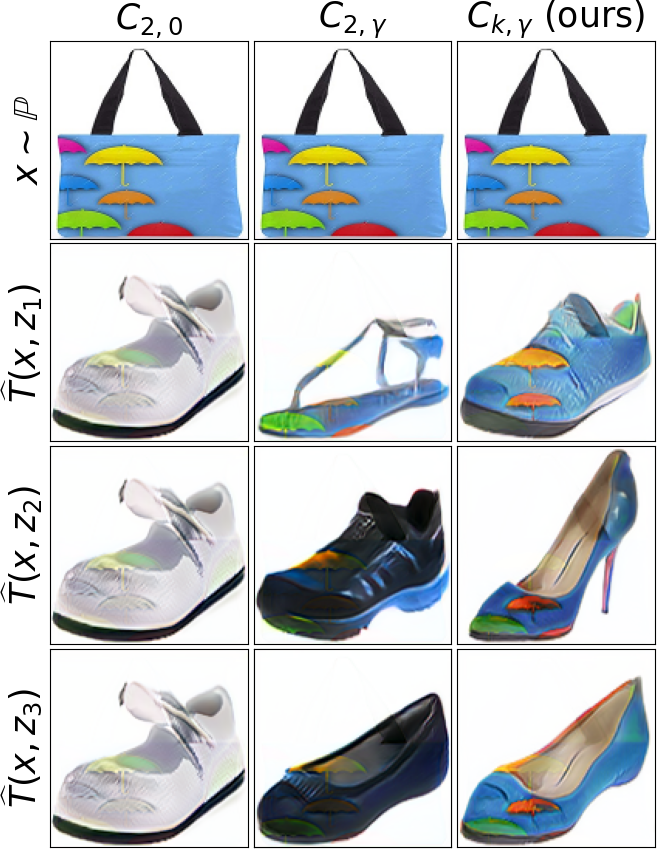}
\caption{\vspace{-1mm}\centering Handbags $\rightarrow$ shoes, $128\times 128$.}
\vspace{-1mm}\label{fig:handbag-shoes-comparison}
\end{subfigure}
\vspace{2mm}\caption{Unpaired translation with Neural Optimal Transport with various costs ($C_{2,0}$, $C_{2,\gamma}$, $C_{k,\gamma}$).}
\label{fig:not-comparison}
\vspace{-4mm}
\end{figure}

\begin{figure}[!h]
\begin{subfigure}[b]{0.33\linewidth}
\centering
\includegraphics[width=0.995\linewidth]{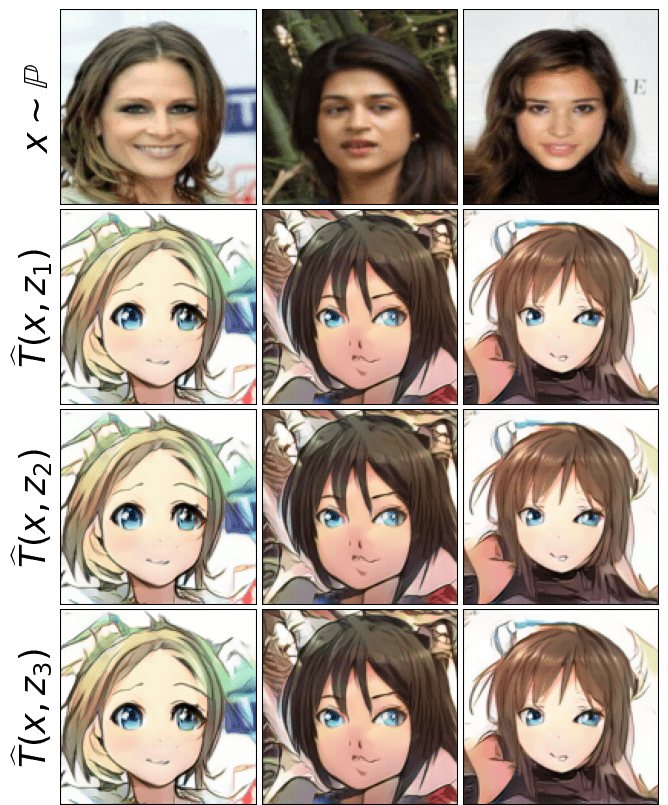}
\caption{\centering $C_{2,0}$\protect\linebreak \citep[\wasyparagraph 5.2]{korotin2023neural}.}
\end{subfigure}
\begin{subfigure}[b]{0.33\linewidth}
\centering
\includegraphics[width=0.995\linewidth]{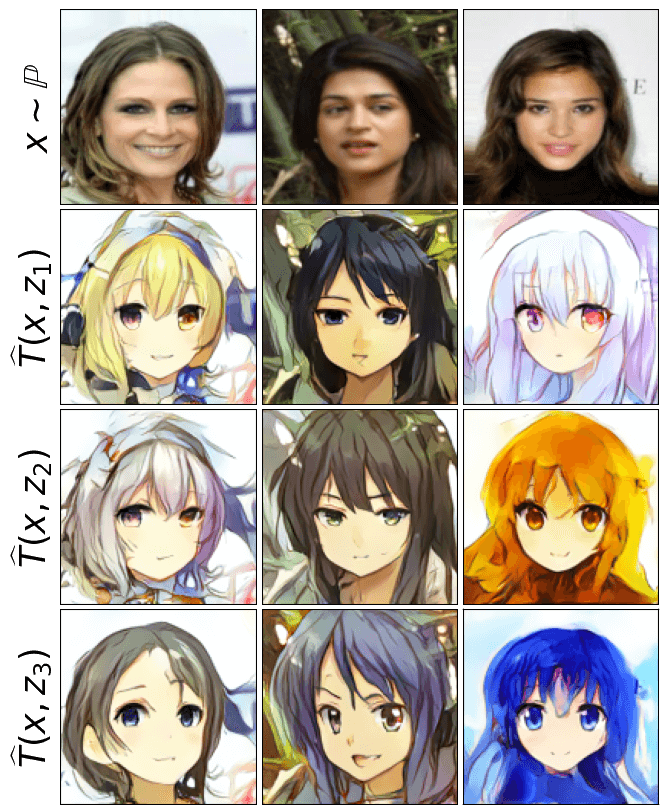}
\caption{\centering $C_{2,\gamma}$\protect\linebreak \citep[\wasyparagraph 5.3]{korotin2023neural}.}
\end{subfigure}
\begin{subfigure}[b]{0.33\linewidth}
\centering
\includegraphics[width=0.995\linewidth]{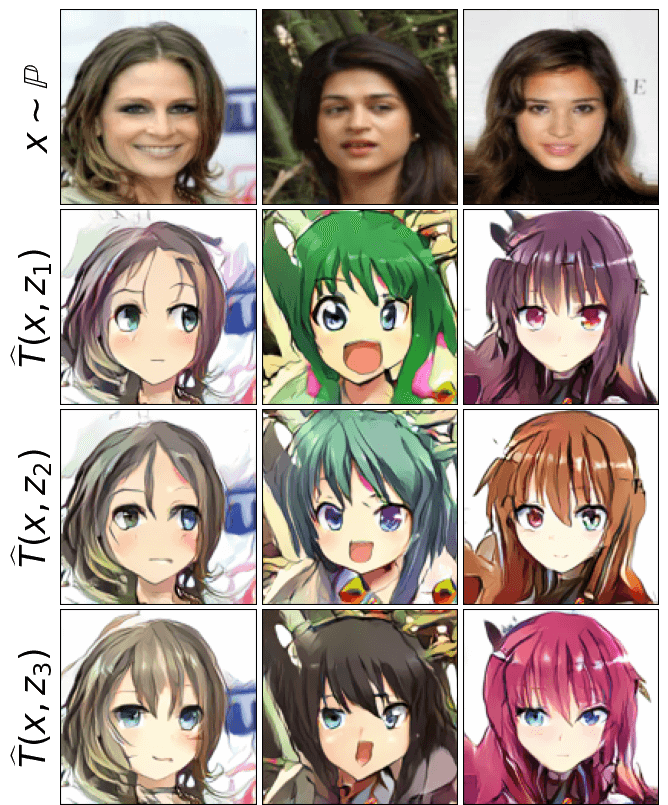}
\caption{\centering $C_{k,\gamma}$ [Ours].\vspace{3.5mm}}
\end{subfigure}
\caption{Celeba (f) $\rightarrow$ anime ($128\times 128$) translation with NOT with costs $C_{2,0}$, $C_{2,\gamma}$, $C_{k,\gamma}$.}
\end{figure}
\begin{figure}[!h]
\begin{subfigure}[b]{0.33\linewidth}
\centering
\includegraphics[width=0.995\linewidth]{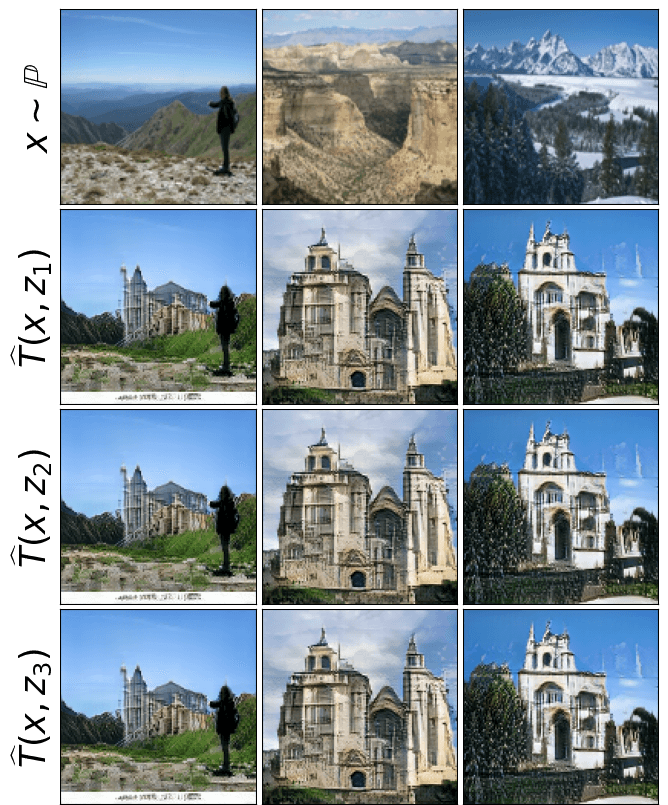}
\caption{\centering $C_{2,0}$ \citep[\wasyparagraph 5.2]{korotin2023neural}.}
\end{subfigure}
\begin{subfigure}[b]{0.33\linewidth}
\centering
\includegraphics[width=0.995\linewidth]{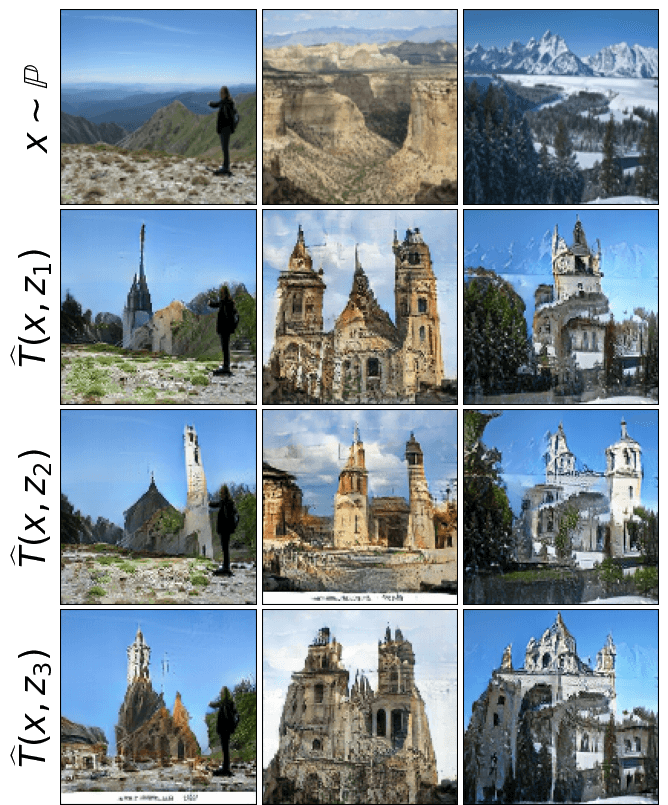}
\caption{\centering $C_{2,\gamma}$ \citep[\wasyparagraph 5.3]{korotin2023neural}.}
\end{subfigure}
\begin{subfigure}[b]{0.33\linewidth}
\centering
\includegraphics[width=0.995\linewidth]{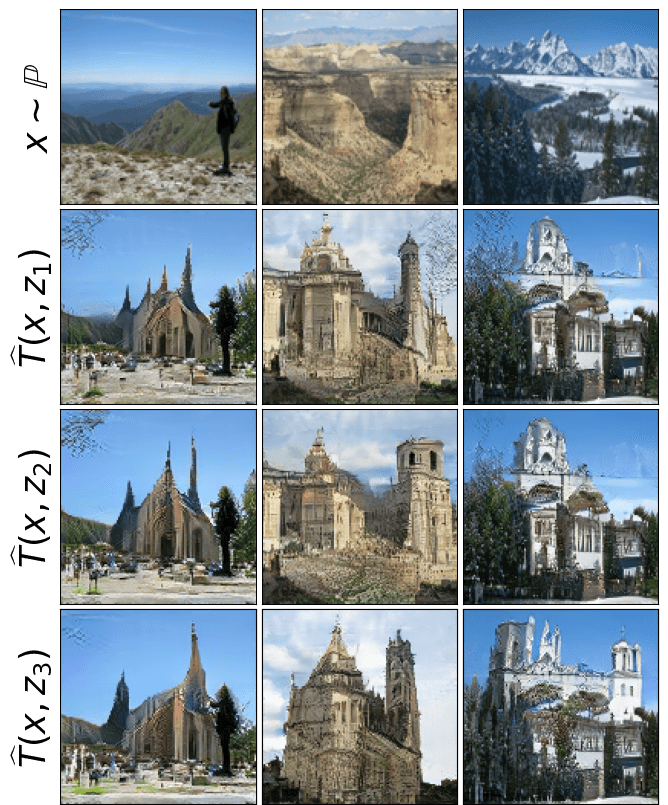}
\caption{\centering $C_{k,\gamma}$ [Ours].}
\end{subfigure}
\caption{Outdoor $\rightarrow$ church ($128\times 128$) translation with NOT with costs $C_{2,0}$, $C_{2,\gamma}$, $C_{k,\gamma}$.}
\end{figure}
\begin{figure}[!h]
\begin{subfigure}[b]{0.33\linewidth}
\centering
\includegraphics[width=0.995\linewidth]{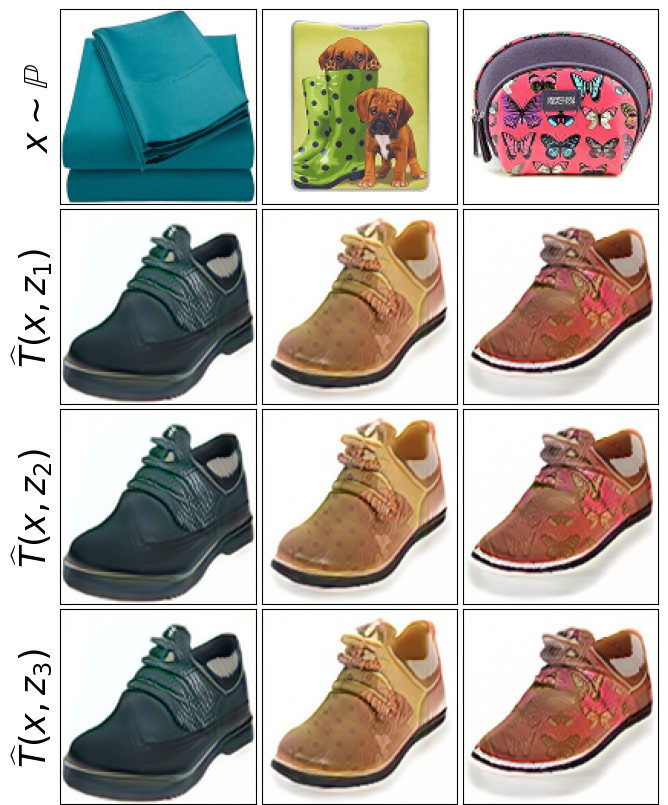}
\caption{\centering $C_{2,0}$ \citep[\wasyparagraph 5.2]{korotin2023neural}.}
\end{subfigure}
\begin{subfigure}[b]{0.33\linewidth}
\centering
\includegraphics[width=0.995\linewidth]{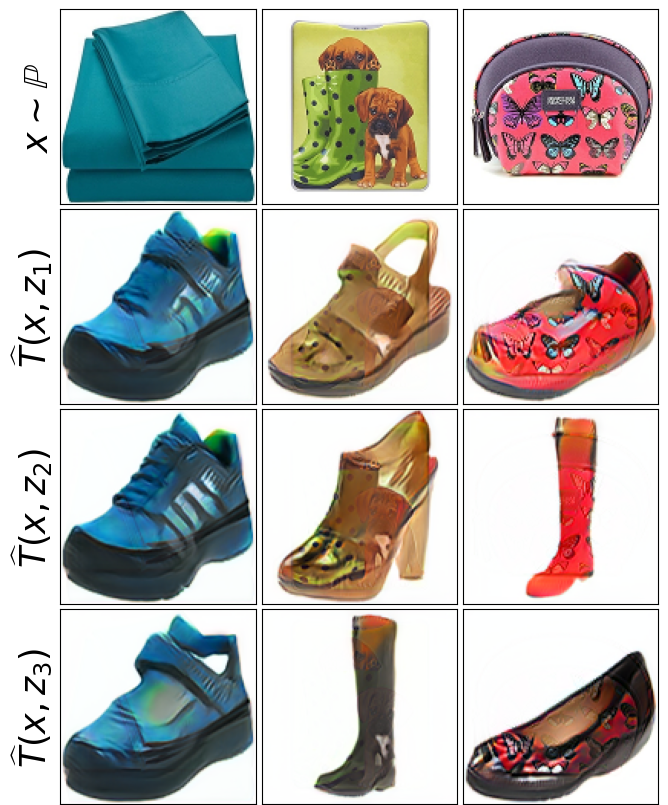}
\caption{\centering $C_{2,\gamma}$ \citep[\wasyparagraph 5.3]{korotin2023neural}.}
\end{subfigure}
\begin{subfigure}[b]{0.33\linewidth}
\centering
\includegraphics[width=0.995\linewidth]{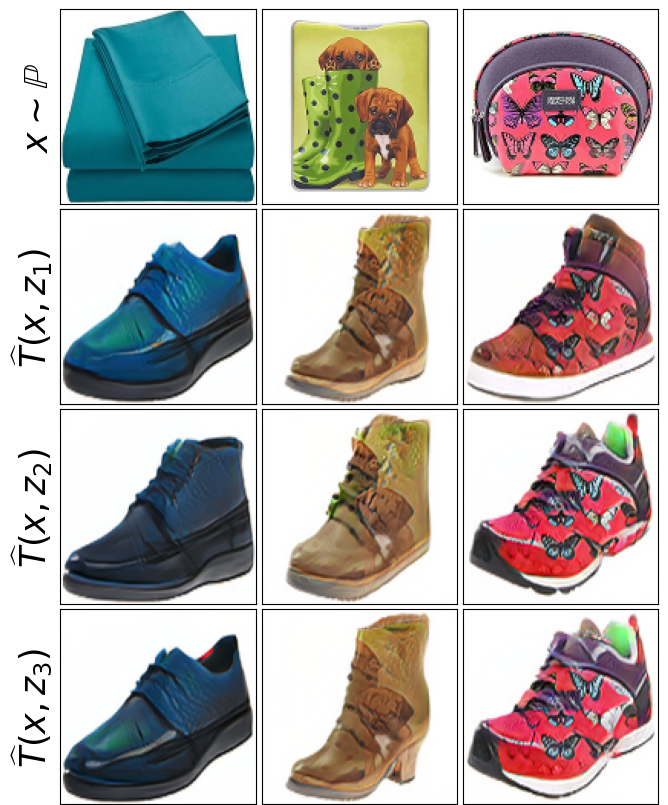}
\caption{\centering $C_{k,\gamma}$ [Ours].}
\end{subfigure}
\caption{Handbags $\rightarrow$ shoes ($128\times 128$) translation with NOT with costs $C_{2,0}$, $C_{2,\gamma}$, $C_{k,\gamma}$.}
\end{figure}

\newpage
\section{Comparison with Image-to-Image Translation Methods}
\label{sec-exp-comparison}

We compare NOT with our kernel costs with \textit{principal} models (one-to-one and one-to-many) for unpaired image-to-image translation. We consider CycleGAN \footnote{\url{github.com/eriklindernoren/PyTorch-GAN/tree/master/implementations/cyclegan}}\citep{zhu2017unpaired}, AugCycleGAN\footnote{\url{github.com/aalmah/augmented_cyclegan}} \citep{almahairi2018augmented} and MUNIT\footnote{\url{github.com/NVlabs/MUNIT}} \citep{huang2018multimodal} for comparison. We use the official or community implementations with the hyperparameters from the respective papers. We consider \textit{outdoor} $\rightarrow$ \textit{church} and \textit{texture} $\rightarrow$ \textit{shoes} dataset pairs ($128\times 128$). The FID scores are given in Table \ref{table-fid-ext}. Qualitative examples are shown in Figures \ref{fig:outdoor-church-comparison}, \ref{fig:texture-shoes}.
\begin{table}[!h]
\centering
\small
\begin{tabular}{|c|c|c|c|c|c|}
\toprule
\textbf{Method} & \multicolumn{2}{c|}{\textbf{One-to-one}} & \multicolumn{3}{c|}{\textbf{One-to-many}} \\ \midrule
\makecell{\textbf{Datasets}\\($128\times 128$)} & \makecell{Cycle GAN\\ (with $\mathcal{L}_{1}$ loss)} & \makecell{Cycle GAN\\ (no $\mathcal{L}_{1}$ loss)} & AugCycleGAN & MUNIT & \makecell{NOT with $C_{k,\gamma}$\\ (\textbf{Ours})} \\ \midrule
\textit{Outdoor} $\rightarrow$ \textit{church} & 43.74 & 36.16 &  51.15 $\pm$ 0.19 & 32.14 $\pm$ 0.18 & \textbf{15.16} $\pm$ 0.03 \\ \midrule
\textit{Texture} $\rightarrow$ \textit{shoes} & 34.65 $\pm$ 0.12 & 50.95 $\pm$ 0.12 &  N/A & 43.74 $\pm$ 0.16 & \textbf{24.84} $\pm$ 0.09 \\  \bottomrule
\end{tabular}
\vspace{2.5mm}
\caption{\centering Test FID$\downarrow$ of the considered image-to-image translation methods.}
\label{table-fid-ext}
\end{table}

We do not include the results of AugCycleGAN on \textit{texture}$\rightarrow$\textit{shoes} as it did not converge on these datasets ($\text{FID}\!\gg\! 100$). We tried tuning its hyperparameters, but this did not yield improvement.

\begin{figure}[!h]
\begin{subfigure}[b]{0.48\linewidth}
\centering
\includegraphics[width=0.995\linewidth]{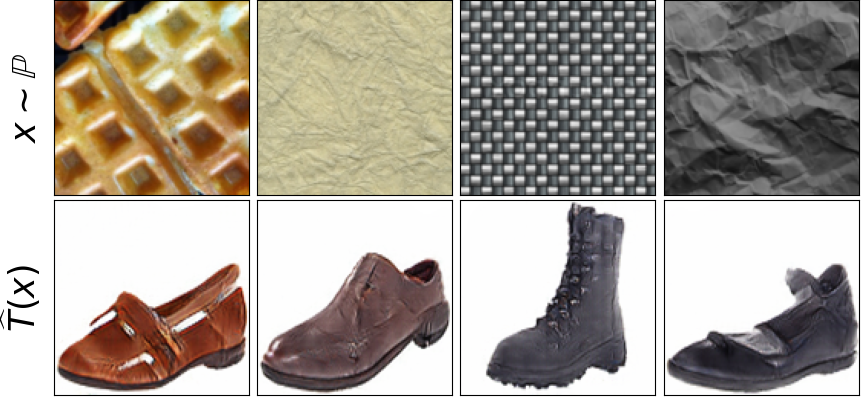}
\caption{CycleGAN}
\end{subfigure}
\begin{subfigure}[b]{0.48\linewidth}
\centering
\includegraphics[width=0.995\linewidth]{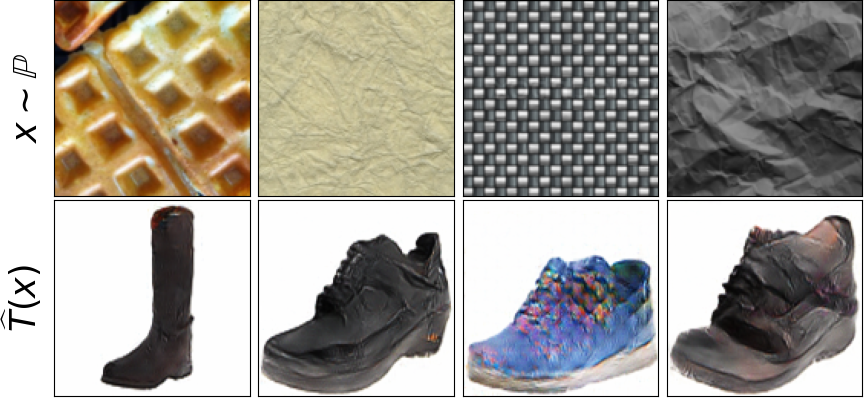}
\caption{CycleGAN (no identity loss)}
\end{subfigure}\vspace{3mm}

\begin{subfigure}[b]{0.48\linewidth}
\centering
\includegraphics[width=0.995\linewidth]{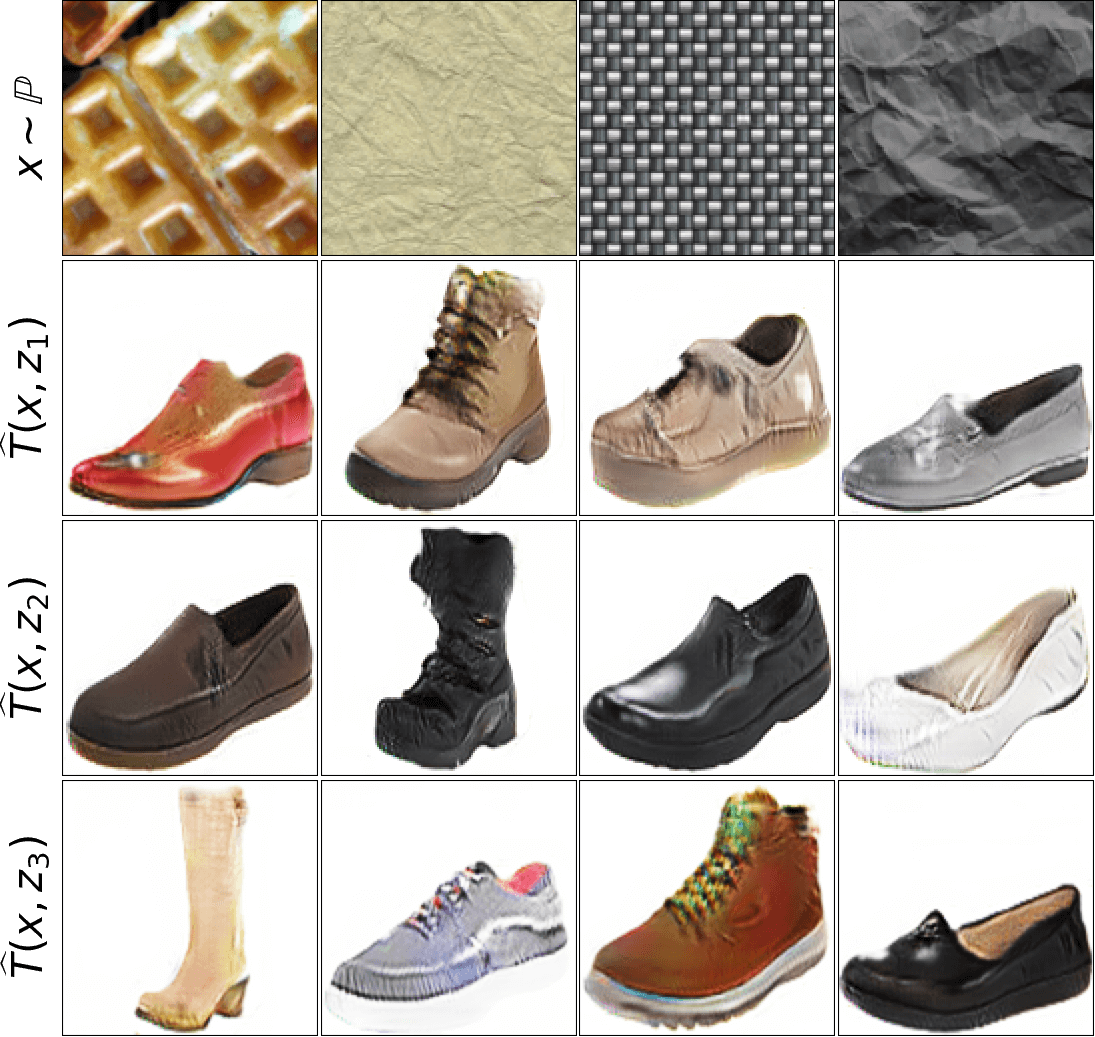}
\caption{MUNIT}
\end{subfigure}
\begin{subfigure}[b]{0.48\linewidth}
\centering
\includegraphics[width=0.995\linewidth]{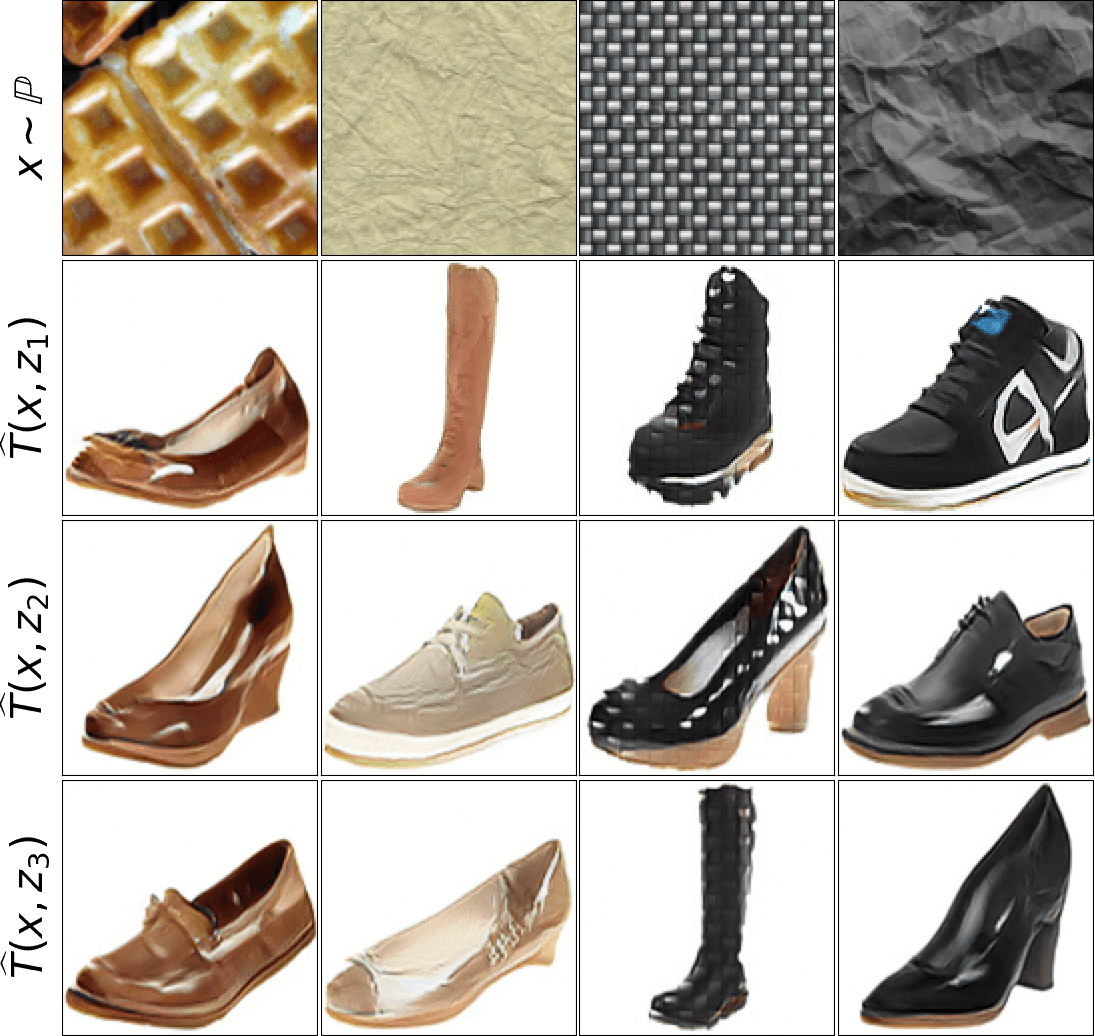}
\caption{NOT with $C_{k,\gamma}$ \textbf{[Ours]}}
\end{subfigure}
\caption{Texture $\rightarrow$ shoes ($128\times 128$) translation with various methods.}
\label{fig:texture-shoes}
\end{figure}

\begin{figure}[!h]
\begin{subfigure}[b]{0.48\linewidth}
\centering
\includegraphics[width=0.995\linewidth]{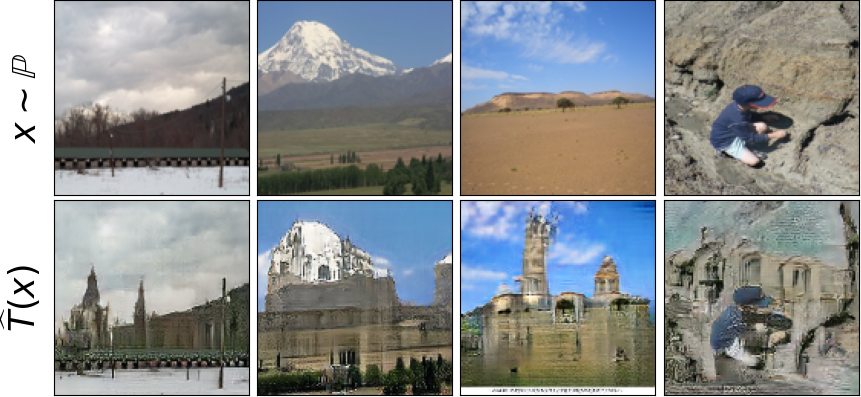}
\caption{CycleGAN (no identity loss)}
\end{subfigure}
\begin{subfigure}[b]{0.48\linewidth}
\centering
\includegraphics[width=0.995\linewidth]{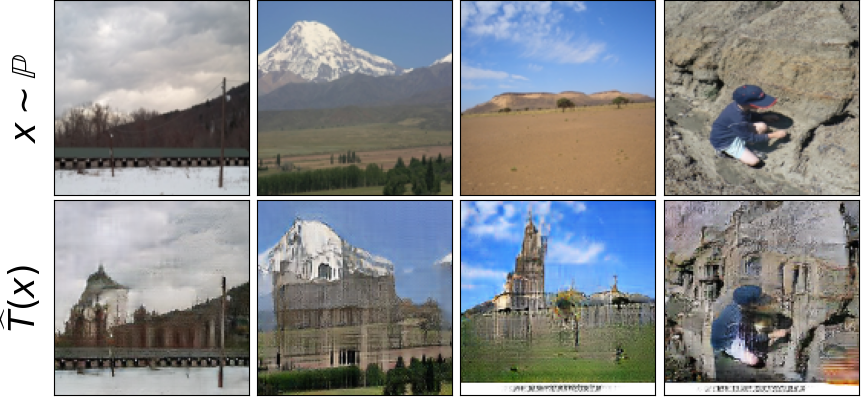}
\caption{CycleGAN}
\end{subfigure}\vspace{3mm}

\begin{subfigure}[b]{0.48\linewidth}
\centering
\includegraphics[width=0.995\linewidth]{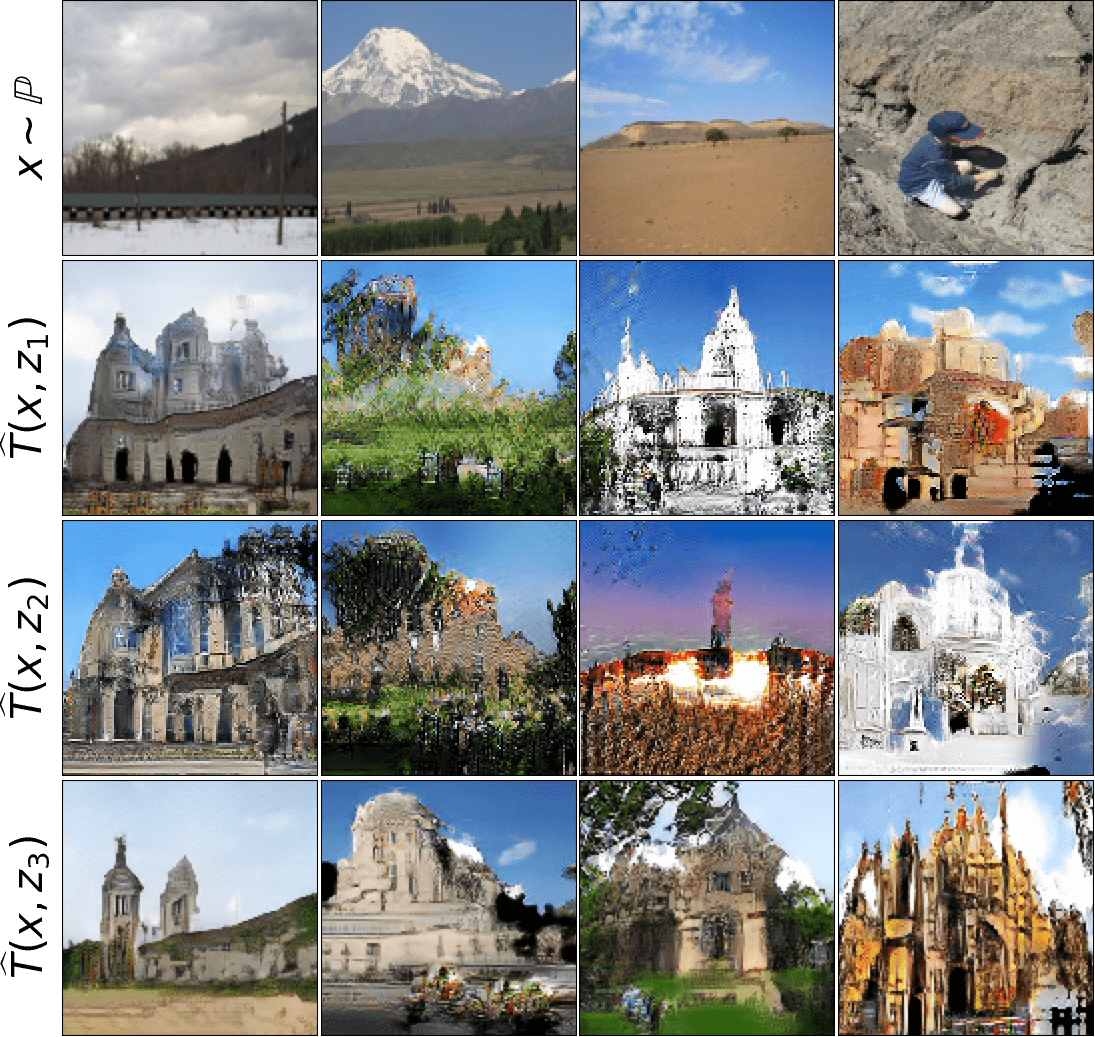}
\caption{MUNIT}
\end{subfigure}
\begin{subfigure}[b]{0.48\linewidth}
\centering
\includegraphics[width=0.995\linewidth]{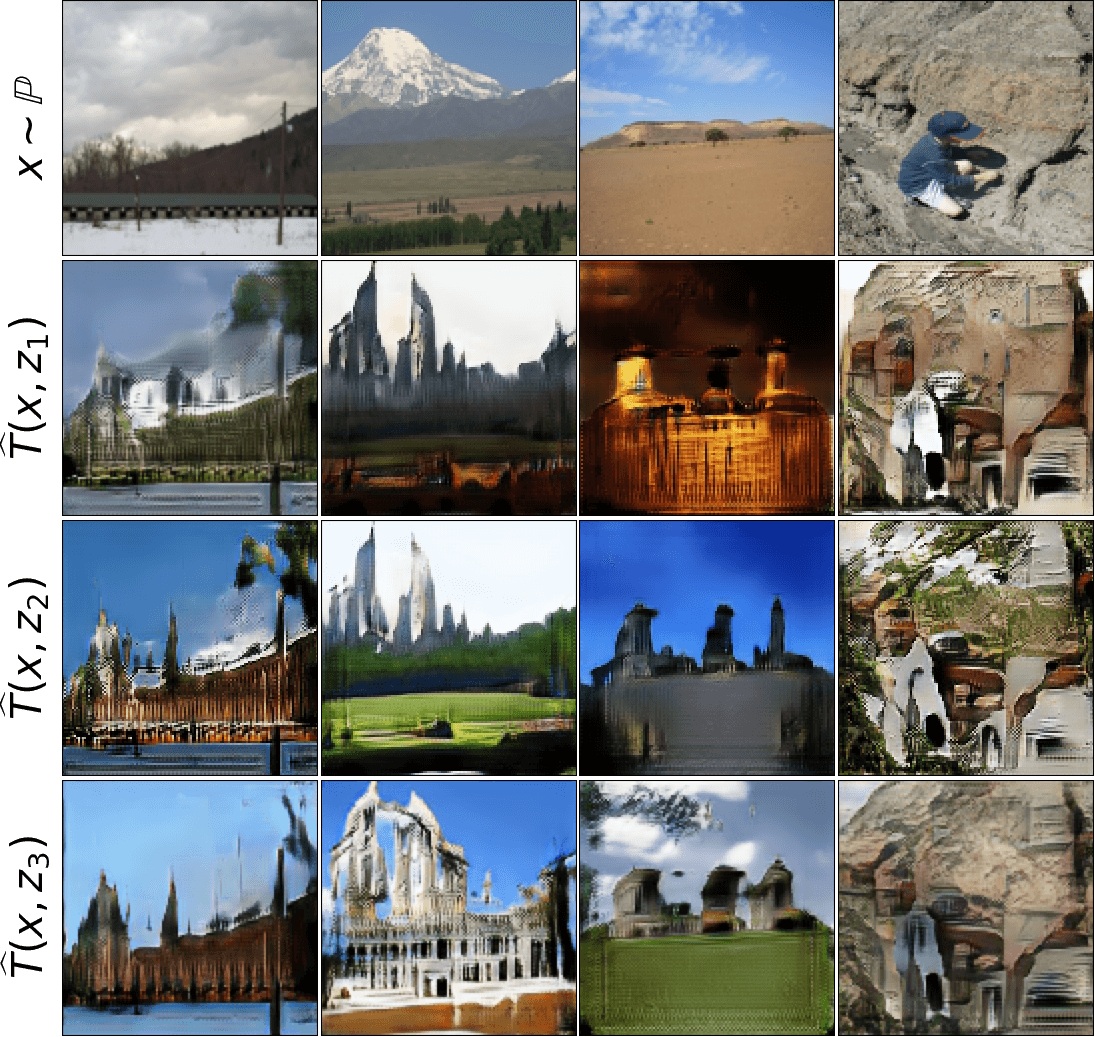}
\caption{AugCycleGAN}
\end{subfigure}\vspace{3mm}

\centering
\begin{subfigure}[b]{0.48\linewidth}
\centering
\includegraphics[width=0.995\linewidth]{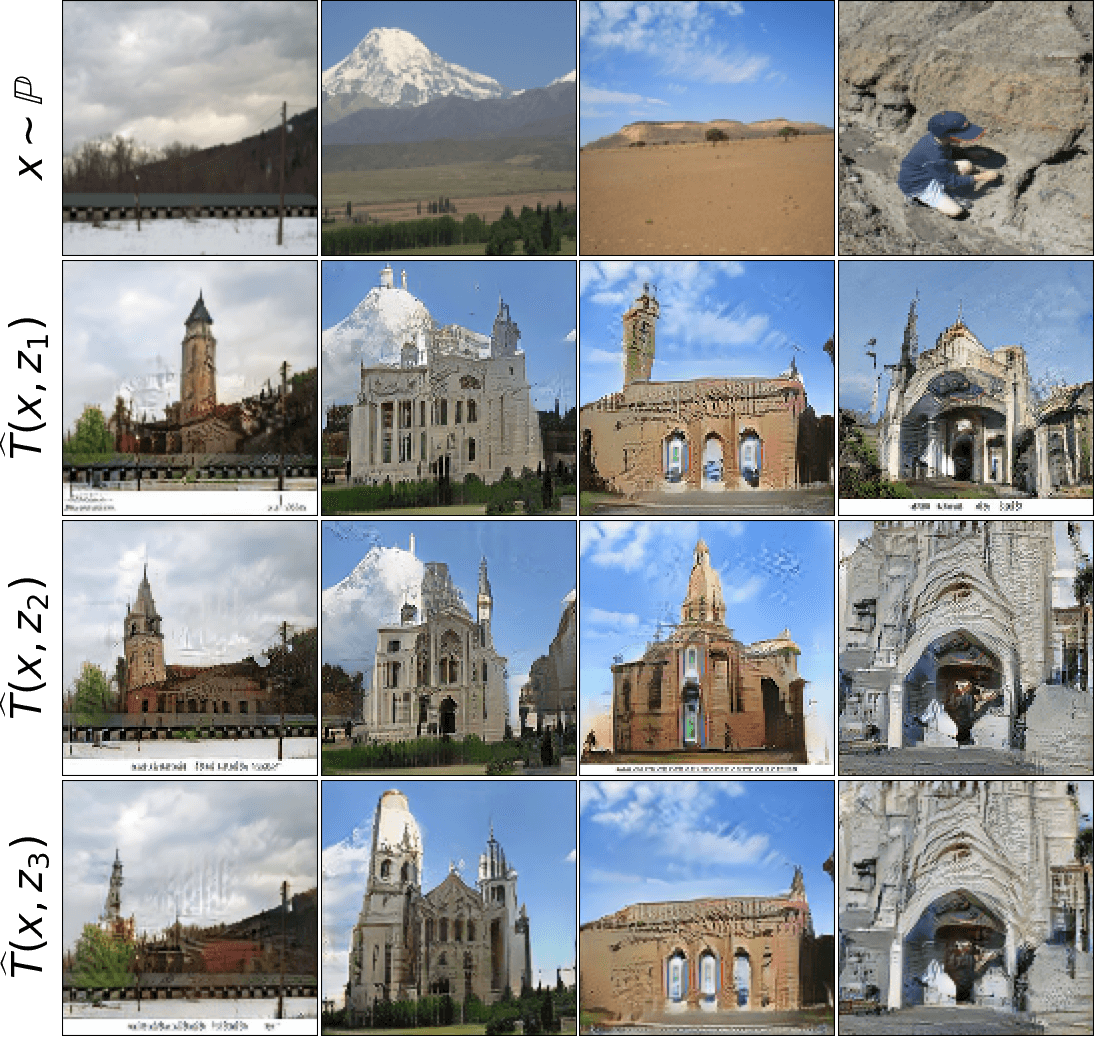}
\caption{NOT with $C_{k,\gamma}$ \textbf{[Ours]}}
\end{subfigure}
\caption{Outdoor $\rightarrow$ church ($128\times 128$) translation with various methods.}
\label{fig:outdoor-church-comparison}
\end{figure}

\clearpage\section{Additional Experimental Results}
\label{sec-extra-results}

\begin{figure*}[!h]
\begin{subfigure}[b]{0.99\linewidth}
\centering
\includegraphics[width=0.995\linewidth]{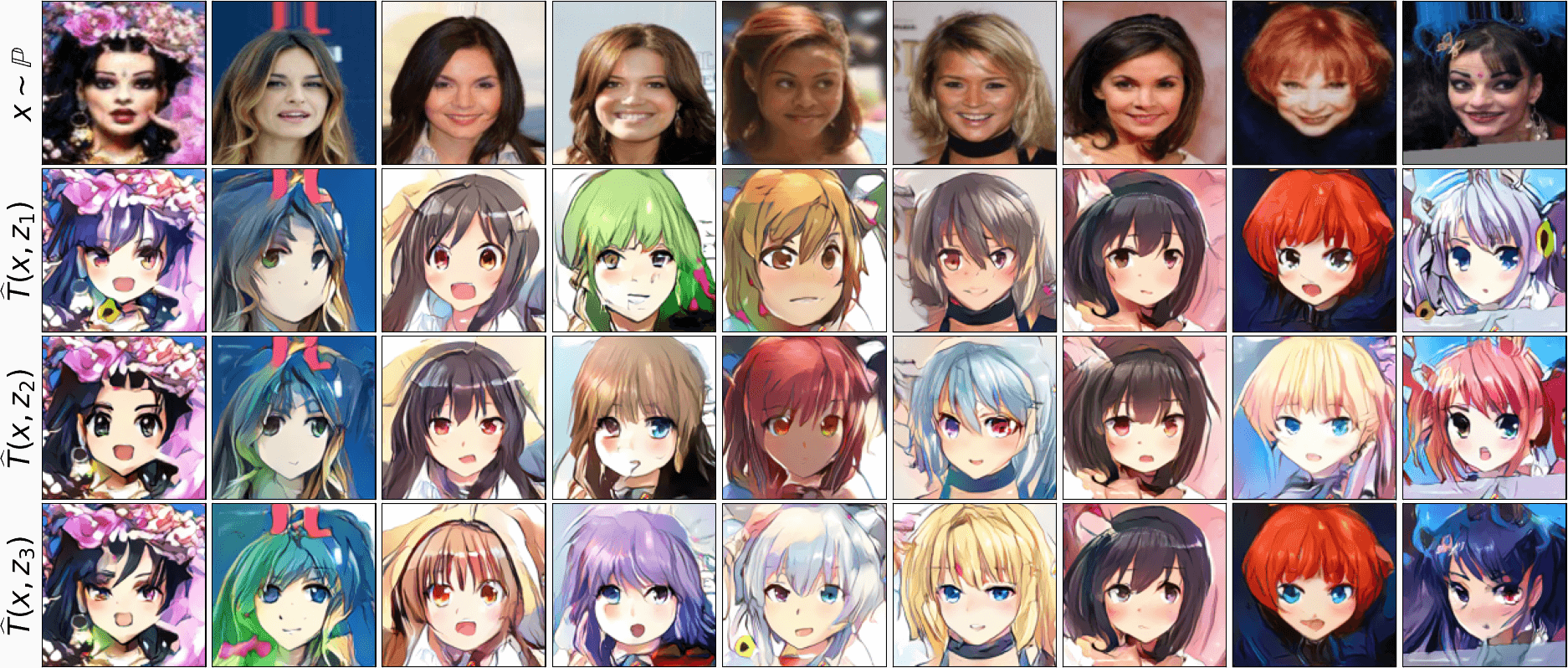}
\caption{\centering Input images $x$ and random translated examples $T_{x}(z)$.}
\end{subfigure}\vspace{2mm}

\begin{subfigure}[b]{0.99\linewidth}
\centering
\includegraphics[width=0.995\linewidth]{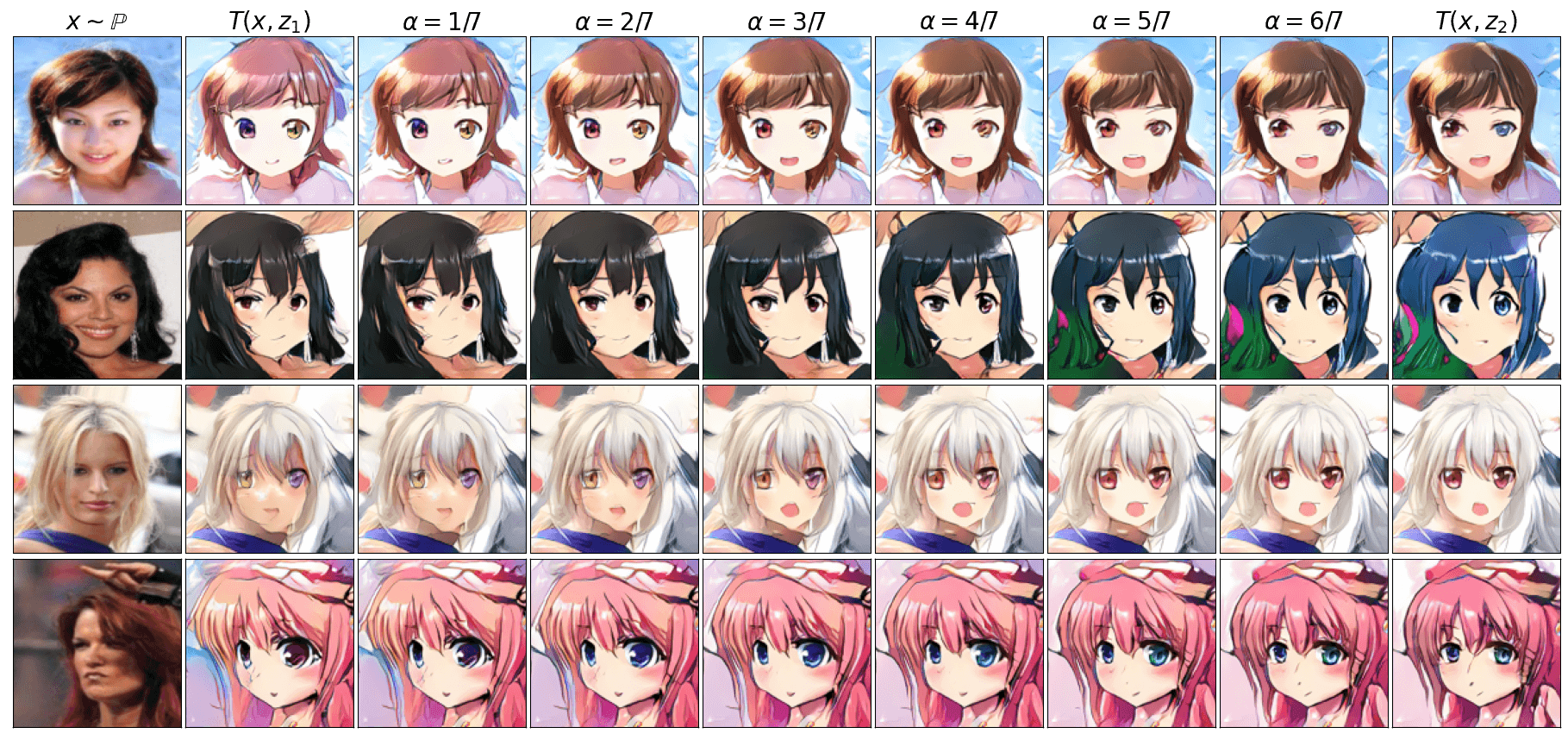}
\caption{\centering Interpolation in the conditional latent space, $z=(1-\alpha)z_{1}+\alpha z_{2}$.}
\end{subfigure}
\caption{\centering Celeba (female) $\rightarrow$ anime translation, $128\times 128$. Additional examples.}
\end{figure*}

\begin{figure*}[!h]
\begin{subfigure}[b]{0.99\linewidth}
\centering
\includegraphics[width=0.995\linewidth]{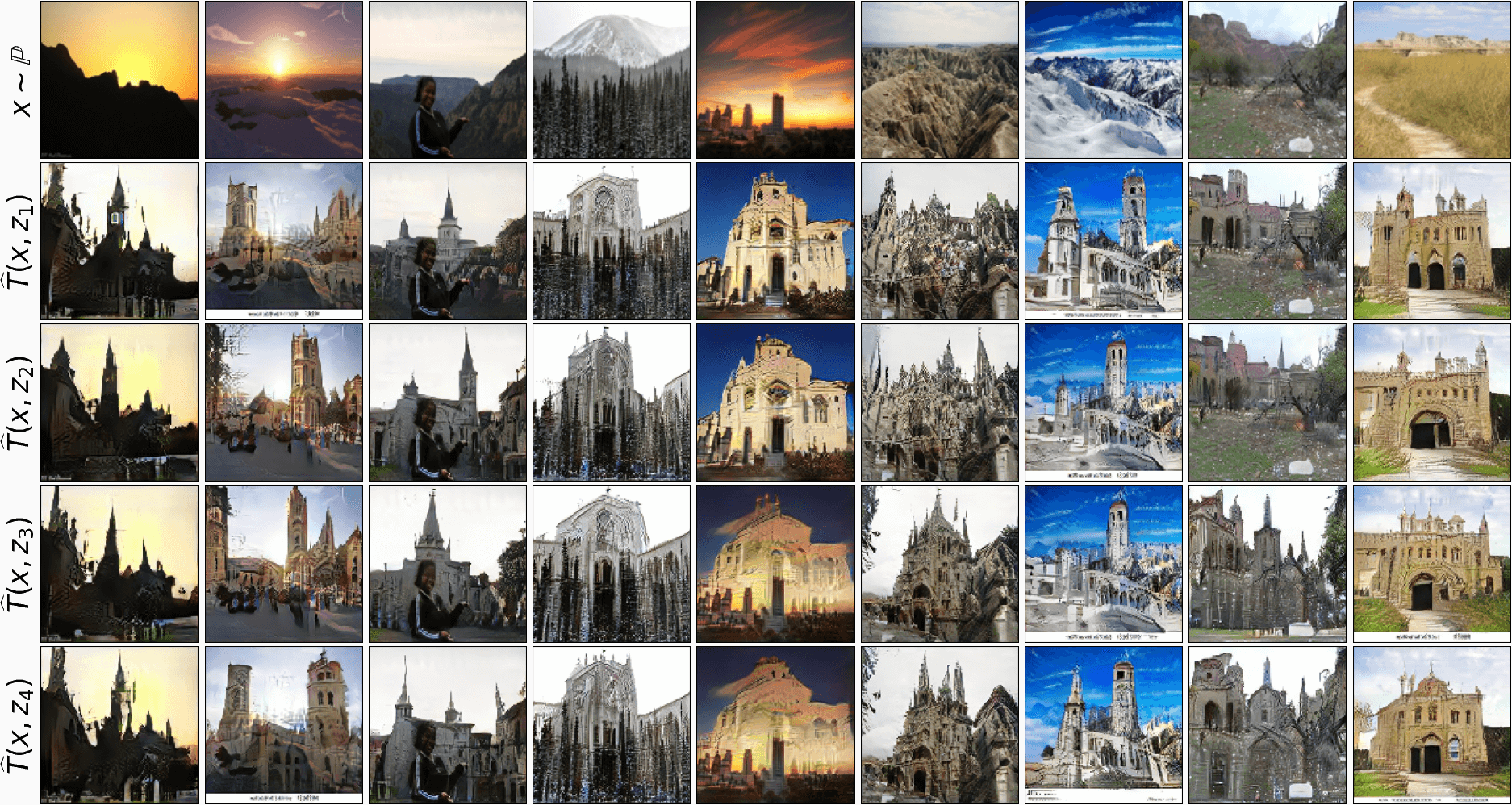}
\caption{\centering Input images $x$ and random translated examples $T_{x}(z)$.}
\end{subfigure}\vspace{2mm}

\begin{subfigure}[b]{0.99\linewidth}
\centering
\includegraphics[width=0.995\linewidth]{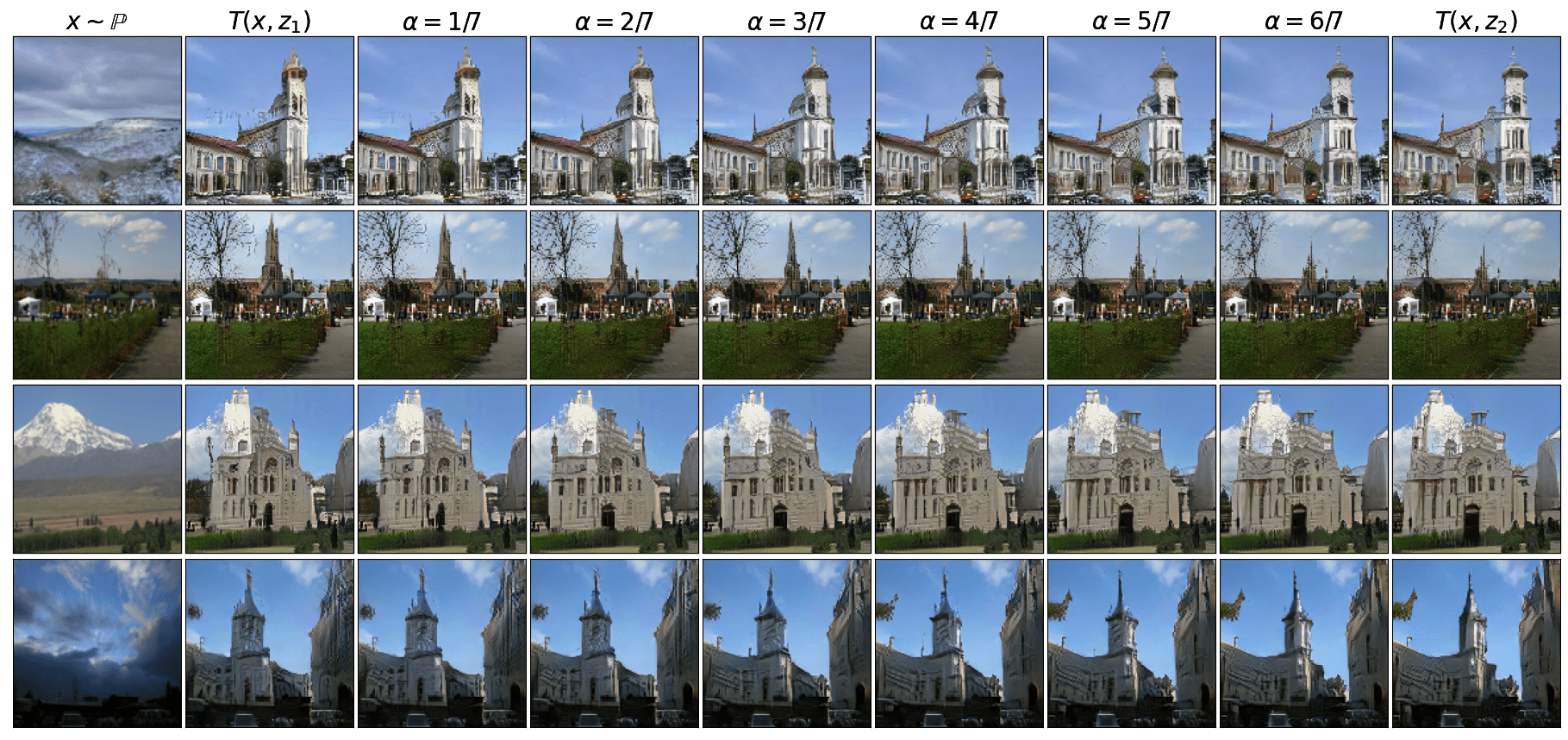}
\caption{\centering Interpolation in the conditional latent space, $z=(1-\alpha)z_{1}+\alpha z_{2}$.}
\end{subfigure}
\caption{\centering Outdoor $\rightarrow$ church, $128\times 128$. Additional examples.}
\end{figure*}

\begin{figure*}[!h]
\begin{subfigure}[b]{0.99\linewidth}
\centering
\includegraphics[width=0.995\linewidth]{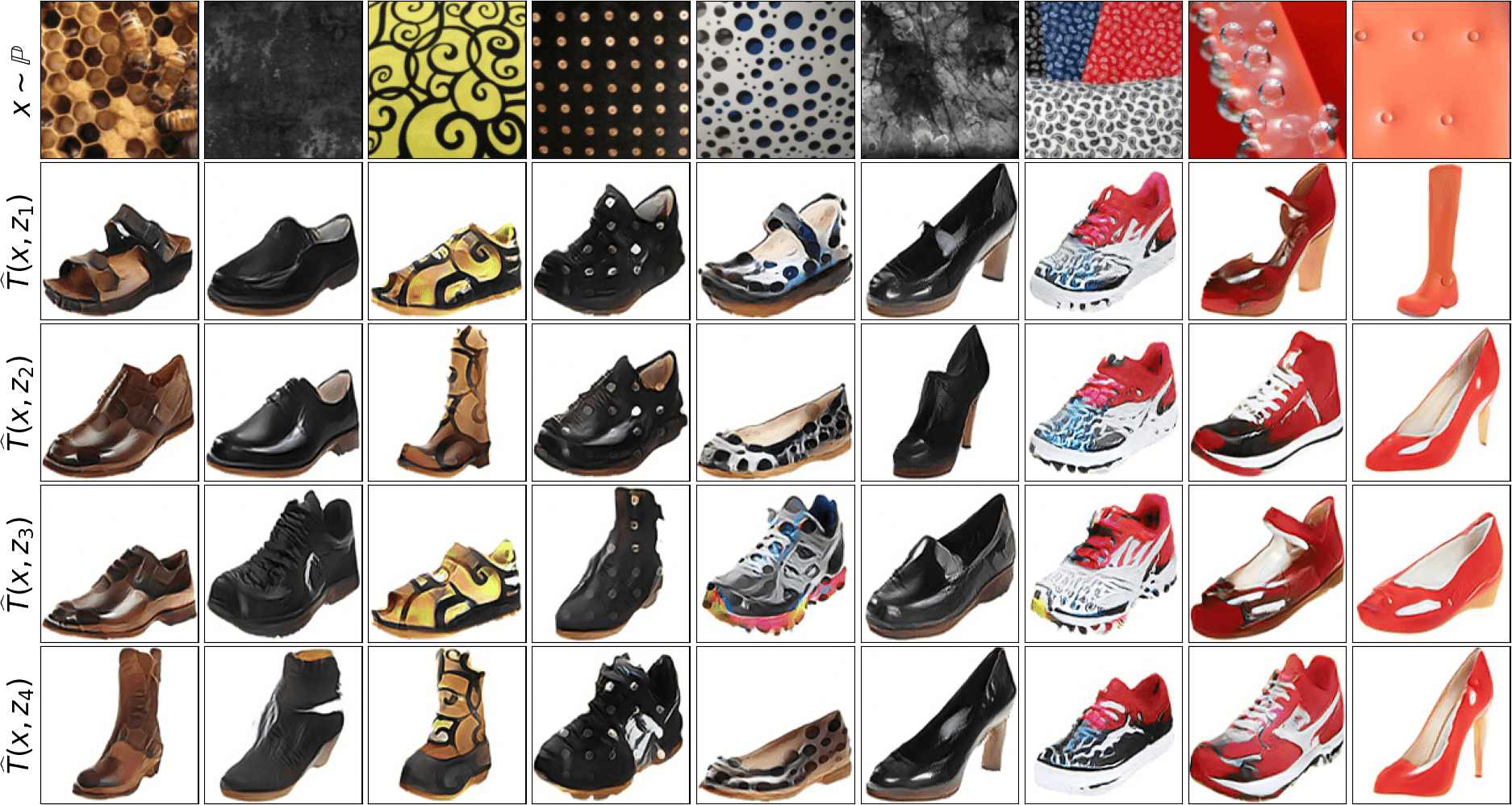}
\caption{\centering Input images $x$ and random translated examples $T_{x}(z)$.}
\end{subfigure}\vspace{2mm}

\begin{subfigure}[b]{0.99\linewidth}
\centering
\includegraphics[width=0.995\linewidth]{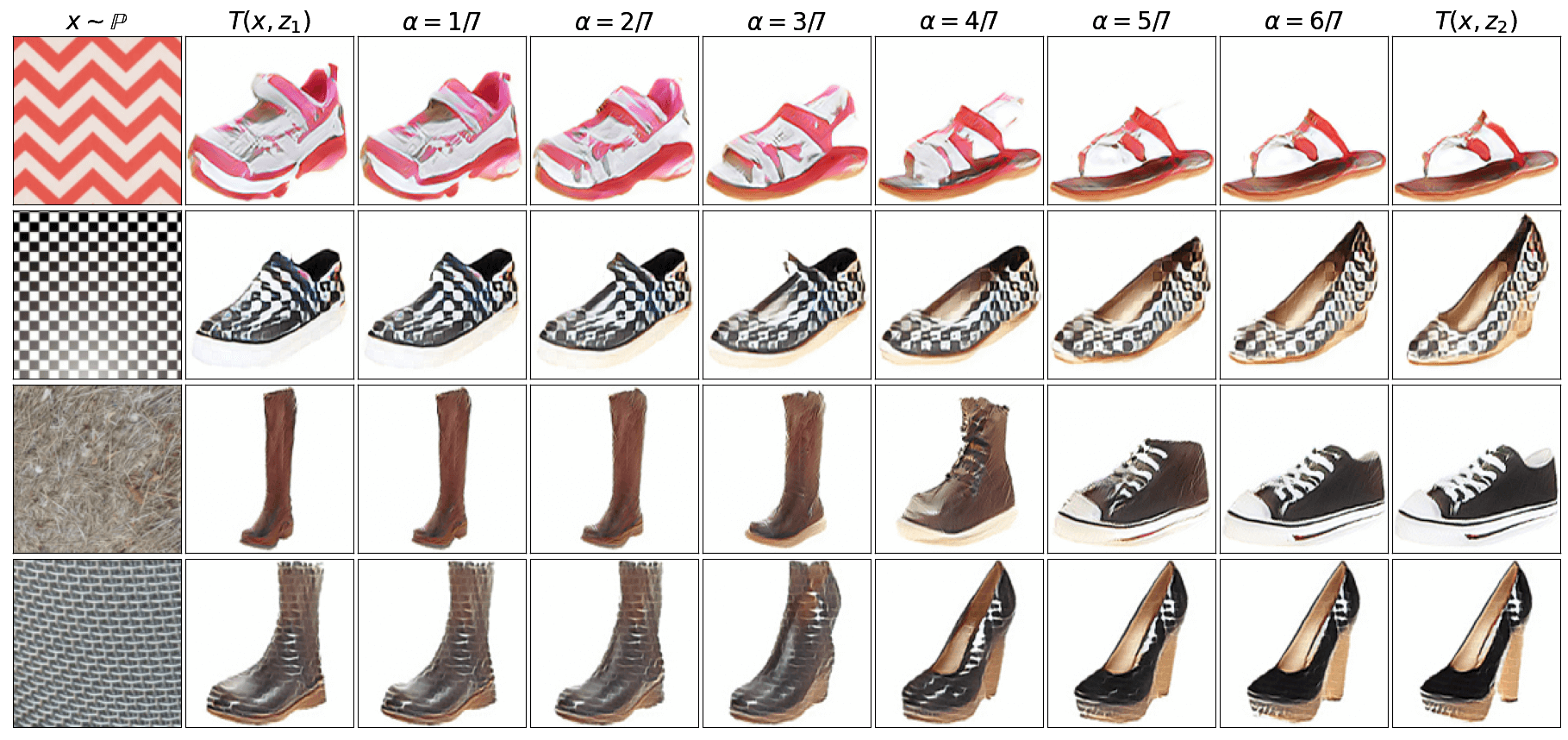}
\caption{\centering Interpolation in the conditional latent space, $z=(1-\alpha)z_{1}+\alpha z_{2}$.}
\end{subfigure}
\caption{\centering Texture $\rightarrow$ shoes translation, $128\times 128$. Additional examples.}
\end{figure*}

\begin{figure*}[!h]
\begin{subfigure}[b]{0.99\linewidth}
\centering
\includegraphics[width=0.995\linewidth]{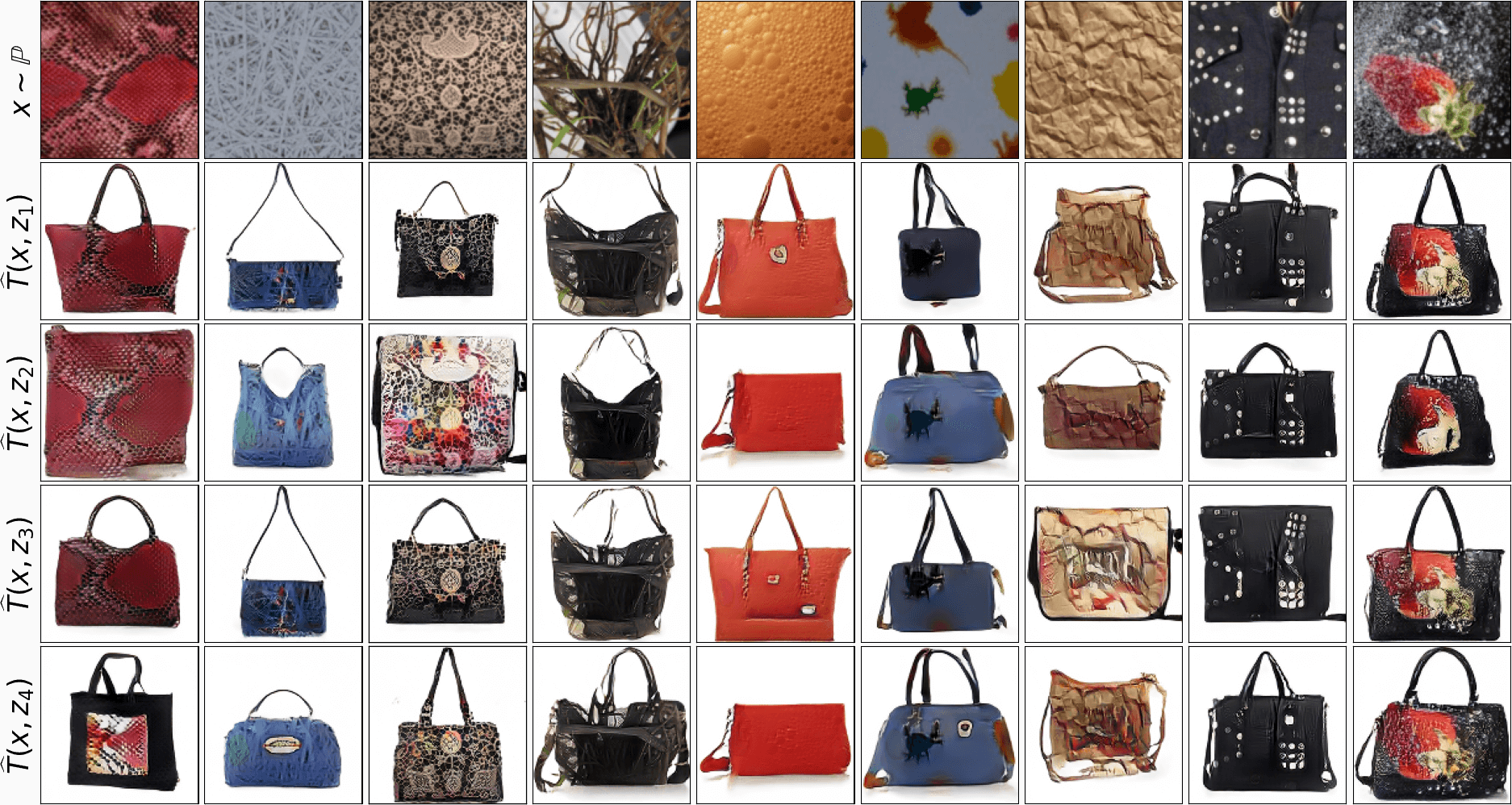}
\caption{\centering Input images $x$ and random translated examples $T_{x}(z)$.}
\end{subfigure}\vspace{2mm}

\begin{subfigure}[b]{0.99\linewidth}
\centering
\includegraphics[width=0.995\linewidth]{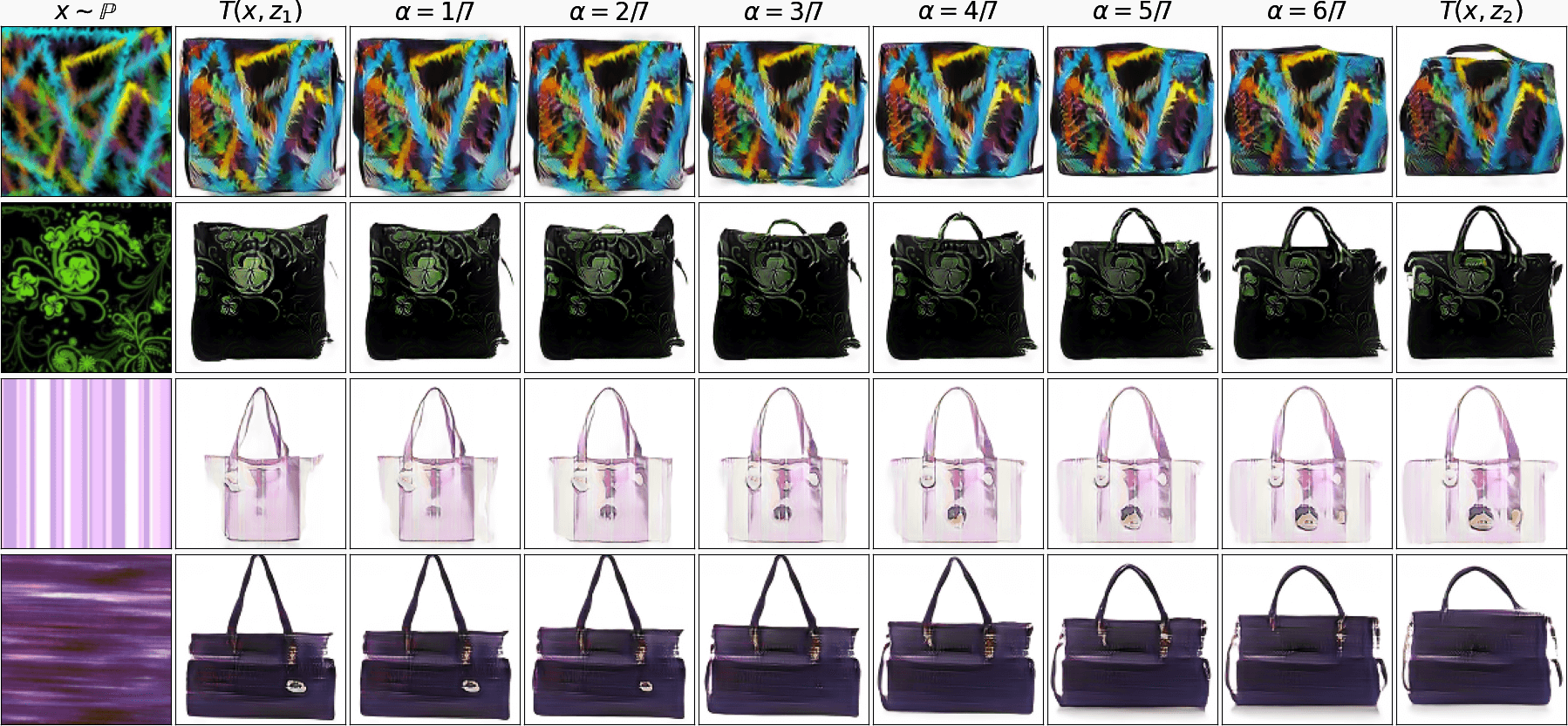}
\caption{\centering Interpolation in the conditional latent space, $z=(1-\alpha)z_{1}+\alpha z_{2}$.}
\end{subfigure}
\caption{\centering Texture $\rightarrow$ handbags translation, $128\times 128$. Additional examples.}
\end{figure*}

\begin{figure*}[!h]
\begin{subfigure}[b]{0.99\linewidth}
\centering
\includegraphics[width=0.995\linewidth]{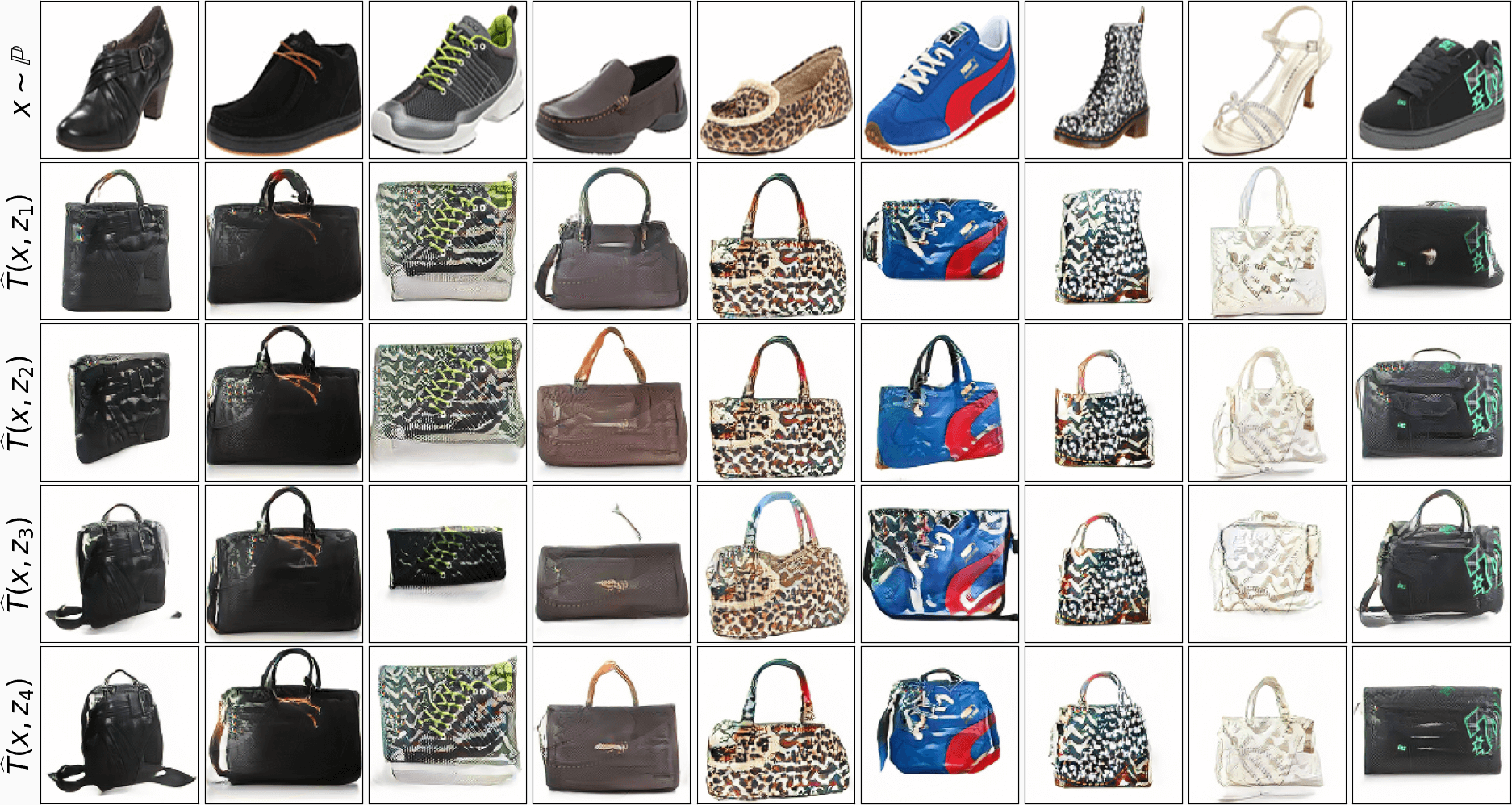}
\caption{\centering Input images $x$ and random translated examples $T_{x}(z)$.}
\end{subfigure}\vspace{2mm}

\begin{subfigure}[b]{0.99\linewidth}
\centering
\includegraphics[width=0.995\linewidth]{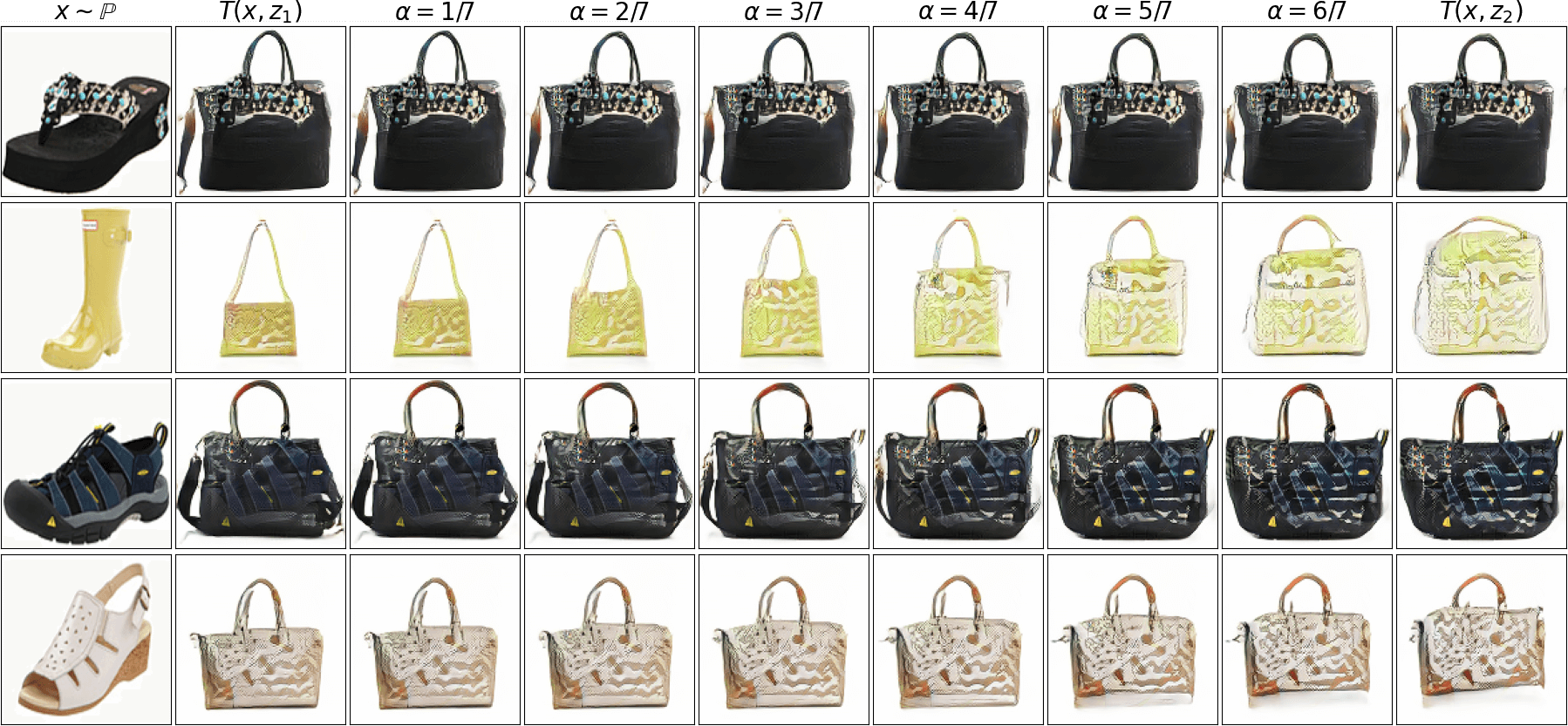}
\caption{\centering Interpolation in the conditional latent space, $z=(1-\alpha)z_{1}+\alpha z_{2}$.}
\end{subfigure}
\caption{\centering Shoes $\rightarrow$ handbags translation, $128\times 128$. Additional examples.}
\end{figure*}

\begin{figure*}[!h]
\begin{subfigure}[b]{0.99\linewidth}
\centering
\includegraphics[width=0.995\linewidth]{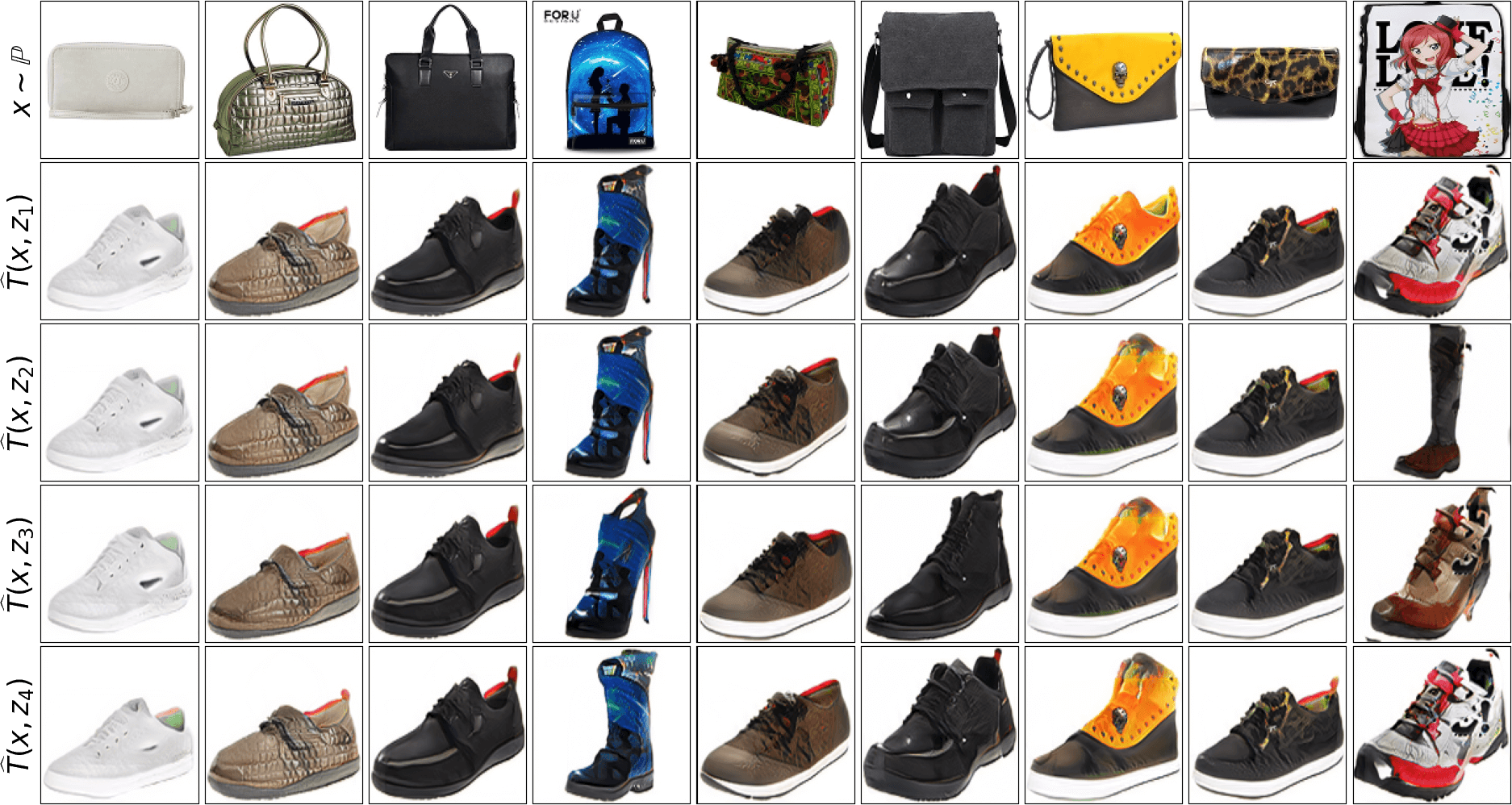}
\caption{\centering Input images $x$ and random translated examples $T_{x}(z)$.}
\end{subfigure}\vspace{2mm}

\begin{subfigure}[b]{0.99\linewidth}
\centering
\includegraphics[width=0.995\linewidth]{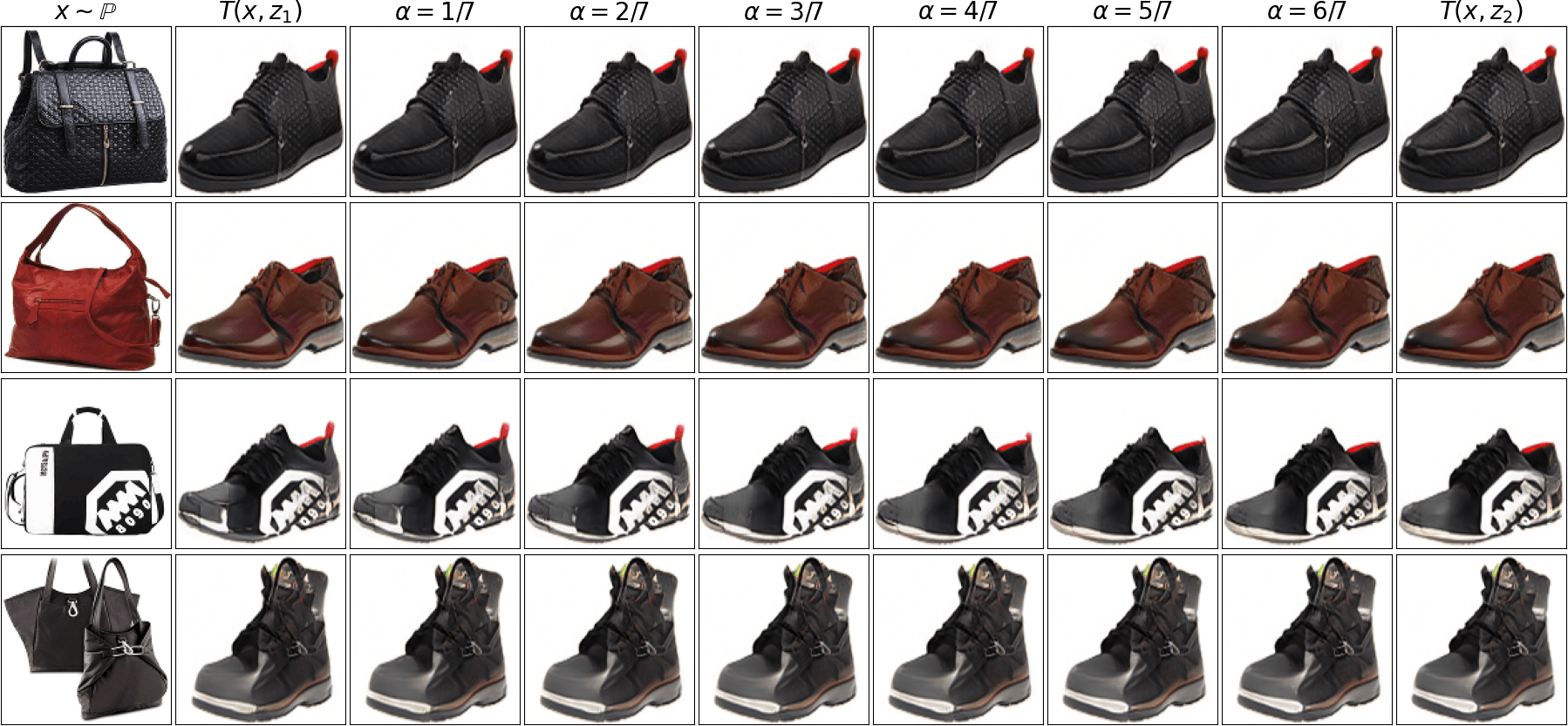}
\caption{\centering Interpolation in the conditional latent space, $z=(1-\alpha)z_{1}+\alpha z_{2}$.}
\end{subfigure}
\caption{\centering Handbags $\rightarrow$ shoes translation, $128\times 128$. Additional examples.}
\end{figure*}

\end{document}